\newtheorem{theorem}{Theorem}
\newtheorem{proposition}{Proposition}
\newtheorem{lemma}{Lemma}
\newtheorem{remark}{Remark}
\newcommand\tikzmark[1]{%
  \tikz[remember picture,overlay]\node[inner xsep=0pt] (#1) {};}
\newcommandtwoopt\Textbox[5][11.8cm][2cm]{%
\begin{tikzpicture}[remember picture,overlay]
  \coordinate (aux) at ([xshift=#1]#4);
  \node[inner ysep=3pt,yshift=0.5ex,draw=pink,thick,
    fit=(#3) (aux),baseline] 
    (box) {};
  \node[text width=#2,anchor=north east,
    font=\sffamily\footnotesize,
  align=right
    ] 
    at (box.north east) {#5};
\end{tikzpicture}%
}
\newcommand{\policy}{\pi_{\bm{\theta}}}
\newcommand{\policyy}{\pi_{{\bm\theta}^{'}}}
\newcommand{\E}{\mathbb{E}}
\newcommand{\R}{\mathbb{R}}
\newcommand{\Pro}{\mathbb{P}}
\newcommand{\N}{\mathbb{N}}
\newcommand{\calA}{\mathcal{A}}
\newcommand{\calC}{\mathcal{C}}
\newcommand{\calD}{\mathcal{D}}
\newcommand{\calL}{\mathcal{L}}
\newcommand{\calS}{\mathcal{S}}
\newcommand{\calM}{\mathcal{M}}
\newcommand{\calX}{\mathcal{X}}
\newcommand{\bA}{\mathbf{A}}
\newcommand{\bB}{\mathbf{B}}
\newcommand{\bF}{\mathbf{F}}
\newcommand{\bG}{\mathbf{G}}
\newcommand{\bD}{\mathbf{D}}
\newcommand{\bH}{\mathbf{H}}
\newcommand{\bI}{\mathbf{I}}
\newcommand{\bL}{\mathbf{L}}
\newcommand{\ba}{\mathbf{a}}
\newcommand{\bd}{\mathbf{d}}
\newcommand{\bbf}{\mathbf{f}}
\newcommand{\bg}{\mathbf{g}}
\newcommand{\bp}{\mathbf{p}}
\newcommand{\bv}{\mathbf{v}}
\newcommand{\bx}{\mathbf{x}}
\definecolor{citrine}{rgb}{0.89, 0.82, 0.04}
\definecolor{blued}{RGB}{70,197,221}
\newcommand{\mathbr}[1]{\bm{\mathbf{#1}}}
\newcommand{\vtheta}{\mathbr{\theta}}
\newcommand{\vrho}{\mathbr{\rho}}
\newcommand{\hyscoreprime}[1][\vtheta]{\nabla_{\vrho'}\log\nu_{\vrho'}}
\newcommand{\bP}{\mathbf{P}}
\title{Constrained Update Projection Approach to Safe Policy Optimization}
\author{%
  Long Yang$^{1,2,*}$, Jiaming Ji$^{1,}$\thanks{L.Yang and J.Ji share equal contributions. $\dag$ G.Pan is the corresponding author.} , Juntao Dai$^1$, Linrui Zhang$^3$, Binbin Zhou,$^4$, Pengfei Li,$^1$, Yaodong Yang$^{2,5}$, Gang Pan$^{1,\dag}$\\
  $^1$College of Computer Science and Technology, Zhejiang University, China \\
  $^2$ School of Artificial Intelligence, Peking University, China\\
  $^3$ Tsinghua Shenzhen International Graduate School, Tsinghua University, China\\
  $^4$ Department of Computer Science and Computing, Zhejiang University City College, China\\
  $^5$ Institute for Artificial Intelligence, Peking University \& BIGAI, China\\
  \texttt{yanglong001@pku.edu.cn,~gpan@zju.edu.cn}\\
%   \And
%   Zijian Guo \\
%   Carnegie Mellon University\\
%   \texttt{zijiang@andrew.cmu.edu} \\
%   \And
%   Zhepeng Cen \\
%   Carnegie Mellon University\\
%   \texttt{zcen@andrew.cmu.edu} \\
%   \And
%   Huan Zhang \\
%   Carnegie Mellon University\\
%   \texttt{huanzhan@andrew.cmu.edu} \\
%   \And
%   Jie Tan \\
%   Google Brain\\
%   \texttt{jietan@google.edu} \\
%   \And
%   Bo Li \\
%   UIUC\\
%   \texttt{lbo@illinois.edu} \\
%   \And
%   Ding Zhao \\
%   Carnegie Mellon University\\
%   \texttt{dingzhao@cmu.edu} \\
}
\begin{document}

\maketitle

\begin{abstract}
Safe reinforcement learning (RL) studies problems where an intelligent agent has to not only maximize reward but also avoid exploring unsafe areas. In this study, we propose CUP, a novel policy optimization method based on \textbf{C}onstrained \textbf{U}pdate \textbf{P}rojection framework that enjoys rigorous safety guarantee. 
Central to our CUP development is the newly proposed surrogate functions along with the performance bound. Compared to previous safe RL methods, 
CUP enjoys the benefits of 
1) CUP generalizes the surrogate functions to generalized advantage estimator (GAE), leading to strong empirical performance. 
2) CUP unifies performance bounds, providing a better understanding and interpretability for some existing algorithms; 
3) CUP provides a non-convex implementation via only first-order optimizers, which does not require any strong approximation on the convexity of the objectives.
To validate our CUP method, we compared CUP against a comprehensive list of safe RL baselines on a wide range of tasks. Experiments show the effectiveness of CUP both in terms of reward and safety constraint satisfaction.
We have opened the source code of CUP at this link \url{https://github.com/zmsn-2077/CUP-safe-rl}.
\end{abstract}

\section{Introduction}
\label{sec:intro}

Reinforcement learning (RL) \citep{sutton1998reinforcement} has achieved significant successes in many fields (e.g., \citep{mnih2015human,silver2017mastering,openaifive2019,afsar2021reinforcement,yang2022policy}).
%robotics \citep{deisenroth2013survey}, playing Go \citep{silver2016mastering,silver2017mastering}, 
%Starcraft \citep{vinyals2019alphastar}, Dota \citep{openaifive2019},
%and recommendation system \citep{afsar2021reinforcement}.
However, most RL algorithms improve the performance under the assumption that an agent is free to explore any behaviors.
In real-world applications, only considering return maximization is not enough, and we also need to consider safe behaviors.
For example, a robot agent should avoid playing actions that irrevocably harm its hardware, and a recommender system should avoid presenting offending items to users.
Thus, it is crucial to consider \emph{safe exploration} for RL, which is usually formulated as constrained Markov decision processes (CMDP) \citep{altman1999constrained}.

It is challenging to solve CMDP since traditional approaches (e.g., Q-learning \citep{watkins1989learning} \& policy gradient \citep{williams1992simple}) usually violate the safe exploration constraints, which is undesirable for safe RL.
Recently, \cite{AchiamHTA17,yang2020projection,bharadhwaj2021conservative} suggest to use some surrogate functions to replace the objective and constraints.
However, their implementations involve some convex approximations to the non-convex objective and safe constraints,
% (i.e., first-order approximation for the surrogate functions or second-order approximation for the safe constraints), 
which leads to many error sources and troubles.
Concretely, \cite{AchiamHTA17,yang2020projection,bharadhwaj2021conservative} approximate the non-convex objective (or constraints) with first-order or second Taylor expansion,
but their implementations still lack a theory to show the error difference between the original objective (or constraints) and its convex approximations.
Besides, their approaches involve the inverse of a high-dimension inverse Fisher information matrix, which causes their algorithms require a costly computation for each update when solving high-dimensional RL problems.

To address the above problems, we propose the \emph{constrained update projection} (CUP) algorithm with a theoretical safety guarantee.
We derive the CUP bases on the newly proposed surrogate functions with respect to objectives and safety constraints, and provide a practical implementation of CUP that does not depend on any convex approximation to adapt high-dimensional safe RL.

Concretely, in Section \ref{sec:generalized-bound}, Theorem \ref{them:general-performance-difference} shows generalized difference bounds between two arbitrary policies for the objective and constraints.
Those bounds provide principled approximations to the objective and constraints, which are theoretical foundations for us to use those bounds as surrogate functions to replace 
objective and constraints to design algorithms.
Although using difference bounds as surrogate functions to replace the objective has appeared in previous works (e.g., \citep{schulman2015trust,AchiamHTA17}), Theorem \ref{them:general-performance-difference} refines those bounds (or surrogate functions) at least two aspects:
\textbf{(i)} Firstly, our rigorous theoretical analysis shows a bound with respect to generalized advantage estimator (GAE) \citep{schulman2016high}.
GAE significantly reduces variance empirically while maintaining a tolerable level of bias, the proposed bound involves GAE is one of the critical steps for us to design efficient algorithms.
\textbf{(ii)} Our new bounds unify the classic result of CPO \cite{AchiamHTA17}, i.e., the classic performance bound of CPO is a special case of our bounds.
Although existing work (e.g., \cite{zhang2020first,kang2021learning}) has applied the key idea of CPO with GAE to solve safe RL problems, their approaches are all empirical and lack a theoretical analysis. Thus, the proposed newly bound partially explains the effectiveness of the above safe RL algorithms.
Finally, we should emphasize that although GAE has been empirically applied to extensive reinforcement learning tasks, this work is the first to show a rigorous theoretical analysis to extend the surrogate functions with respect to GAE.

In Section \ref{sec:algorithm}, we provide the necessary details of the proposed CUP.
The CUP contains two steps: it first performs a policy improvement, which may produce a temporary policy violates the constraint.
Then in the second step, CUP projects the policy back onto the safe region to reconcile the constraint violation.
Theorem \ref{them-re-cost} shows the worst-case performance degradation guarantee and approximate satisfaction of safety constraints of CUP, result shows that with a relatively small  parameter that controls the penalty of the distance between the old policy and current policy, CUP shares a desirable toleration for both policy improvements and safety constraints.
%Notably, the result in Theorem \ref{them-re-cost} shows the bound of CUP is more compact than state-of-the-art safe RL algorithms: CPO \citep[Proposition 1-2]{AchiamHTA17}, PCPO \citep[Theorem 1]{yang2020projection} and FOCOPS \citep{zhang2020first}, which provides a partial explanation for why CUP is so good in practice.
Furthermore, we provide a practical implementation of sample-based CUP. This implementation allows us to use deep neural networks to train a model,
which is an efficient iteration without strongly convex approximation of the objective or constraints (e.g., \citep{AchiamHTA17,yang2020projection}), and it optimizes the policy according to the first-order optimizer.
Finally, extensive high-dimensional experiments on continuous control tasks show the effectiveness of CUP where the agent satisfies safe constraints.

\section{Preliminaries}
\label{Background and Notations}

%\subsection{Reinforcement Learning and Policy Optimization}
Reinforcement learning (RL) \citep{sutton1998reinforcement} is often formulated as 
a \emph{Markov decision process} (MDP) \citep{puterman2014markov} that is a tuple $\mathcal{M}=(\mathcal{S},\mathcal{A},\mathbb{P},{r},\rho_0,\gamma)$.
Here $\mathcal{S}$ is state space, $\mathcal{A}$ is action space.
$\mathbb{P}(s^{'}|s,a)$ is probability of state transition from $s$ to $s^{'}$ after playing $a$.
$r(\cdot):\mathcal{S}\times\mathcal{S}\times\mathcal{A}\rightarrow \R$,
and $r(s'|s,a)$ denotes the reward that the agent observes when state transition from $s$ to $s^{'}$ after it plays $a$.
$\rho_{0}(\cdot):\mathcal{S}\rightarrow[0,1]$ is the initial state distribution and $\gamma\in(0,1)$.

A stationary parameterized policy $\policy$ is a probability distribution defined on $\mathcal{S}\times\mathcal{A}$, $\policy(a|s)$ denotes the probability of playing $a$ in state $s$.
We use $\Pi_{\bm{\theta}}$ to denote the set of all stationary policies, where $\Pi_{{{\bm{\theta}}}}=\{\pi_{{{\bm{\theta}}}}:{{\bm{\theta}}}\in\R^{p}\}$, and ${{\bm{\theta}}}$ is a parameter needed to be learned.
Let $\mathbf{P}_{\pi_{\bm \theta}}\in\R^{|\calS|\times|\calS|}$ be a state transition probability matrix, and their components are:
$
\mathbf{P}_{\pi_{\bm \theta}}[s,s'] =\sum_{a\in\mathcal{A}}\pi_{\bm{\theta}}(a|s)\mathbb{P}(s'|s,a)=:\Pro_{\policy}(s^{'}|s),
$
which denotes one-step state transformation probability from $s$ to $s^{'}$ by executing $\policy$.
Let $\tau=\{s_{t}, a_{t}, r_{t+1}\}_{t\ge0}\sim \policy$ be a trajectory generated by $\policy$, 
where $s_{0}\sim\rho_{0}(\cdot)$, $a_{t}\sim\policy(\cdot|s_t)$, $s_{t+1}\sim \mathbb{P}(\cdot|s_{t},a_{t})$, and $r_{t+1}=r(s_{t+1}|s_t,a_t)$.
We use $\mathbb{P}_{\policy}(s_t=s^{'}|s)$ to denote the probability of visiting the state $s^{'}$ after $t$
time steps from the state $s$ by executing $\policy$.
Due to the Markov property in MDP, $\mathbb{P}_{\policy}(s_t=s^{'}|s)$ is $(s,s^{'})$-th component of the matrix $\mathbf{P}^{t}_{\pi_{\bm \theta}}$, i.e.,
$
\mathbb{P}_{\policy}(s_t=s^{'}|s)=\mathbf{P}^{t}_{\pi_{\bm \theta}}[s,s^{'}].
$
Finally, let $d_{\policy}^{s_0}(s)=(1-\gamma)\sum_{t=0}^{\infty}\gamma^{t}\mathbb{P}_{\policy}(s_t=s|s_0)$ be the stationary state distribution of the Markov chain (starting at $s_0$) induced by policy $\policy$.
We define
$
d_{\policy}^{\rho_0}(s)=\mathbb{E}_{s_0\sim\rho_{0}(\cdot)}[d_{\policy}^{s_0}(s)]
$
as the discounted state visitation distribution on initial distribution $\rho_0 (\cdot)$.

The \emph{state value function} of $\policy$ is defined as $V_{\policy}(s) = \mathbb{E}_{\policy}[\sum_{t=0}^{\infty}\gamma^{t}r_{t+1}|s_{0} = s],$
where $\mathbb{E}_{\policy}[\cdot|\cdot]$ denotes a conditional expectation on actions which are selected by $\policy$.
Its \emph{state-action value function} is $Q_{\policy}(s,a) = \mathbb{E}_{\policy}[\sum_{t=0}^{\infty}\gamma^{t}r_{t+1}|s_{0} = s,a_{0}=a]$, 
and advantage function is $A_{\policy}(s,a)=Q_{\policy}(s,a) -V_{\policy}(s)$.
The goal of reinforcement learning is to maximize
$
  \label{J-objectiove}
  J(\policy)=\E_{s\sim \rho_{0}(\cdot)}[V_{\policy}(s)].
$

\subsection{Policy Gradient and Generalized Advantage Estimator (GAE)}

Policy gradient \citep{williams1992simple,sutton2000policy} is widely used to solve policy optimization, which maximizes the expected total reward by repeatedly estimating the gradient $g=\nabla J(\policy)$.
\cite{schulman2016high} summarize several different related expressions for the policy gradient:
\begin{flalign}
g=\nabla J(\policy)=\E\left[
\sum_{t=0}^{\infty}\Psi_{t}\nabla\log\policy(a_t|s_t)
\right],
\end{flalign}
where $\Psi_{t}$ can be total discounted reward of the trajectory, value function, advantage function or temporal difference (TD) error.
As stated by \cite{schulman2016high}, the choice $\Psi_{t}=A(s_t,a_t)$ yields almost the lowest possible variance, which is consistent with the theoretical analysis \citep{greensmith2004variance,wu2018variance}.
Furthermore, \cite{schulman2016high} propose generalized advantage estimator ($\mathtt{GAE}$) $\hat{A}^{\mathtt{GAE}(\gamma,\lambda)}_{t}(s_t,a_t)$ to replace $\Psi_{t}$: for any $\lambda\in[0,1]$,
\begin{flalign}
\hat{A}^{\mathtt{GAE}(\gamma,\lambda)}_{t}(s_t,a_t)=\sum_{\ell=0}^{\infty}(\gamma\lambda)^{\ell}\delta^{V}_{t+\ell},
\end{flalign}
where $\delta^{V}_{t}=r_{t+1}+\gamma V(s_{t+1})-V(s_{t})$ is TD error, and $V(\cdot)$ is an estimator of value function.
GAE is an efficient technique for data efficiency and reliable performance of reinforcement learning.

\subsection{Safe Reinforcement Learning}

Safe RL is often formulated as 
a constrained MDP (CMDP) $\calM\cup\calC$ \citep{altman1999constrained},
which is a standard MDP $\calM$ augmented with an additional constraint set $\calC$.
The set $\calC=\{(c_i,b_i)\}_{i=1}^{m}$,
where $c_i$ are cost functions: $c_i : \calS\times\calA \rightarrow \R$, and limits are $b_i$, $i = 1,\cdot,m$. 
The \emph{cost-return} is defined as:
$J^{c_i}(\policy)=\E_{\policy}\left[\sum_{t=0}^{\infty}\gamma^{t}c_{i}(s_{t},a_{t})\right]$, 
then we define the feasible policy set $\Pi_{\calC}$ as:
$
\Pi_{\calC}=
\cap_{i=1}^{m}
\left\{
\policy\in\Pi_{\bm{\theta}}~~\text{and}~~J^{c_i}(\policy)_\leq b_i
\right\}.
$
The goal of CMDP is to search the optimal policy $\pi_{\star}$:
\begin{flalign}
\label{def:problem-setting}
\pi_{\star}=\arg\max_{\policy\in\Pi_{\calC}} J(\policy).
\end{flalign}
Furthermore, we define value functions, action-value functions, and advantage functions for the auxiliary costs in analogy to $V_{\policy}, Q_{\policy}$, and $A_{\policy}$, 
with $c_i$ replacing $r$ respectively, we denote them as $V^{c_i}_{\policy}, Q^{c_i}_{\policy}$, and $A^{c_i}_{\policy}$.
For example, $V^{c_i}_{\policy}(s) = \mathbb{E}_{\policy}\left[\sum_{t=0}^{\infty}\gamma^{t}c_i(s_{t},a_{t})|s_{0} = s\right]$.
Without loss of generality, we will restrict our discussion to the case of one constraint with a cost function $c$ and upper bound $b$.
Finally, we extend the GAE with respect to auxiliary cost function $c$:
\begin{flalign}
\label{def:gae-cost}
\hat{A}^{\mathtt{GAE}(\gamma,\lambda)}_{C,t}(s_t,a_t)=\sum_{\ell=0}^{\infty}(\gamma\lambda)^{\ell}\delta^{C}_{t+\ell},
\end{flalign}
where $\delta^{C}_{t}=r_{t+1}+\gamma C(s_{t+1})-C(s_{t})$ is TD error, and $C(\cdot)$ is an estimator of cost function $c$.

\section{Generalized Policy Performance Difference Bounds}

\label{sec:generalized-bound}

In this section, we show some generalized policy optimization performance bounds for $J(\policy)$ and $J^{c}(\policy)$. 
The proposed bounds provide some new surrogate functions with respect to the objective and cost function, which are theoretical foundations for us to design efficient algorithms to improve policy performance and satisfy constraints.
%Additionally, those bounds refine or extend some existing works (e.g., \citep{kakade2002approximately,schulman2015trust,AchiamHTA17}) to GAE case. Since GAE significantly
%reduces variance empirically while maintains a tolerable level of bias, which is one of the key steps for us to propose efficient algorithms in the later section.
Before we present the new performance difference bounds, let us revisit a classic performance difference from \citet{kakade2002approximately},
\begin{flalign}
\label{performance-difference-2002}
J(\pi_{{\bm{\theta}}})-J(\policyy)
=(1-\gamma)^{-1}\E_{s\sim d_{\policy}^{\rho_0}(\cdot),a\sim\policy (\cdot|s)}\left[A_{\policyy}(s,a)\right].
\end{flalign}
Eq.(\ref{performance-difference-2002}) shows a difference between two arbitrary policies $\pi_{\bm\theta}$ and $\pi_{{\bm\theta}^{'}}$ with different parameters $\bm{\theta}$ and $\bm{\theta}^{'}$.
According to (\ref{performance-difference-2002}), we rewrite the policy optimization (\ref{def:problem-setting}) as follows
\begin{flalign}
\label{def:rewriting-problem-setting}
\pi_{\star}=\arg\max_{\policy\in\Pi_{\calC}} \E_{s\sim d_{\policy}^{\rho_0}(\cdot),a\sim\policy (\cdot|s)}\left[A_{\policyy}(s,a)\right].
\end{flalign}
However, Eq.(\ref{performance-difference-2002}) or (\ref{def:rewriting-problem-setting}) is very intractable for sampling-based policy optimization since it requires the data comes from the (unknown) policy $\policy$ that needed to be learned.
In this section, we provide a bound refines the result (\ref{performance-difference-2002}), which provide the sights for surrogate functions to solve problem (\ref{def:problem-setting}).

\subsection{Some Additional Notations}

We use a bold lowercase letter to denote a vector, e.g., $\ba=(a_1,a_2,\cdots,a_n)$, and its $i$-th element $\ba[i]=:a_{i}$.
Let $\varphi(\cdot):\calS\rightarrow\R$ be a function defined on $\calS$, $\delta_t^{\varphi}=r(s_{t+1}|s_t,a_t)+\gamma\varphi(s_{t+1})-\varphi(s_{t})$ is TD error with respect to $\varphi(\cdot)$. 
For two arbitrary policies $\pi_{\bm\theta}$ and $\pi_{{\bm\theta}^{'}}$, we denote $\delta^{\varphi}_{\policy,t}(s)$ as the expectation of TD error, and define $ \Delta_{t}^{\varphi}(\policy,\policyy,s)$ as the difference between $\delta^{\varphi}_{\policy,t}(s)$ and $\delta^{\varphi}_{\policyy,t}(s)$:
\begin{flalign}
\nonumber
\delta^{\varphi}_{\policy,t}(s)=\underset{\begin{subarray}{c} s_t \sim \Pro_{\policy}(\cdot|s)\\ a_{t}\sim{\policy}(\cdot|s_t)\\ s_{t+1}\sim\Pro(\cdot|s_t,a_t) \end{subarray}}\E\left[\delta_t^{\varphi}\right],
\Delta_{t}^{\varphi}(\policy,\policyy,s)=
\underset{\begin{subarray}{c} s_t \sim \Pro_{\policyy}(\cdot|s)\\ a_{t}\sim{\policyy}(\cdot|s_t)\\ s_{t+1}\sim\Pro(\cdot|s_t,a_t) \end{subarray}}
\E\left[\left(\dfrac{\policy(a_t|s_t)}{\policyy(a_t|s_t)}-1\right)\delta_t^{\varphi}\right].
\end{flalign}
Furthermore, we introduce two vectors $\bm{\delta}^{\varphi}_{\policy,t},\bm{\Delta}_{t}^{\varphi}(\policy,\policyy)\in\R^{|\calS|}$, and their components are:
\begin{flalign}
\bm{\delta}^{\varphi}_{\policy,t}[s]=\delta^{\varphi}_{\policy,t}(s),~~~\bm{\Delta}_{t}^{\varphi}(\policy,\policyy)[s]=\Delta_{t}^{\varphi}(\policy,\policyy,s).
\end{flalign}
Let matrix $\mathbf{P}^{(\lambda)}_{\pi_{\bm \theta}}=(1-\gamma\lambda)\sum_{{t}=0}^{\infty}(\gamma\lambda)^{{t}}\bP^{{t}+1}_{\policy}$, 
where $\lambda\in[0,1]$.
%and its $(s,s^{'})$-th component denoted as$\Pro_{\policy}^{(\lambda)}(s^{'}|s)=:\mathbf{P}^{(\lambda)}_{\pi_{\bm \theta}}[s,s^{'}].$
It is similar to the normalized discounted distribution $d_{\pi_{\bm {\theta}}}^{\rho_0}(s)$, we extend it to $\lambda$-version and denote it as ${d}_{\pi_{\bm {\theta}}}^{\lambda}(s)$:
\[
{d}_{\pi_{\bm {\theta}}}^{\lambda}(s)=\E_{s_0\sim\rho_{0}(\cdot)}
\left[
(1-\tilde\gamma)\sum_{t=0}^{\infty}{\tilde\gamma}^{t}{\mathbb{P}}^{(\lambda)}_{\pi_{\bm {\theta}}}(s_t=s|s_0)
\right],
\]
where $\tilde{\gamma}=\frac{\gamma(1-\lambda)}{1-\gamma\lambda}$, the probability $\mathbb{P}^{(\lambda)}_{\pi_{\bm {\theta}}}(s_t=s|s_0)$ is the $(s_0,s)$-th component of the matrix product $\left(\mathbf{P}^{(\lambda)}_{\pi_{\bm \theta}}\right)^{t}$.
Finally, we introduce a vector $\bd_{\pi_{\bm {\theta}}}^{\lambda}\in\R^{|\calS|}$, and its components are: $\bd_{\pi_{\bm {\theta}}}^{\lambda}[s]=d_{\pi_{\bm {\theta}}}^{\lambda}(s).$

\subsection{Main Results}

\begin{theorem}
[Generalized Policy Performance Difference]
\label{them:general-performance-difference}
For any function $\varphi(\cdot):\calS\rightarrow\R$, for two arbitrary policies $\pi_{\bm\theta}$ and $\pi_{{\bm\theta}^{'}}$,  
for any $p,q\in[1,\infty)$ such that $\frac{1}{p}+\frac{1}{q}=1$, we define two error terms:
\begin{flalign}
\label{error-01}
&\epsilon^{\varphi,(\lambda)}_{p,q,t}(\policy,\policyy)=:\|\bd_{\pi_{\bm {\theta}}}^{\lambda}-\bd_{\policyy}^{\lambda}\|_{p}\|{\bm{\delta}}^{\varphi}_{\policy,t}\|_{q},\\
\label{error-02}
L^{\varphi, \pm}_{p,q}(\policy,\policyy)&=:
\dfrac{1}{1-\tilde\gamma}
\sum_{t=0}^{\infty}\gamma^t\lambda^{t}\E_{s\sim{d}_{\policyy}^{\lambda}(\cdot)} \left[
\Delta_{t}^{\varphi}(\policy,\policyy,s) \pm\epsilon^{\varphi,(\lambda)}_{p,q,t}(\policy,\policyy)\right].
\end{flalign}
Then, the following bound with respect to policy performance difference $J(\pi_{\bm \theta})-J(\pi_{{\bm \theta}^{'}})$ holds:
\begin{flalign} 
\label{bound-diff-01}
L^{\varphi,-}_{p,q,}(\policy,\policyy)
\leq J(\pi_{\bm \theta})-J(\pi_{{\bm \theta}^{'}})
\leq
L^{\varphi,+}_{p,q,}(\policy,\policyy)
.
\end{flalign}
\end{theorem}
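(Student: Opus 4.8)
The plan is to obtain the two-sided bound \eqref{bound-diff-01} from an \emph{exact} generalized performance-difference identity, and then to reduce the residual to a single state-distribution-mismatch term that Hölder's inequality controls. First I would record the telescoping representation of the return in terms of TD errors: for any baseline $\varphi$ and any policy $\pi$, the discounted TD errors telescope,
\[
\sum_{t=0}^{\infty}\gamma^{t}\delta_t^{\varphi}=\sum_{t=0}^{\infty}\gamma^{t}r_{t+1}-\varphi(s_0),
\qquad\text{hence}\qquad
J(\pi)=\E_{s_0\sim\rho_0(\cdot)}\!\left[\varphi(s_0)\right]+\E_{\pi}\!\left[\sum_{t=0}^{\infty}\gamma^{t}\delta_t^{\varphi}\right].
\]
The crucial observation is that the baseline $\E_{s_0\sim\rho_0}[\varphi(s_0)]$ does not depend on the policy, so it cancels in $J(\pi_{\bm\theta})-J(\pi_{{\bm\theta}'})$, leaving a difference expressed purely through expected TD errors of the two policies. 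This is what will eventually let $\varphi$ be \emph{arbitrary}.

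Next I would perform the $\lambda$-resummation that converts the plain $\gamma$-discounted TD sum into the $\lambda$-weighted objects $\bP^{(\lambda)}_{\policy}$, $\tilde\gamma$, and $\bd_{\policy}^{\lambda}$. Writing the one-step expected TD error as a vector and using $\sum_{t\ge0}\gamma^{t}\bP^{t}_{\policy}=(\bI-\gamma\bP_{\policy})^{-1}$ together with the algebraic identity
\[
\left(\bI-\tilde\gamma\,\bP^{(\lambda)}_{\policy}\right)^{-1}=\left(\bI-\gamma\bP_{\policy}\right)^{-1}\left(\bI-\gamma\lambda\bP_{\policy}\right),
\]
(which follows by resumming the geometric series defining $\bP^{(\lambda)}_{\policy}$ and recalling $\tilde\gamma=\tfrac{\gamma(1-\lambda)}{1-\gamma\lambda}$), I expect the exact per-policy identity
\[
\E_{\pi}\!\left[\sum_{t=0}^{\infty}\gamma^{t}\delta_t^{\varphi}\right]
=\frac{1}{1-\tilde\gamma}\sum_{t=0}^{\infty}(\gamma\lambda)^{t}\,\E_{s\sim d^{\lambda}_{\pi}(\cdot)}\!\left[\delta^{\varphi}_{\pi,t}(s)\right].
\]
Subtracting this identity for $\pi_{{\bm\theta}'}$ from that for $\pi_{\bm\theta}$ yields an exact expression for $J(\pi_{\bm\theta})-J(\pi_{{\bm\theta}'})$ as a $(\gamma\lambda)$-weighted sum of the TD-error discrepancy of the two policies under their respective $\lambda$-visitation distributions.

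Finally I would introduce the importance ratio to make the leading term samplable under the old policy. Using $\E_{a\sim\policyy}\!\big[(\tfrac{\policy}{\policyy}-1)X\big]=\E_{a\sim\policy}[X]-\E_{a\sim\policyy}[X]$, the cross terms reorganize exactly into $\Delta_{t}^{\varphi}(\policy,\policyy,s)$ taken against $d^{\lambda}_{\policyy}$, while an add-and-subtract of $\E_{s\sim d^{\lambda}_{\policyy}}[\delta^{\varphi}_{\policy,t}(s)]$ isolates the distribution-shift inner product $\langle\bd_{\policy}^{\lambda}-\bd_{\policyy}^{\lambda},\,\bm{\delta}^{\varphi}_{\policy,t}\rangle$. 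Bounding this by Hölder's inequality for conjugate $p,q$,
\[
\left|\left\langle \bd_{\policy}^{\lambda}-\bd_{\policyy}^{\lambda},\ \bm{\delta}^{\varphi}_{\policy,t}\right\rangle\right|
\le\|\bd_{\policy}^{\lambda}-\bd_{\policyy}^{\lambda}\|_{p}\,\|\bm{\delta}^{\varphi}_{\policy,t}\|_{q}
=\epsilon^{\varphi,(\lambda)}_{p,q,t}(\policy,\policyy),
\]
and noting the residual can take either sign, gives the $\pm\epsilon^{\varphi,(\lambda)}_{p,q,t}$ term and hence $L^{\varphi,-}_{p,q}\le J(\pi_{\bm\theta})-J(\pi_{{\bm\theta}'})\le L^{\varphi,+}_{p,q}$.

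The main obstacle I anticipate is the bookkeeping in the last step: after the $\lambda$-resummation the two policies carry their \emph{own} transition operators inside $\delta^{\varphi}_{\policy,t}$ and $\delta^{\varphi}_{\policyy,t}$, so the decomposition must be arranged so that what remains outside the $\Delta_{t}^{\varphi}$-surrogate is exactly the visitation-difference $\bd_{\policy}^{\lambda}-\bd_{\policyy}^{\lambda}$ paired with $\bm{\delta}^{\varphi}_{\policy,t}$ (the new-policy TD-error vector appearing in $\epsilon^{\varphi,(\lambda)}_{p,q,t}$), rather than a mixed term. Getting the time-indexed $t$-step operators to line up so that Hölder produces precisely $\|\bm{\delta}^{\varphi}_{\policy,t}\|_q$ at each $t$, and verifying the interchange of the infinite $t$-sum with the expectations (which needs $\gamma\lambda<1$ and boundedness of $\varphi$ and the rewards for absolute convergence), is where the real work lies; the telescoping and the resolvent identity are comparatively routine.
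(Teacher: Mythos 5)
Your plan reproduces the architecture of the paper's own proof almost exactly: the paper first establishes your intermediate identity (its Proposition~\ref{objective-td-error-version}), namely $J(\policy)=\E_{s_0\sim\rho_0(\cdot)}[\varphi(s_0)]+\frac{1}{1-\tilde\gamma}\E_{s\sim d^{\lambda}_{\policy}(\cdot)}\big[\sum_{t\ge0}\gamma^t\lambda^t\delta^{\varphi}_{\policy,t}(s)\big]$, forms the difference so the $\varphi$-baseline cancels, performs the same add-and-subtract of $\langle\bd_{\policyy}^{\lambda},\bm{\delta}^{\varphi}_{\policy,t}\rangle$, applies H\"older to $\langle\bd_{\pi_{\bm{\theta}}}^{\lambda}-\bd_{\policyy}^{\lambda},\bm{\delta}^{\varphi}_{\policy,t}\rangle$ to produce $\epsilon^{\varphi,(\lambda)}_{p,q,t}$ in (\ref{error-01}), and converts the remaining term into $\E_{s\sim d^{\lambda}_{\policyy}}[\Delta_t^{\varphi}]$ via an action-level importance ratio. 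Where you genuinely differ is in how the intermediate identity is obtained: the paper derives it by pairing the stationarity equation of $\bd^{\lambda}_{\policy}$ with the vector of $\varphi$-values and unrolling the $\lambda$-Bellman operator over several pages, whereas your telescoping-plus-resolvent route, using $(\bI-\tilde\gamma\bP^{(\lambda)}_{\policy})^{-1}=(\bI-\gamma\lambda\bP_{\policy})(\bI-\gamma\bP_{\policy})^{-1}$ (correct, since all factors are power series in $\bP_{\policy}$ and commute), reaches the same identity in a few lines and makes the cancellation of the policy-independent baseline transparent. That is a cleaner derivation of the same lemma, not a different proof.

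However, the obstacle you flag in your last paragraph is not mere bookkeeping — it is a genuine gap, and you should be aware that the paper's proof does not close it either. After the add-and-subtract, the non-H\"older term is $\E_{s\sim d^{\lambda}_{\policyy}}\big[\delta^{\varphi}_{\policy,t}(s)-\delta^{\varphi}_{\policyy,t}(s)\big]$, in which $\delta^{\varphi}_{\policy,t}(s)$ propagates the $t$-step state marginal under $\bP_{\policy}$ while $\delta^{\varphi}_{\policyy,t}(s)$ propagates it under $\bP_{\policyy}$; the target $\Delta_{t}^{\varphi}(\policy,\policyy,s)$ instead propagates \emph{both} terms under $\bP_{\policyy}$ and corrects only the time-$t$ action by the ratio $\policy/\policyy$. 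Writing $\bar{\delta}_{\policy}(s')=\E_{a\sim\policy(\cdot|s'),\,s''\sim\Pro(\cdot|s',a)}\!\left[r(s''|s',a)+\gamma\varphi(s'')-\varphi(s')\right]$ for the one-step expected TD error, the discrepancy is
\[
\delta^{\varphi}_{\policy,t}(s)-\delta^{\varphi}_{\policyy,t}(s)-\Delta_{t}^{\varphi}(\policy,\policyy,s)
=\E_{s_t\sim\Pro_{\policy}(\cdot|s)}\!\left[\bar{\delta}_{\policy}(s_t)\right]-\E_{s_t\sim\Pro_{\policyy}(\cdot|s)}\!\left[\bar{\delta}_{\policy}(s_t)\right],
\]
the same function integrated against the two different $t$-step marginals. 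This vanishes for $t=0$ (hence for $\lambda=0$, which is why the CPO-type bound (\ref{bound-lam-0}) is recovered safely), but not for general $t\ge1$. The paper buries exactly this term inside its ``importance sampling'' identity (\ref{importance-sam-re}), which replaces $s_t\sim\Pro_{\policy}(\cdot|s)$ by $s_t\sim\Pro_{\policyy}(\cdot|s)$ while reweighting only $a_t$; a one-step action ratio cannot correct a $t$-step state-marginal mismatch, so that equation is not a valid identity. Consequently, to complete your outline (or the paper's argument) rigorously, this second distribution-shift term must either be absorbed into a redefined surrogate (e.g., a trajectory-level ratio inside $\Delta_{t}^{\varphi}$) or bounded by an additional error term of the type $\|\Pro_{\policy}^{t}(\cdot|s)-\Pro_{\policyy}^{t}(\cdot|s)\|_{p}\,\|\bar{\bm{\delta}}_{\policy}\|_{q}$; H\"older applied to $\bd_{\pi_{\bm{\theta}}}^{\lambda}-\bd_{\policyy}^{\lambda}$ alone does not cover it.
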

\begin{proof}
See Appendix \ref{sec:proof-them-01}.
\end{proof}
The bound (\ref{bound-diff-01}) is well-defined, i.e., if $\policy=\policyy$, all the three terms in Eq.(\ref{bound-diff-01}) are zero identically.
From Eq.(\ref{error-02}), we know the performance difference bound
$L^{\varphi, \pm}_{p,q}(\policy,\policyy)$ (\ref{bound-diff-01}) can be interpreted by two distinct difference parts:
\textbf{(i)} the first difference part, i.e., the expectation $\Delta_{t}^{\varphi}(\policy,\policyy,s)$, which is determined by the difference between TD errors of $\policy$ and $\policyy$;
\textbf{(ii)} the second difference part, i.e., the discounted distribution difference $\epsilon^{\varphi,(\lambda)}_{p,q,t}(\policy,\policyy)$, which is determined by the gap between the normalized discounted distribution of $\policy$ and $\policyy$.
Thus, the difference of both TD errors and discounted distribution determine the policy difference $J(\policy)-J(\policyy)$.

The different choices of $p$ and $q$ lead Eq.(\ref{bound-diff-01}) to be different bounds.
If $p=1,q=\infty$, we denote $\epsilon^{\varphi}_{\policy,t}=:\|{\bm{\delta}}^{\varphi}_{\policy,t}\|_{q}=\max_{s_{t}\in\calS}\E_{a_t\sim\policy(\cdot|s_t),s_{t+1}\sim\Pro(\cdot|s_t,a_t)}[|\delta_{t}^{\varphi}|]$,
then,
according to Lemma \ref{lem:difference-distri} (see Appendix \ref{sec:difference-distri}), when $p=1,q=\infty$, then error $\epsilon^{\varphi,(\lambda)}_{p,q,t}(\policy,\policyy)$ is reduced to:
\begin{flalign}
\nonumber
\epsilon^{\varphi,(\lambda)}_{p,q,t}(\policy,\policyy)\big|_{p=1,q=\infty}\leq
\dfrac{\tilde{\gamma}\left(\gamma\lambda(|\calS|-1)+1\right)\epsilon^{\varphi}_{\policy,t}}{(1-\tilde{\gamma})(1-\gamma\lambda)}\E_{s\sim d_{\pi_{\bm {\theta}}}^{\lambda}(\cdot)}\left[2D_{\mathrm{TV}}(\policyy,\policy)[s]\right],
\end{flalign}
where $D_{\mathrm{TV}}(\policyy,\policy)[s]$ is the total variational divergence between action distributions at state $s$, i.e.,
\[
2D_{\mathrm{TV}}(\policyy,\policy)[s]=\sum_{a\in\calA}\left|{{\policyy}}(a|s)-{{\policy}}(a|s)\right|.
\]
Finally, let $\varphi=V_{\policyy}$, the left side of (\ref{bound-diff-01}) in Theorem \ref{them:general-performance-difference} implies a lower bound of performance difference, which illustrates the worse case of approximation error, we present it in Proposition \ref{propo-01}.
\begin{proposition}
\label{propo-01}
For any two policies $\pi_{\bm\theta}$ and $\pi_{{\bm\theta}^{'}}$, let $\epsilon^{V}_{\policy}(\policyy)=:\sup_{t\in\N^{+}}\{\epsilon^{\varphi}_{\policy,t}: \varphi=V_{\policyy}\}$, then
\begin{flalign}
\nonumber
&J(\policy)-J(\policyy)
\ge\frac{1}{1-\tilde\gamma}\E_{s\sim{d}_{\policyy}^{\lambda}(\cdot),a\sim\policy(\cdot|s)}
\Bigg[A^{\mathtt{GAE}(\gamma,\lambda)}_{\policyy}(s,a)\\
\label{pro1-bound-01}
&~~~~~~~~~~~~~~~~~~~~~~~~~~~~~~~~~~~~~~~~~~~~~~~~~~~~~~~~~~-\dfrac{2\tilde{\gamma}\left(\gamma\lambda(|\calS|-1)+1\right)\epsilon^{V}_{\policy}(\policyy)}{(1-\tilde\gamma)(1-\gamma\lambda)}D_{\mathrm{TV}}(\policyy,\policy)[s]\Bigg].
\end{flalign}
\end{proposition}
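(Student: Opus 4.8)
The plan is to derive Proposition~\ref{propo-01} as a direct specialization of the lower bound in Theorem~\ref{them:general-performance-difference}, taking $\varphi = V_{\policyy}$ together with the H\"older exponents $p=1$, $q=\infty$. Recall the theorem gives $L^{\varphi,-}_{p,q}(\policy,\policyy) \le J(\policy) - J(\policyy)$, where $L^{\varphi,-}_{p,q}$ splits into a ``signal'' part built from $\Delta_t^{\varphi}(\policy,\policyy,s)$ and an ``error'' part built from $\epsilon^{\varphi,(\lambda)}_{p,q,t}(\policy,\policyy)$. I would handle these two parts separately and then recombine.

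First, for the signal part I would establish that, when $\varphi=V_{\policyy}$,
\[
\frac{1}{1-\tilde\gamma}\sum_{t=0}^{\infty}\gamma^t\lambda^t \E_{s\sim d_{\policyy}^{\lambda}(\cdot)}\left[\Delta_t^{V_{\policyy}}(\policy,\policyy,s)\right] = \frac{1}{1-\tilde\gamma}\E_{s\sim d_{\policyy}^{\lambda}(\cdot),a\sim\policy(\cdot|s)}\left[A^{\mathtt{GAE}(\gamma,\lambda)}_{\policyy}(s,a)\right].
\]
The key observation is that for the choice $\varphi=V_{\policyy}$ the baseline TD error vanishes in expectation under $\policyy$, i.e. $\delta^{V_{\policyy}}_{\policyy,t}(s)=0$, because averaging $r(s_{t+1}|s_t,a_t)+\gamma V_{\policyy}(s_{t+1})$ over $a_t\sim\policyy$ and $s_{t+1}\sim\Pro(\cdot|s_t,a_t)$ returns $V_{\policyy}(s_t)$. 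Hence the $-1$ in the importance ratio defining $\Delta_t^{V_{\policyy}}$ drops out, and $\Delta_t^{V_{\policyy}}(\policy,\policyy,s)$ collapses to the expectation of the raw TD error $\delta_t^{V_{\policyy}}$ with the action drawn from $\policy$. Resumming the $\gamma^t\lambda^t$-weighted TD errors against the $\lambda$-discounted visitation $d_{\policyy}^{\lambda}$ (whose own unrolling carries the $\gamma\lambda$-discounting through $\mathbf{P}^{(\lambda)}_{\policyy}$) then reproduces exactly the GAE expansion $\sum_{\ell}(\gamma\lambda)^\ell\delta^{V_{\policyy}}_{t+\ell}$ at the sampled root $(s,a)$. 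This recombination is the step I expect to be the main obstacle: it requires carefully matching the nested $t$-index weighting hidden inside $d_{\policyy}^{\lambda}$ with the $\ell$-index of the GAE sum, and is precisely where the theorem's $\lambda$-machinery must be invoked.

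Second, for the error part I would specialize to $p=1$, $q=\infty$ and invoke the reduction displayed right after Theorem~\ref{them:general-performance-difference}, which (via Lemma~\ref{lem:difference-distri}) controls $\epsilon^{V_{\policyy},(\lambda)}_{1,\infty,t}(\policy,\policyy)$ by $\frac{\tilde\gamma(\gamma\lambda(|\calS|-1)+1)}{(1-\tilde\gamma)(1-\gamma\lambda)}\,\epsilon^{\varphi}_{\policy,t}\,\E_{s}[2D_{\mathrm{TV}}(\policyy,\policy)[s]]$. Since this bound is uniform in the root state, I would pull the supremum $\epsilon^{V}_{\policy}(\policyy)=\sup_{t\in\N^{+}}\{\epsilon^{\varphi}_{\policy,t}:\varphi=V_{\policyy}\}$ out of the $t$-sum and evaluate the geometric series $\sum_{t=0}^\infty(\gamma\lambda)^t=(1-\gamma\lambda)^{-1}$, which converts the accumulated error into the single $D_{\mathrm{TV}}$-penalty appearing in (\ref{pro1-bound-01}).

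Finally I would combine: the signal part supplies the GAE expectation and the error part supplies the negative total-variation penalty, and since $L^{V_{\policyy},-}_{1,\infty}$ is a genuine lower bound on $J(\policy)-J(\policyy)$ the claimed inequality (\ref{pro1-bound-01}) follows. The one point I would verify with care is the exact constant and which visitation distribution carries the $D_{\mathrm{TV}}$ term, since the reduction is stated against $d_{\policy}^{\lambda}$ whereas (\ref{pro1-bound-01}) writes the penalty under $d_{\policyy}^{\lambda}$; reconciling this (by the symmetry of $D_{\mathrm{TV}}$ together with the normalization accounting, or by absorbing the gap into the stated constant) is the remaining housekeeping.
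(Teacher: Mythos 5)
Your proposal is exactly the paper's own route: Proposition \ref{propo-01} is given no separate proof in the paper, but is asserted as the immediate specialization of Theorem \ref{them:general-performance-difference} to $\varphi=V_{\policyy}$ with $p=1,q=\infty$, the error term controlled via Lemma \ref{lem:difference-distri} --- precisely your signal/error decomposition. If anything you are more careful than the paper: the vanishing of $\delta^{V_{\policyy}}_{\policyy,t}$ behind the GAE identification, the geometric-series factor $\sum_{t}(\gamma\lambda)^{t}=(1-\gamma\lambda)^{-1}$ in the error accumulation, and the mismatch between the $d^{\lambda}_{\policy}$-weighting produced by Lemma \ref{lem:difference-distri} and the $d^{\lambda}_{\policyy}$-weighting written in (\ref{pro1-bound-01}) are all genuine accounting points that the paper silently glosses over rather than reconciles.
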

The refined bound (\ref{pro1-bound-01}) contains $\mathtt{GAE}$ technique that significantly reduces variance while maintaining a tolerable level of bias empirically \citep{schulman2016high}, which implies using the bound (\ref{pro1-bound-01}) as a surrogate function could improve performance potentially for practice.
Although $\mathtt{GAE}$ has been empirically applied to extensive reinforcement learning tasks, to the best of our knowledge, the result (\ref{pro1-bound-01}) is the first to show a rigorous theoretical analysis to extend the surrogate functions to $\mathtt{GAE}$.
\begin{remark}[Unification of \citep{AchiamHTA17}]
If $\lambda\rightarrow0$, then the distribution ${d}_{\policyy}^{\lambda}(\cdot)$ is reduced to ${d}_{\policyy}^{\rho_0}(\cdot)$ and the bound (\ref{pro1-bound-01}) is reduced to 
\begin{flalign}
\label{bound-lam-0}
J(\policy)-J(\policyy)
\ge
\frac{1}{1-\gamma}\E_{s\sim{d}_{\policyy}^{\rho_0}(\cdot),a\sim\policy(\cdot|s)}
\left[
A_{\policyy}(s,a)
-2\dfrac{\gamma}{1-\gamma}\epsilon^{V}_{\policy}(\policyy)
D_{\mathrm{TV}}(\policyy,\policy)[s]
\right],
\end{flalign}
which matches the result of \citep[Corollary 1]{AchiamHTA17}. That is to say the proposed bound (\ref{pro1-bound-01}) unifies the classic bound (\ref{bound-lam-0})
\end{remark}
Let $\varphi=V^{c}_{\policyy}$, Theorem \ref{them:general-performance-difference} implies an upper bound of cost function as presented in the next Proposition \ref{pro-02}, we will use it to make guarantee for safe policy optimization.
\begin{proposition}
\label{pro-02}
For any two policies $\pi_{\bm\theta}$ and $\pi_{{\bm\theta}^{'}}$, let $\epsilon^{C}_{\policy}(\policyy)=:\sup_{t\in\N^{+}}\{\epsilon^{\varphi}_{\policy,t}: \varphi=V^{c}_{\policyy}\}$, then
% the following bound holds
\begin{flalign}
\nonumber
&J^{c}(\policy)-J^{c}(\policyy)\leq
\nonumber
\dfrac{1}{1-\tilde\gamma}\E_{s\sim{d}_{\policyy}^{\lambda}(\cdot),a\sim\policy(\cdot|s)}
\Bigg[
A^{\mathtt{GAE}(\gamma,\lambda)}_{\policyy,C}(s,a)\\
\label{pro2-bound-02}
&~~~~~~~~~~~~~~~~~~~~~~~~~~~~~~~~~~~~~~~~~~~~~~~~~~~~~~~~~~+\dfrac{2\tilde{\gamma}\left(\gamma\lambda(|\calS|-1)+1\right)\epsilon^{C}_{\policy}(\policyy)}{(1-\tilde\gamma)(1-\gamma\lambda)}D_{\mathrm{TV}}(\policyy,\policy)[s]\Bigg].
\end{flalign}
where we calculate $A^{\mathtt{GAE}(\gamma,\lambda)}_{\policyy,C}(s,a)$ according to the data sampled from $\policyy$ and the estimator (\ref{def:gae-cost}).
\end{proposition}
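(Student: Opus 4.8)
The plan is to recognize that Proposition \ref{pro-02} is the cost-side mirror of Proposition \ref{propo-01}. Since the cost-return $J^{c}(\policy)=\E_{\policy}[\sum_{t\ge0}\gamma^{t}c(s_{t},a_{t})]$ has precisely the same structure as the reward-return $J(\policy)$ with the cost $c$ in place of the reward $r$, Theorem \ref{them:general-performance-difference} applies verbatim to $J^{c}$ once $V^{c}_{\policyy}$, $Q^{c}_{\policyy}$, $A^{c}_{\policyy}$ replace $V_{\policyy}$, $Q_{\policyy}$, $A_{\policyy}$ and the TD error $\delta^{\varphi}_{t}$ is built from $c$. First I would instantiate Theorem \ref{them:general-performance-difference} in this cost setting and select $\varphi=V^{c}_{\policyy}$, so that $\delta^{\varphi}_{t}=c(s_{t+1}|s_{t},a_{t})+\gamma V^{c}_{\policyy}(s_{t+1})-V^{c}_{\policyy}(s_{t})$.

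Because the proposition requires an \emph{upper} bound on $J^{c}(\policy)-J^{c}(\policyy)$, I would keep the right-hand inequality $J^{c}(\pi_{\bm\theta})-J^{c}(\pi_{{\bm\theta}'})\le L^{\varphi,+}_{p,q}(\policy,\policyy)$ from (\ref{bound-diff-01}); this is exactly why the total-variation penalty enters with a $+$ sign here, as opposed to the $-$ sign in Proposition \ref{propo-01}. I would then set $p=1,q=\infty$ and apply Lemma \ref{lem:difference-distri} as in the discussion following Theorem \ref{them:general-performance-difference}, bounding the distribution-gap term $\epsilon^{\varphi,(\lambda)}_{p,q,t}(\policy,\policyy)$ by $\frac{\tilde{\gamma}(\gamma\lambda(|\calS|-1)+1)\epsilon^{\varphi}_{\policy,t}}{(1-\tilde{\gamma})(1-\gamma\lambda)}\E_{s\sim d_{\policyy}^{\lambda}(\cdot)}[2D_{\mathrm{TV}}(\policyy,\policy)[s]]$.

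The last step is to collapse the discounted sum of expected TD-error differences into the cost-GAE advantage. Using the importance-weight identity inside $\Delta^{\varphi}_{t}$ to convert the sampling action from $a_{t}\sim\policyy$ to $a_{t}\sim\policy$, and matching $\frac{1}{1-\tilde\gamma}\sum_{t\ge0}(\gamma\lambda)^{t}$ against the definitions of $d_{\policyy}^{\lambda}$ and of the cost-GAE in (\ref{def:gae-cost}), the leading part of $L^{\varphi,+}_{p,q}$ becomes $\frac{1}{1-\tilde\gamma}\E_{s\sim d_{\policyy}^{\lambda}(\cdot),a\sim\policy(\cdot|s)}[A^{\mathtt{GAE}(\gamma,\lambda)}_{\policyy,C}(s,a)]$. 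Finally I would replace the $t$-dependent $\epsilon^{\varphi}_{\policy,t}$ by its supremum $\epsilon^{C}_{\policy}(\policyy)=\sup_{t\in\N^{+}}\{\epsilon^{\varphi}_{\policy,t}:\varphi=V^{c}_{\policyy}\}$, which produces the single $t$-free coefficient in (\ref{pro2-bound-02}).

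The hard part is confirming that the GAE-collapse reindexing carries over unchanged under $r\mapsto c$ and $\varphi=V^{c}_{\policyy}$; in particular one should check that $\E_{a_{t}\sim\policyy}[\delta^{\varphi}_{t}]$ vanishes because it equals $\E_{a_{t}\sim\policyy}[A^{c}_{\policyy}(s_{t},a_{t})]=0$, so that $\Delta^{\varphi}_{t}$ reduces to the $a_{t}\sim\policy$ expectation whose discounted sum is the cost-GAE. This is structurally identical to the reward case underlying Proposition \ref{propo-01}, so beyond this bookkeeping no genuinely new difficulty arises; the inequality direction and the sign of the penalty are the only substantive departures from the reward bound.
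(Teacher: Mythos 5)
Your proposal is correct and follows essentially the same route as the paper: instantiate Theorem \ref{them:general-performance-difference} with the cost $c$ in place of the reward and $\varphi=V^{c}_{\policyy}$, keep the upper-bound side $L^{\varphi,+}_{p,q}$ of (\ref{bound-diff-01}), specialize to $p=1,q=\infty$ and invoke Lemma \ref{lem:difference-distri} to control $\epsilon^{\varphi,(\lambda)}_{p,q,t}$ by the total-variation term, collapse the discounted sum of $\Delta^{\varphi}_{t}$ (using that $\E_{a_t\sim\policyy}[\delta^{\varphi}_{t}]=\E_{a_t\sim\policyy}[A^{c}_{\policyy}(s_t,a_t)]=0$) into the cost-GAE of (\ref{def:gae-cost}), and replace $\epsilon^{\varphi}_{\policy,t}$ by its supremum $\epsilon^{C}_{\policy}(\policyy)$. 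This matches the paper's (implicit) derivation of Proposition \ref{pro-02}, which is exactly the cost-side mirror of how Proposition \ref{propo-01} is obtained from Theorem \ref{them:general-performance-difference}.
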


%All above bound results (\ref{pro1-bound-01}) and (\ref{pro2-bound-02}) can be extended for a total variational divergence to KL-divergence between policies, which are desirable for policy optimization.
%We obtain
%\begin{flalign}
%\label{inequlities}
%\E_{s\sim{d}_{\policyy}^{\lambda}(\cdot)}\left[D_{\mathrm{TV}}(\policyy,\policy)[s]\right]
%\leq&
%\E_{s\sim{d}_{\policyy}^{\lambda}(\cdot)}\left[\sqrt{\dfrac{1}{2}\mathrm{KL}(\policyy,\policy)[s]}\right]
%\leq
%\sqrt{\dfrac{1}{2}\E_{s\sim{d}_{\policyy}^{\lambda}(\cdot)}\left[\mathrm{KL}(\policyy,\policy)[s]\right]},
%\end{flalign}
%where $\mathrm{KL}(\cdot,\cdot)$ is KL-divergence, and $\mathrm{KL}(\policyy,\policy)[s]=\mathrm{KL}(\policyy(\cdot|s),\policy(\cdot|s))$; the first inequality follows Pinsker's inequality \citep{csiszar2011information} and the second inequality follows Jensen's inequality.
%According to (\ref{inequlities}), we obtain the next Proposition \ref{propo-03}.
%\begin{proposition}
%\label{propo-03}
%All the bounds in (\ref{pro1-bound-01}) and (\ref{pro2-bound-02}) hold if we make the following substitution:
%\[
%\E_{s\sim{d}_{\policyy}^{\lambda}(\cdot)}\left[D_{\mathrm{TV}}(\policyy,\policy)[s]\right]
%\leftarrow
%\sqrt{\dfrac{1}{2}\E_{s\sim{d}_{\policyy}^{\lambda}(\cdot)}\left[\emph{KL}(\policyy,\policy)[s]\right]}.
%\]
%\end{proposition}

All above bound results (\ref{pro1-bound-01}) and (\ref{pro2-bound-02}) can be extended for a total variational divergence to KL-divergence between policies, which are desirable for policy optimization.
\begin{proposition}
\label{propo-03}
All the bounds in (\ref{pro1-bound-01}) and (\ref{pro2-bound-02}) hold if we make the following substitution:
\[
\E_{s\sim{d}_{\policyy}^{\lambda}(\cdot)}\left[D_{\mathrm{TV}}(\policyy,\policy)[s]\right]
\leftarrow
\sqrt{\frac{1}{2}\E_{s\sim{d}_{\policyy}^{\lambda}(\cdot)}\left[\emph{KL}(\policyy,\policy)[s]\right]},
\]
\end{proposition}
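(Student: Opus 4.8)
The plan is to prove this by combining two classical inequalities---Pinsker's inequality and Jensen's inequality---and then verifying that the direction of each bound is preserved under the substitution. The key observation is that the claimed substitution replaces $\E_{s\sim d_{\policyy}^{\lambda}(\cdot)}[D_{\mathrm{TV}}(\policyy,\policy)[s]]$ by a (weakly) larger quantity, so any inequality in which this expectation appears with a sign that makes it ``costly'' (subtracted in a lower bound, added in an upper bound) remains valid.

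First I would apply Pinsker's inequality at each fixed state $s$: since $\policyy(\cdot|s)$ and $\policy(\cdot|s)$ are probability distributions over $\calA$, we have $D_{\mathrm{TV}}(\policyy,\policy)[s]\le\sqrt{\tfrac12\,\mathrm{KL}(\policyy,\policy)[s]}$. Taking expectation over $s\sim d_{\policyy}^{\lambda}(\cdot)$ preserves this inequality, giving $\E_{s}[D_{\mathrm{TV}}(\policyy,\policy)[s]]\le\E_{s}[\sqrt{\tfrac12\,\mathrm{KL}(\policyy,\policy)[s]}]$. Then, because $x\mapsto\sqrt{x}$ is concave on $[0,\infty)$, Jensen's inequality yields $\E_{s}[\sqrt{\tfrac12\,\mathrm{KL}(\policyy,\policy)[s]}]\le\sqrt{\tfrac12\,\E_{s}[\mathrm{KL}(\policyy,\policy)[s]]}$. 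Chaining the two gives exactly $\E_{s}[D_{\mathrm{TV}}(\policyy,\policy)[s]]\le\sqrt{\tfrac12\,\E_{s}[\mathrm{KL}(\policyy,\policy)[s]]}$, which is the substitution claimed.

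It then remains to check that this upper bound can be inserted into each of (\ref{pro1-bound-01}) and (\ref{pro2-bound-02}) without breaking the inequality. Here I would note that in both bounds the coefficient multiplying $D_{\mathrm{TV}}(\policyy,\policy)[s]$ is a nonnegative constant (the factor built from $\tilde\gamma$, $\gamma\lambda$, $|\calS|$, and the error terms $\epsilon^{V}_{\policy}(\policyy)$ or $\epsilon^{C}_{\policy}(\policyy)$, all of which are nonnegative), and that $D_{\mathrm{TV}}(\policyy,\policy)[s]$ does not depend on the action $a$; hence the expectation over $(s,a)$ splits and the divergence term reduces to a constant multiple of $\E_{s}[D_{\mathrm{TV}}(\policyy,\policy)[s]]$. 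In (\ref{pro1-bound-01}) this term is \emph{subtracted} inside a lower bound, so replacing $\E_{s}[D_{\mathrm{TV}}]$ by the larger quantity $\sqrt{\tfrac12\E_{s}[\mathrm{KL}]}$ only decreases the right-hand side and therefore preserves the lower bound; in (\ref{pro2-bound-02}) the term is \emph{added} inside an upper bound, so the same replacement only increases the right-hand side and preserves the upper bound.

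The argument has no serious obstacle; the only point requiring care is the second step above---one must pass through $\E[\sqrt{\,\cdot\,}]$ rather than attempting to apply Pinsker directly to an averaged divergence, and the direction of Jensen's inequality (concavity of the square root) is precisely what makes the averaged-KL form a valid upper bound for the averaged TV term. The sign bookkeeping in the final step is routine but must be stated explicitly, since it is what guarantees that weakening $D_{\mathrm{TV}}$ to its KL surrogate does not invert either inequality.
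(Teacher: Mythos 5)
Your proposal is correct and follows essentially the same route as the paper: Pinsker's inequality applied pointwise in $s$, followed by Jensen's inequality for the concave square root, yielding $\E_{s\sim d_{\policyy}^{\lambda}(\cdot)}\left[D_{\mathrm{TV}}(\policyy,\policy)[s]\right]\leq\sqrt{\tfrac{1}{2}\E_{s\sim d_{\policyy}^{\lambda}(\cdot)}\left[\mathrm{KL}(\policyy,\policy)[s]\right]}$. Your explicit sign bookkeeping (subtracted in the lower bound, added in the upper bound, with nonnegative coefficients) is left implicit in the paper, but it is the same argument.
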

where $\mathrm{KL}(\cdot,\cdot)$ is KL-divergence, and $\mathrm{KL}(\policyy,\policy)[s]=\mathrm{KL}(\policyy(\cdot|s),\policy(\cdot|s))$.

\section{CUP: Constrained Update Projection}
\label{sec:algorithm}

It is challenging to implement CMDP (\ref{def:problem-setting}) directly since it requires us to judge whether a proposed policy $\policy$ is in the feasible region $\Pi_{\calC}$.
According to the bounds in Proposition \ref{propo-01}-\ref{propo-03}, we develop new surrogate functions to replace the objective and constraints.
%Inspired by two recent works \citep{yang2020projection,zhang2020first}, 
We propose the CUP (constrained update projection) algorithm that is a two-step approach contains \emph{performance improvement} and \emph{projection}.
Due to the limitation of space, we present all the details of the implementation in Appendix \ref{sec-app-cpu} and Algorithm \ref{alg-app-cpu}.

\subsection{Algorithm}

\textbf{Step 1: Performance Improvement.}
According to Proposition \ref{propo-01} and Proposition \ref{propo-03}, for an appropriate coefficient $\alpha_k$, we update policy as:
\begin{flalign}
\label{performance-improvement}
\pi_{{\bm{\theta}}_{k+\frac{1}{2}}}&=\arg\max_{\pi_{{\bm{\theta}}}\in\Pi_{{\bm{\theta}}}}
\left\{
\underset{\begin{subarray}{c}s \sim{d}_{\pi_{\bm{\theta}_k}}^{\lambda}(\cdot)\\a\sim\pi_{\bm{\theta}_k}(\cdot|s)\end{subarray}}
\E\left[\frac{\pi_{\bm{\theta}}(a|s)}{\pi_{{\bm{\theta}}_k}(a|s)}
A^{\mathtt{GAE}(\gamma,\lambda)}_{{\pi_{\bm{\theta}_k}}}(s,a)\right]-\alpha_k
\sqrt{
\E_{s\sim{d}_{{\pi_{\bm{\theta}_k}}}^{\lambda}(\cdot)}\left[\mathrm{KL}(\pi_{\bm{\theta}_k},\policy)[s]\right]}
\right\}.
\end{flalign}
This step is a typical minimization-maximization (MM) algorithm \citep{hunter2004tutorial}, it includes return maximization and minimization the distance between old policy and new policy.
%This idea has appeared in \citep{schulman2015trust} that is guaranteed to produce policies with monotonically improvement, and we extend this idea to GAE.
%We replace (\ref{performance-improvement-01}) with an importance sampling with respect to $\pi_{\bm{\theta}_k}$ to obtain the expectation (\ref{performance-improvement}), and all the remains is to replace 
the expectation (\ref{performance-improvement}) by sample averages according to the trajectories collected by $\pi_{\bm{\theta}_k}$.

\textbf{Step 2: Projection.} According to Proposition \ref{pro-02} and Proposition \ref{propo-03}, for an appropriate coefficient $\beta_k$, we project the policy $\pi_{{\bm{\theta}}_{k+\frac{1}{2}}}$ onto the safe constraint set,
\begin{flalign}
\label{projection}
\pi_{{\bm{\theta}}_{k+1}}=\arg\min_{\pi_{{\bm{\theta}}}\in\Pi_{{\bm{\theta}}}}~D\left(\pi_{{\bm{\theta}}},\pi_{{\bm{\theta}}_{k+\frac{1}{2}}}\right),~\text{s.t.}~C_{\pi_{\bm{\theta}_k}}(\policy,\beta_k)\leq b,
\end{flalign}
where $D(\cdot,\cdot)$ (e.g., KL divergence or $\ell_2$-norm) measures distance between $\pi_{{\bm{\theta}}_{k+\frac{1}{2}}}$ and $\policy$,
\[C_{\pi_{\bm{\theta}_k}}(\policy,\beta)=J^{c}(\pi_{{\bm{\theta}}_k})+\frac{1}{1-\tilde\gamma}\E_{s\sim{d}_{\pi_{\bm{\theta}_k}}^{\lambda}(\cdot),a\sim\policy(\cdot|s)}
\left[
A^{\mathtt{GAE}(\gamma,\lambda)}_{{\pi_{\bm{\theta}}},C}(s,a)\right]+\beta
\sqrt{
\E_{s\sim{d}_{{\pi_{\bm{\theta}_k}}}^{\lambda}(\cdot)}\left[\mathrm{KL}(\pi_{\bm{\theta}_k},\policy)[s]\right]}.\]
%, and require the new policy satisfies the safe constraint:

Until now, the particular choice of surrogate function is heuristically motivated, we show the worst-case performance degradation guarantee and approximate satisfaction of safety constraints of CUP in Theorem \ref{them-re-cost}, and its proof is shown in Appendix \ref{sec:app-them2}.
\begin{theorem}
\label{them-re-cost}
Let $\chi_k=\E_{s\sim{d}_{{\pi_{\bm{\theta}_k}}}^{\lambda}(\cdot)}[\emph{KL}(\pi_{\bm{\theta}_k},\pi_{\bm{\theta}_{k+\frac{1}{2}}})[s]]$, $\iota=:\frac{\tilde{\gamma}\left(\gamma\lambda(|\calS|-1)+1\right)}{(1-\tilde{\gamma})(1-\gamma\lambda)}$.If $\pi_{\bm{\theta}_k}$ and 
$\pi_{\bm{\theta}_{k+1}}$ are generated according to (\ref{performance-improvement})-(\ref{projection}),
then the lower bound on policy improvement, and upper bound on constraint violation are
\begin{flalign}
\nonumber
J(\pi_{\bm{\theta}_{k+1}})-J(\pi_{\bm{\theta}_{k}})\ge&-\dfrac{\iota\alpha_{k}\sqrt{2\chi_{k}}}{1-\tilde\gamma}\epsilon^{V}_{\pi_{\bm{\theta}_{k+1}}}(\pi_{\bm{\theta}_k}),~~~
J^{c}(\pi_{\bm{\theta}_{k+1}})\leq b+\dfrac{\iota\beta_{k}\sqrt{2\chi_{k}}}{1-\tilde\gamma}\epsilon^{C}_{\pi_{\bm{\theta}_{k+1}}}(\pi_{\bm{\theta}_k}).
\end{flalign}
\end{theorem}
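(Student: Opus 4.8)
The plan is to prove the two inequalities separately, with each following the same three-move template: invoke the appropriate one-sided bound from Section~\ref{sec:generalized-bound}, convert the total-variation term into a square-root-KL term through Proposition~\ref{propo-03} (Pinsker), and then exploit the optimization problem that produced the new iterate to dispose of the remaining $\mathtt{GAE}$ surrogate term. Throughout, the constant $\tilde\gamma(\gamma\lambda(|\calS|-1)+1)/((1-\tilde\gamma)(1-\gamma\lambda))$ emerging from the error terms of Propositions~\ref{propo-01} and~\ref{pro-02} is collapsed into $\iota$, and the factor $\sqrt{2}$ from Pinsker turns $\sqrt{\tfrac12\,\E_{s}[\mathrm{KL}]}$ into $\sqrt{2\,\E_s[\mathrm{KL}]}/2$.

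For the \textbf{reward lower bound}, I would apply the left inequality of Proposition~\ref{propo-01} to lower-bound the performance difference by $(1-\tilde\gamma)^{-1}$ times the expected $\mathtt{GAE}$ advantage minus the error term carrying $\epsilon^{V}$ and $D_{\mathrm{TV}}$, then use Proposition~\ref{propo-03} to pass to the KL form and recover the prefactor $\iota\sqrt{2\chi_k}/(1-\tilde\gamma)$. The decisive step is to eliminate the advantage-surrogate contribution: since $\pi_{\bm{\theta}_{k+\frac12}}$ maximizes the Step-1 objective (\ref{performance-improvement}) while $\pi_{\bm{\theta}_k}$ is feasible for that same maximization with objective value zero (here one uses the on-policy identity $\E_{a\sim\pi_{\bm{\theta}_k}}[A^{\mathtt{GAE}(\gamma,\lambda)}_{\pi_{\bm{\theta}_k}}(s,a)]=0$ together with $\mathrm{KL}(\pi_{\bm{\theta}_k},\pi_{\bm{\theta}_k})[s]=0$), the optimal surrogate value is nonnegative; comparing it against the penalty $\alpha_k\sqrt{\chi_k}$ shows the surrogate dominates, so it can be discarded from the lower bound, leaving exactly $-\iota\alpha_k\sqrt{2\chi_k}\,\epsilon^{V}_{\pi_{\bm{\theta}_{k+1}}}(\pi_{\bm{\theta}_k})/(1-\tilde\gamma)$.

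For the \textbf{cost upper bound} the template is cleaner because the constraint is explicit. I would apply Proposition~\ref{pro-02} to $(\pi_{\bm{\theta}_{k+1}},\pi_{\bm{\theta}_k})$ and Proposition~\ref{propo-03}, obtaining $J^{c}(\pi_{\bm{\theta}_{k+1}})\le J^{c}(\pi_{\bm{\theta}_k})+(1-\tilde\gamma)^{-1}\E_{a\sim\pi_{\bm{\theta}_{k+1}}}[A^{\mathtt{GAE}(\gamma,\lambda)}_{\pi_{\bm{\theta}_k},C}]+(1-\tilde\gamma)^{-1}\iota\sqrt{2\,\E_s[\mathrm{KL}]}\,\epsilon^{C}$. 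The first two summands are precisely $C_{\pi_{\bm{\theta}_k}}(\pi_{\bm{\theta}_{k+1}},\beta_k)$ with its KL-penalty term removed, and the projection (\ref{projection}) guarantees the feasibility $C_{\pi_{\bm{\theta}_k}}(\pi_{\bm{\theta}_{k+1}},\beta_k)\le b$. Substituting this and grouping terms gives $J^{c}(\pi_{\bm{\theta}_{k+1}})\le b+\sqrt{\E_s[\mathrm{KL}]}\big(\iota\sqrt{2}\,\epsilon^{C}/(1-\tilde\gamma)-\beta_k\big)$, from which the claimed form is reached after bounding the incurred divergence by $\chi_k$ and collecting constants into $\iota$.

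The hard part will be the two-step bookkeeping. Propositions~\ref{propo-01} and~\ref{pro-02} are stated for a single pair of policies, whereas CUP first produces the intermediate iterate $\pi_{\bm{\theta}_{k+\frac12}}$ that \emph{defines} $\chi_k$ and only then the projected iterate $\pi_{\bm{\theta}_{k+1}}$ appearing in the conclusion; the delicate point is to show the divergence actually incurred at $\pi_{\bm{\theta}_{k+1}}$ is still governed by $\chi_k=\E_{s\sim d^{\lambda}_{\pi_{\bm{\theta}_k}}}[\mathrm{KL}(\pi_{\bm{\theta}_k},\pi_{\bm{\theta}_{k+\frac12}})[s]]$, and that the $\mathtt{GAE}$ surrogate evaluated under $\pi_{\bm{\theta}_{k+1}}$ can still be linked to the Step-1 optimality (for the reward side) and the Step-2 feasibility (for the cost side). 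Verifying that the many constants coalesce exactly into $\iota$ after the Pinsker conversion, and justifying the on-policy identity $\E_{a\sim\pi_{\bm{\theta}_k}}[A^{\mathtt{GAE}(\gamma,\lambda)}_{\pi_{\bm{\theta}_k}}]=0$ for the chosen value baseline $V$, are routine but error-prone and constitute the remaining technical work.
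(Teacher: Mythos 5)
Your skeleton coincides with the paper's proof in Appendix \ref{sec:app-them2} (Proposition \ref{propo-01} plus Proposition \ref{propo-03} for the reward side, Proposition \ref{pro-02} plus the feasibility of $\pi_{\bm{\theta}_{k+1}}$ in (\ref{projection}) for the cost side), but the two items you defer as ``remaining technical work'' are precisely the substance of the theorem, and neither of your sketches for them would close. The first gap is the bookkeeping that ties $\pi_{\bm{\theta}_{k+1}}$ to $\chi_k$: you correctly observe that $\chi_k$ is defined through the intermediate policy $\pi_{\bm{\theta}_{k+\frac{1}{2}}}$ while every divergence produced by Propositions \ref{propo-01}--\ref{propo-03} involves the pair $(\pi_{\bm{\theta}_k},\pi_{\bm{\theta}_{k+1}})$, but you supply no mechanism to bridge them. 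The paper's device is the generalized Pythagorean (three-point) inequality for Bregman projections: since step (\ref{projection}) is a KL projection of $\pi_{\bm{\theta}_{k+\frac{1}{2}}}$ onto a convex feasible set containing $\pi_{\bm{\theta}_k}$, one has
\begin{equation}
\nonumber
\mathrm{KL}\left(\pi_{\bm{\theta}_{k}},\pi_{\bm{\theta}_{k+\frac{1}{2}}}\right)
\ge
\mathrm{KL}\left(\pi_{\bm{\theta}_{k}},\pi_{\bm{\theta}_{k+1}}\right)
+
\mathrm{KL}\left(\pi_{\bm{\theta}_{k+1}},\pi_{\bm{\theta}_{k+\frac{1}{2}}}\right),
\end{equation}
which gives $\chi_k\ge\E_{s\sim{d}_{{\pi_{\bm{\theta}_k}}}^{\lambda}(\cdot)}[\mathrm{KL}(\pi_{\bm{\theta}_k},\pi_{\bm{\theta}_{k+1}})[s]]$ (the paper then swaps the argument order by a local-symmetry argument for KL). Without this inequality, your final substitutions ``bounding the incurred divergence by $\chi_k$'' in both bounds are unjustified, and the theorem does not follow.

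The second gap is that your plan for discarding the advantage term in the reward bound argues about the wrong policy. Step-1 optimality of (\ref{performance-improvement}), compared against the feasible point $\pi_{\bm{\theta}_k}$ with surrogate value zero, controls $\E_{s\sim{d}_{{\pi_{\bm{\theta}_k}}}^{\lambda}(\cdot),a\sim\pi_{\bm{\theta}_{k+\frac{1}{2}}}(\cdot|s)}[A^{\mathtt{GAE}(\gamma,\lambda)}_{\pi_{\bm{\theta}_k}}(s,a)]$, i.e., the surrogate evaluated at $\pi_{\bm{\theta}_{k+\frac{1}{2}}}$. But applying Proposition \ref{propo-01} to the pair $(\pi_{\bm{\theta}_{k+1}},\pi_{\bm{\theta}_k})$ produces $\E_{a\sim\pi_{\bm{\theta}_{k+1}}(\cdot|s)}[A^{\mathtt{GAE}(\gamma,\lambda)}_{\pi_{\bm{\theta}_k}}(s,a)]$, and the projection step can drive this quantity negative; no comparison between the two expectations is available from the optimization problems alone. (The paper's own proof simply drops this term at the corresponding step --- arguably its weakest point --- but it does so after the Bregman inequality above, so the residual error is already expressed in $\chi_k$.) Concretely, to complete your proof you must (i) prove the three-point inequality for the KL projection and (ii) justify discarding, or otherwise bounding, the advantage term under $\pi_{\bm{\theta}_{k+1}}$; your current outline contains neither, and the on-policy identity $\E_{a\sim\pi_{\bm{\theta}_k}}[A^{\mathtt{GAE}(\gamma,\lambda)}_{\pi_{\bm{\theta}_k}}(s,a)]=0$ you invoke does not substitute for either step.
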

\begin{remark}[Asymptotic Safety Guarantee]
\label{remark:asymptotic-safety}
Let $\alpha_{k}\rightarrow 0,\beta_{k}\rightarrow 0$ as $k\rightarrow\infty$, Theorem \ref{them-re-cost} implies a monotonic policy improvement with an asymptotic safety guarantee.
\end{remark}

\subsection{Practical Implementation}
Now, we present our sample-based implementation for CUP (\ref{performance-improvement})-(\ref{projection}).
Our main idea is to estimate the objective and constraints in (\ref{performance-improvement})-(\ref{projection}) with samples collected by current policy $\pi_{\bm{\theta}_k}$, then solving its optimization problem via first-order optimizer.

Let $\{(s_t,a_t,r_{t+1},c_{t+1})\}_{t=1}^{T}\sim\pi_{\bm{\theta}_k}$, we denote the empirical KL-divergence with respect to $\policy$ and $\policyy$ as:
\[\hat{D}_{\mathrm{KL}}(\policy,\policyy)=\frac{1}{T}\sum_{t=1}^{T}\mathrm{KL}(\policy(a_t|s_t),\policyy(a_t|s_t)).\]
We update performance improvement (\ref{performance-improvement}) step as follows,
\begin{flalign}
%\label{practical-01}
\nonumber
\pi_{{\bm{\theta}}_{k+\frac{1}{2}}}&=\arg\max_{\pi_{\bm{\theta}}\in\Pi_{\theta}}\left\{\dfrac{1}{T}\sum_{t=1}^{T}\dfrac{\pi_{\bm{\theta}}(a_t|s_t)}{\pi_{{\bm{\theta}}_k}(a_t|s_t)}\hat{A}_t-\alpha_k\sqrt{\hat{D}_{\mathrm{KL}}(\pi_{{\bm{\theta}}_k},\pi_{{\bm{\theta}}})}\right\},
\end{flalign}
where $\hat{A}_t$ is an estimator of $A^{\mathtt{GAE}(\gamma,\lambda)}_{\pi_{{\bm{\theta}}_k}}(s,a)$.
%, $\hat{D}_{\mathrm{KL}}(\pi_{{\bm{\theta}}_k},\pi_{{\bm{\theta}}})=\frac{1}{T}\sum_{t=1}^{N}\mathrm{KL}(\pi_{{\bm{\theta}}_k}(\cdot|s_t),\pi_{{\bm{\theta}}}(\cdot|s_t))$.

Then we update the projection step by replacing the distance $D$ by KL-divergence, the next Theorem \ref{min-max-main} (for its proof, see Appendix \ref{app-sec-practical-ip}) provides a fundamental way for us to solve projection step (\ref{projection}).
\begin{theorem}
\label{min-max-main}
The constrained problem (\ref{eq:projection}) is equivalent to the following primal-dual problem:
\begin{flalign}
\nonumber
\max_{\nu\ge0}\min_{\pi_{\bm{\theta}}\in\Pi_{\bm{\theta}}}
\left\{
D\left(\pi_{{\bm{\theta}}},\pi_{{\bm{\theta}}_{k+\frac{1}{2}}}\right)+\nu
\left(
C_{\pi_{\bm{\theta}_k}}(\policy,\beta)-b
\right)
\right\}.
\end{flalign}
\end{theorem}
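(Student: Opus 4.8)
The plan is to recognize the claimed identity as an instance of Lagrangian strong duality and to verify the two ingredients it requires: that the constrained primal coincides with the associated \emph{min-max} problem, and that the order of the min and max may be interchanged. Writing the Lagrangian
\[
\calL(\policy,\nu)=D\!\left(\policy,\pi_{{\bm{\theta}}_{k+\frac{1}{2}}}\right)+\nu\left(C_{\pi_{\bm{\theta}_k}}(\policy,\beta)-b\right),
\]
I would first observe that, for fixed $\policy$, the inner maximization over $\nu\ge0$ behaves as an exact penalty: $\max_{\nu\ge0}\calL(\policy,\nu)$ equals $D(\policy,\pi_{{\bm{\theta}}_{k+\frac{1}{2}}})$ when $C_{\pi_{\bm{\theta}_k}}(\policy,\beta)\le b$ and equals $+\infty$ otherwise. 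Hence $\min_{\policy\in\Pi_{\bm{\theta}}}\max_{\nu\ge0}\calL(\policy,\nu)$ reproduces the constrained projection problem (\ref{projection}) exactly, so the primal value equals the min-max value. This step is essentially bookkeeping.

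Next I would establish strong duality, i.e. that $\min_{\policy}\max_{\nu}\calL=\max_{\nu}\min_{\policy}\calL$, which is precisely the asserted primal-dual equivalence. Weak duality, namely $\max_{\nu}\min_{\policy}\calL\le\min_{\policy}\max_{\nu}\calL$, is automatic from the max-min inequality. For the reverse direction I would regard the optimization as taking place over the convex set of action distributions rather than over the raw parameter $\bm{\theta}$: with $D$ taken to be the KL divergence, the objective is convex in $\policy(\cdot|s)$, while the constraint $C_{\pi_{\bm{\theta}_k}}(\policy,\beta)\le b$ is convex because the $\mathtt{GAE}$ expectation is linear in $\policy(\cdot|s)$ and the $\beta\sqrt{\E[\mathrm{KL}]}$ penalty is convex in $\policy$. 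I would then invoke Slater's condition, using the old policy $\pi_{\bm{\theta}_k}$ (or a small mixture toward it) as a strictly feasible point whenever the limit $b$ is not already tight, to conclude that strong duality holds and that the max-min value coincides with the min-max value.

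The main obstacle I anticipate is the strong-duality step, not the indicator argument. Two points need care: verifying convexity of the feasible region, in particular handling the square-root-of-KL penalty, which is convex but should be argued explicitly; and exhibiting a strictly feasible policy so that the constraint qualification applies. If strict feasibility cannot be guaranteed, for instance when the current iterate lies exactly on the constraint boundary, I would fall back on the affine-plus-convex structure of the constraint and appeal to the refined constraint qualification that yields strong duality for convex programs without a strictly interior point, or argue directly through the saddle-point and KKT characterization. Once strong duality is secured, reading off $\max_{\nu\ge0}\min_{\policy\in\Pi_{\bm{\theta}}}\calL(\policy,\nu)$ gives exactly the stated primal-dual problem.
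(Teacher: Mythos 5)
Your proposal is correct and follows essentially the same route as the paper: both arguments observe that the projection problem is convex over the policy space $\Pi_{\bm{\theta}}$ (with $D$ and the surrogate constraint $C_{\pi_{\bm{\theta}_k}}(\cdot,\beta)$ convex in $\pi_{\bm{\theta}}$, though not in $\bm{\theta}$) and then invoke Slater's condition to get strong duality via \cite[Chapter 5.9]{boyd2004convex}. If anything you are more careful than the paper, which simply asserts that Slater's condition ``always holds'' because the surrogate cost vanishes at the current iterate $\pi_{\bm{\theta}_k}$, whereas you make the exact-penalty identification of the primal with the min-max explicit and flag the boundary case where strict feasibility fails, proposing a refined constraint qualification as a fallback.
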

According to Theorem \ref{min-max-main}, we solve the constraint problem (\ref{projection}) by the following primal-dual approach, 
\begin{flalign}
\nonumber
%\label{practical-02}
(\pi_{{\bm{\theta}}_{k+1}},\nu_{k+1})=\arg\min_{\pi_{\bm{\theta}}\in\Pi_{\bm{\theta}}}\max_{\nu\ge0}
\left\{
\hat{D}_{\mathrm{KL}}(\pi_{{\bm{\theta}}_{k+\frac{1}{2}}},\pi_{{\bm{\theta}}})+\nu \hat{C}(\policy,\pi_{{\bm{\theta}}_k})
\right\}
\end{flalign}
where
$\hat{C}(\policy,\pi_{{\bm{\theta}}_k})=\hat{J}^{C}+\frac{1}{1-\tilde\gamma}\cdot\frac{1}{T}\sum_{t=1}^{T}\frac{\pi_{\bm{\theta}}(a_{t}|s_{t})}{\pi_{{\bm{\theta}}_k}(a_{t}|s_{t})}\hat{A}^{C}_{t}+\beta_k\sqrt{\hat{D}_{\mathrm{KL}}(\pi_{{\bm{\theta}}_k},\pi_{{\bm{\theta}}})}-b,$ $\hat{J}^{C}$and $\hat{A}^{C}_{t}$ are estimators for cost-return and cost-advantage.

Finally, let 
\begin{flalign}
\hat{\mathcal{L}}_{\text{c}}\left(\policy,\pi_{{\bm{\theta}}_k},{\bm{\theta}}_{k+\frac{1}{2}},\nu\right)=:\hat{D}_{\mathrm{KL}}(\pi_{{\bm{\theta}}_{k+\frac{1}{2}}},\pi_{{\bm{\theta}}})+\nu \hat{C}(\policy,\pi_{{\bm{\theta}}_k}),
\end{flalign}
we update the parameters $({\bm{\theta}}_{k+1},\nu_{k+1})$ as follows,
\begin{flalign}
\bm{\theta}_{k+1}&\leftarrow \bm{\theta}_{k}-\eta\dfrac{\partial}{\partial \bm{\theta}}\hat{\mathcal{L}}_{\text{c}}\left(\policy,\pi_{{\bm{\theta}}_k},{\bm{\theta}}_{k+\frac{1}{2}},\nu\right)\Big|_{\bm{\theta}=\bm{\theta}_{k},\nu=\nu_k},
\\
\nu_{k+1}&\leftarrow \left\{\nu_{k}+\eta\dfrac{\partial}{\partial \nu}\hat{\mathcal{L}}_{\text{c}}\left(\policy,\pi_{{\bm{\theta}}_k},{\bm{\theta}}_{k+\frac{1}{2}},\nu\right)\Big|_{\bm{\theta}=\bm{\theta}_{k},\nu=\nu_k}\right\}_{+},
\end{flalign}
where $\eta>0$ is step-size, $\{\cdot\}_{+}$ denotes the positive part, i.e., if $x\leq0$, $\{x\}_{+}=0$, else $\{x\}_{+}=x$. We have shown all the details of the implementation in Algorithm \ref{alg-app-cpu}.

\section{Related Work}

Due to the limitation of space, for more discussions and comparisons, see Appendix \ref{app-related-work} and Table \ref{app-table-com}.

\subsection{Local Policy Search and Lagrangian Approach}

A direct way to solve CMDP (\ref{def:problem-setting}) is to apply \emph{local policy search} \citep{peter2008Reinforcement,2013Safepolicy} over the policy space $\Pi_{\calC}$, i.e.,
\begin{flalign}
\label{local-policy-search}
\pi_{{{\bm{\theta}}}_{k+1}}=\arg\max_{\pi_{{\bm{\theta}}}\in\Pi_{{\bm{\theta}}}}J(\pi_{{\bm{\theta}}}),~\text{s.t}.~J^{c}(\pi_{{\bm{\theta}}})\leq b,~\text{and}~D(\pi_{{\bm{\theta}}},\pi_{{{\bm{\theta}}}_{k}})<\delta,
\end{flalign}
where $\delta$ is a positive scalar, $D(\cdot,\cdot)$ is some distance measure.
For practice, the local policy search (\ref{local-policy-search}) is challenging to implement because it requires evaluation of the constraint function $c$ to determine whether a proposed point $\pi$ is feasible \citep{zhang2020first}. 
Besides, \cite{li2019temporal,end2end2019,Liu_Ding_Liu_2020} provide a local policy search via the barrier function.
The key idea of the proposed CUP is parallel to Barrier functions.
When updating policy according to samples, local policy search (\ref{local-policy-search}) requires off-policy evaluation \citep{AchiamHTA17}, which is very challenging for high-dimension control problem \citep{duan2016benchmarking,yang-ijcai2018-414,yang2020convergence}.

A way to solve CMDP (\ref{def:problem-setting}) is Lagrangian approach that is also known as primal-dual problem:
\begin{flalign}
\label{min-max-search}
(\pi_{\star},\lambda_{\star})=\arg\min_{\lambda\ge0}\max_{\policy\in\Pi_{\bm{\theta}}}
\left\{
J(\policy)-\lambda(J^{c}(\policy)-b)
\right\}.
\end{flalign}
Although extensive canonical algorithms are proposed to solve problem (\ref{min-max-search}), e.g., \citep{liang2018accelerated,tessler2019reward,paternain2019constrained,le2019batch,russel2020robust,satija2020constrained,chen2021primal},
the policy updated by Lagrangian approach may be infeasible w.r.t. CMDP (\ref{def:problem-setting}).
This is hazardous in reinforcement learning when one needs to execute the intermediate policy (which may be unsafe) during training \citep{chow2018lyapunov}.

\textbf{Constrained Policy Optimization (CPO)}. Recently, \cite{AchiamHTA17} suggest to replace the cost constraint with a surrogate cost function which evaluates the constraint $J^{c}(\pi_{{\bm{\theta}}})$ according to the samples collected from the current policy $\pi_{{\bm{\theta}}_k}$, see Eq.(\ref{cpo-objective})-(\ref{trust-region}).
For a given policy $\pi_{{\bm{\theta}}_{k}}$, CPO \citep{AchiamHTA17} updates new policy $\pi_{{\bm{\theta}}_{k+1}}$ as follows:
\begin{flalign}
\label{cpo-objective}
&\pi_{{\bm{\theta}}_{k+1}}=\arg\max_{\pi_{{\bm{\theta}}}\in\Pi_{{\bm{\theta}}}}~~~~\E_{s\sim d^{\rho_0}_{\pi_{{\bm{\theta}}_k}}(\cdot),a\sim\pi_{{\bm{\theta}}}(\cdot|s)}\left[A_{\pi_{{\bm{\theta}}_k}}(s,a)\right]\\
\label{cost-constraint}
&~~~~~~~~\text{s.t.}~~J^{c}(\pi_{{\bm{\theta}}_k})+\frac{1}{1-\gamma}\E_{s\sim d^{\rho_0}_{\pi_{{\bm{\theta}}_k}}(\cdot),a\sim\pi_{{\bm{\theta}}}(\cdot|s)}\left[A^{c}_{\pi_{{\bm{\theta}}_k}}(s,a)\right]\leq b,\\
\label{trust-region}
&~~~~~~~~\bar{D}_{\text{KL}}(\pi_{{\bm{\theta}}},\pi_{{\bm{\theta}}_k})=\E_{s\sim d^{\rho_0}_{\pi_{{\bm{\theta}}_k}}(\cdot)}[\text{KL}(\pi_{{\bm{\theta}}},\pi_{{\bm{\theta}}_k})[s]]\leq\delta.
\end{flalign}
Existing recent works (e.g., \citep{AchiamHTA17,vuong2019supervised,yang2020projection,hanreinforcementl2020,ijcai2020-632,bharadhwaj2021conservative}) try to find some convex approximations to replace the term $A_{\pi_{{\bm{\theta}}_k}}(s,a)$ and $\bar{D}_{\text{KL}}(\pi_{{\bm{\theta}}},\pi_{{\bm{\theta}}_k})$ in Eq.(\ref{app-cpo-objective})-(\ref{app-trust-region}).
%Concretely, according to (\ref{performance-difference-2002}), \cite{AchiamHTA17} suggest to use first-order Taylor expansion to replace (\ref{cpo-objective})-(\ref{cost-constraint}), use second-oder approximation to replace (\ref{trust-region}).
Such first-order and second-order approximations turn a non-convex problem (\ref{app-cpo-objective})-(\ref{app-trust-region}) to be a convex problem, 
it seems to make a simple solution, but this approach results in many error sources and troubles in practice.
Firstly, it still lacks a theory analysis to show the difference between the non-convex problem (\ref{app-cpo-objective})-(\ref{app-trust-region}) and its convex approximation.
Policy optimization is a typical non-convex problem \citep{yang2021sample}; its convex approximation may introduce some error for its original issue.
Secondly, CPO updates parameters according to conjugate gradient \citep{suli2003introduction}, and its solution involves the inverse Fisher information matrix,
which requires expensive computation for each update.
%Later, \cite{yang2020projection} propose projected-based constrained policy optimization (PCPO) that also uses second-order approximation, which also results in an expensive computation.

Instead of using a convex approximation for the objective function, the proposed CUP algorithm improves CPO and PCPO at least two aspects.
Firstly, the CUP directly optimizes the surrogate objective function via the first-order method, and it does not depend on any convex approximation.
Thus, the CUP effectively avoids the expensive computation for the inverse Fisher information matrix.
Secondly, CUP extends the surrogate objective function to GAE. Although \cite{zhang2020first} has used the GAE technique in experiments, to the best of our knowledge, it still lacks a rigorous theoretical analysis involved GAE before we propose CUP.

\section{Experiment}
In this section, we aim to answer the following three issues: 
\begin{itemize}
\item Does CUP satisfy the safety constraints in different environments? Does CUP performs well with different cost limits? 
\item How does CUP compare to the state-of-the-art safe RL algorithms?
\item Does CUP play a sensibility during the hyper-parameters in the tuning processing?
\end{itemize}

We train different robotic agents using five MuJoCo physical simulators \citep{todorov2012mujoco} which are open by OpenAI Gym API \citep{brockman2016openai}, and Safety Gym \citep{Ray2019}. For more details, see Appendix \ref{sec:app-ex-env}.
Baselines includes CPO \citep{AchiamHTA17}, PCPO \citep{yang2020projection}, TRPO Lagrangian (TRPO-L), PPO Lagrangian (PPO-L) and FOCOPS \citep{zhang2020first}. 
TRPO-L and PPO-L are improved by \citep{chow2018lyapunov,Ray2019}, which are based on TRPO \citep{schulman2015trust} and PPO \citep{schulman2017proximal}. 
These two algorithms use the Lagrangian method \citep{bertsekas1997nonlinear}, which applies adaptive penalty coefficients to satisfy the constraint.

\begin{figure*}[t]
  \centering
  \includegraphics[width=14cm, height=4.7cm]{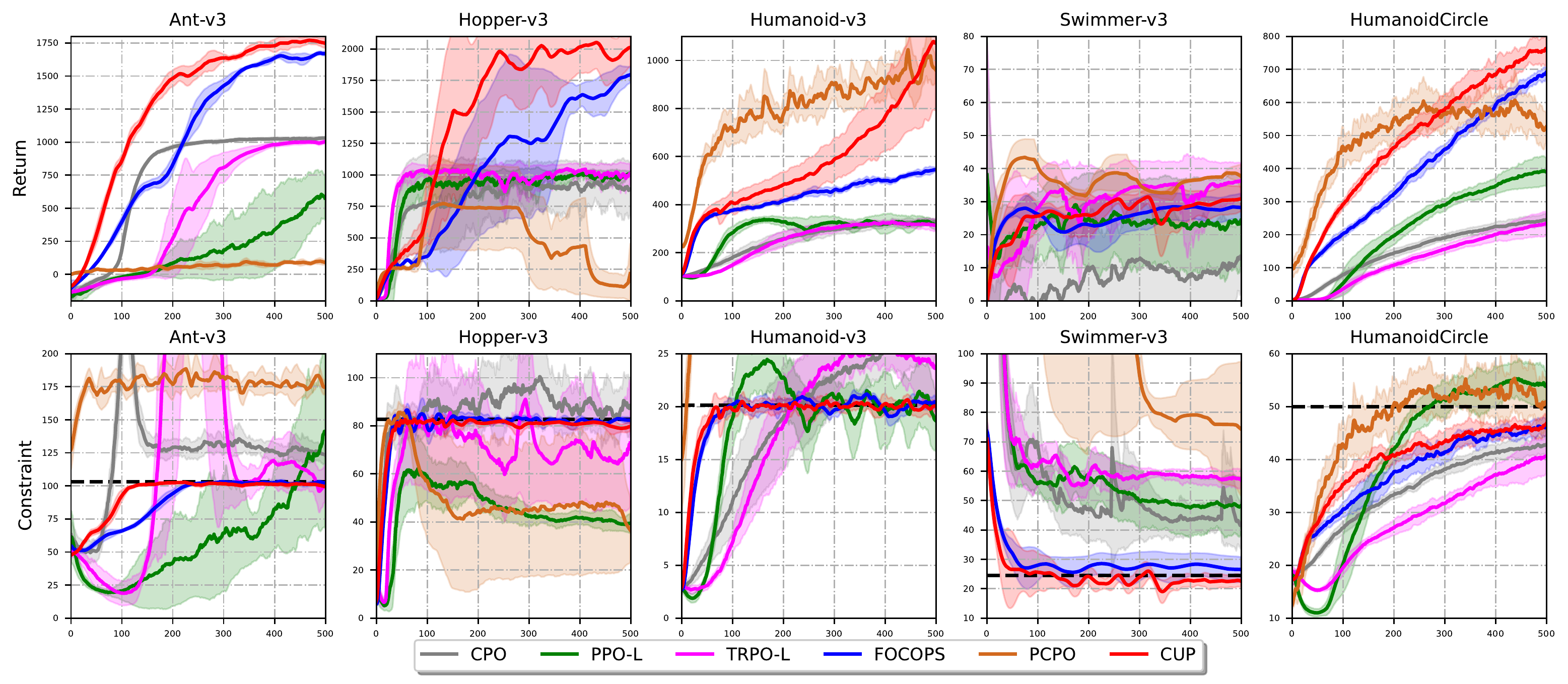}
  \caption{Comparison of CUP to baseline algorithms over 10 seeds on Mujoco.} 
  \label{fig:comparison}
\end{figure*}
\begin{figure*}[t!]
  \centering
  \includegraphics[width=14cm, height=4.7cm]{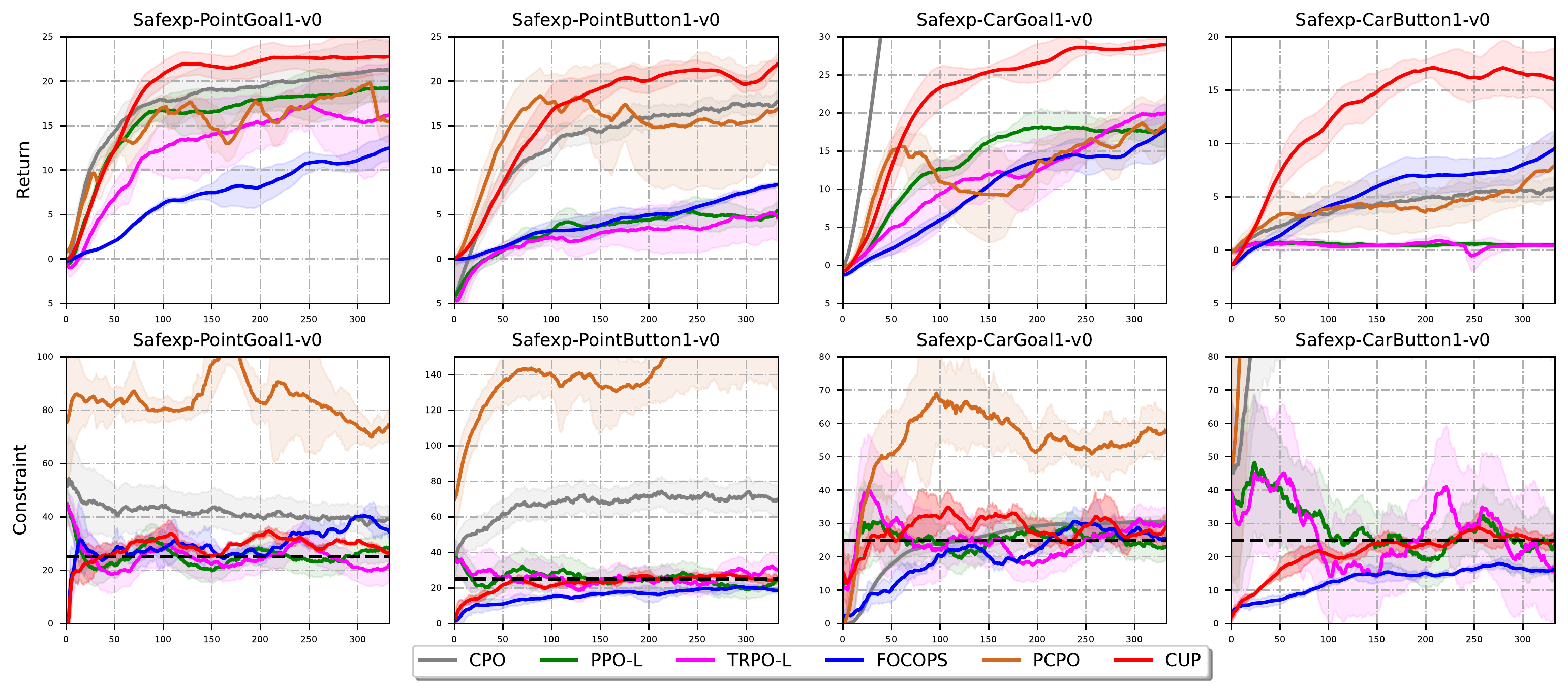}
  \caption{Comparison of CUP to to baseline algorithms over 3 seeds on Satety-Gym.} 
  \label{fig:safety-gym-comparison}
\end{figure*}

% \subsection{Robots with different constrained task}
\subsection{Evaluation CUP and Comparison Analysis}
%The first row shows the learning curves of objective return, and the second row shows the curves of constraint return. The x-axis is training iteration. CUP quickly stabilizes the constraint return around the limit value while converging the objective return to higher values faster.

We have shown the Learning curves for CUP, and other baselines in Figure \ref{fig:comparison}-\ref{fig:safety-gym-comparison}, and Table \ref{tab:mujoco} summarizes the performance of all algorithms.
Results show that CUP quickly stabilizes the constraint return around the limit value while converging the objective return to higher values faster.
In most cases, the traces of constraint from CUP almost coincide with the dashed black line of the limit. 
By contrast, the baseline algorithms frequently suffer from over or under the correction. 

From Figure \ref{fig:comparison}, we know initial policies of the baseline algorithms are not guaranteed to be feasible, such as in Swimmer-v3, while CUP performs the best and keeps safety learning in Swimmer-v3 tasks. In the HumanoidCircle task, all the algorithms learn steadily to obtain a safe policy, except PPO-L.
Additionally, we observed that CUP brings the policy back to the feasible range faster than other baselines in the HumanoidCircle task.
In the Ant-v3 task, only the FOCOPS and the proposed CUP learn safely, and both CPO and TRPO-L violate the safety constraints significantly. Besides, although FOCOPS and CUP converge to a safe policy, CUP obtains a better reward performance than FOCOPS in the Ant-v3 task.
 The result of Figure \ref{fig:safety-gym-comparison} is relatively complex, the initial policies of the CPO and PCPO are not guaranteed to be feasible on both Safexp-PointGoal1-v0 and Safexp-PointButton1-v0. 
 %At the final of the training, both CPO and PCPO cannot learn safe policies, although CPO and PCPO keep a very high reward score.
We think it is not accidental, but it partially provides corroboration of the previous discussions in Appendix \ref{app-related-work}.
Both CPO and PCPO use first-order and second-order approximation to approximate a non-convex problem as a convex problem, 
which inevitably produces a significant deviation from the original RL problem, and it is more serious in large-scale and complex control systems.

From Table \ref{tab:mujoco}, we know although PPO-L achieves a reward of $35.58\pm 5.68$ outperforms CUP in Swimmer-v3, PPO-L obtains a cost of $54.91 \pm 3.93$ that violates the cost limit of $24.5$ significantly, which implies PPO-L learns a dangerous policy under this setting. 
On the other hand, Figure \ref{fig:comparison} has shown that CUP generally gains higher returns than different baselines while enforcing the cost constraint.
Mainly, CUP achieves a reward performance of ${2025.56\pm122.35}$ that significantly outperforms all the baseline algorithms.
Additionally, after equal iterations, CUP performs a greater speed of stabilizing the constraint return around the limit value and is quicker to find feasible policies to gain a more significant objective return.

\begin{table*}[t!]
 \centering
  
 % \begin{footnotesize}
 \vskip 0.1in
 \begin{adjustbox}{width=1\textwidth}
 \begin{tabular}{cc|c|c|c|c|c|c}
 \toprule
 Environment     & {}       & CPO & TRPO-L & PPO-L &PCPO &FOCOPS & \textbf{CUP} \\
  \hline 
 Ant-v3          &Return   & $1030.17\pm8.15$ & $480.86\pm161.05$ & $1012.02\pm17.26$ & $90.83\pm17.66$ & $1662.53\pm17.40$ & $\boldsymbol{1743.66\pm40.5}$ \\
 cost limit: 103        &Constraint     & ${\color{black}120.76\pm4.80}$ & ${\color{black}131.07\pm67.9}$ & ${\color{black}112.45\pm15.48}$&${\color{black}174.80 \pm 5.53}$ & $101.31\pm0.41$ & $99.11\pm0.93$ \\
 \hline 
 Hopper-v3       &Return   & $875.89\pm285.17$ & $1025.49\pm10.68$ & $1010.2\pm61.48$& $214.90\pm101.22$ & $1687.72\pm24.38$ & $\boldsymbol{2025.56\pm122.35}$ \\
 cost limit: 83         &Constraint     & $76.6\pm10.62$ & $40.36\pm4.75$ & ${\color{black}83.28\pm31.19}$ & $36.63 \pm 12.54$& ${\color{black}102.3\pm1.455}$& $79.98\pm2.306$ \\
 \hline 
  Swimmer-v3     &Return  & $18.77\pm6.56$ & $27.35\pm10.07$ & $\boldsymbol{35.58\pm5.68}$& $37.73\pm3.56$ & $28.15\pm4.30$ & $33.38\pm0.54$ \\
 cost limit: 24.5         &Constraint     & ${\color{black}42.07\pm3.31}$ & ${\color{black}49.58\pm7.46}$ & ${\color{black}{54.91\pm3.93}}$ & ${\color{black}{74.39\pm22.71}}$& ${\color{black}26.54\pm4.16}$& $23.31\pm0.052$ \\
 \hline 
 Humanoid-v3    &Return   & $326.95\pm16.00$ & $307.71\pm24.71$& $322.11\pm25.54$ & $962.13\pm57.94$& $542.5\pm4.76$ & $\boldsymbol{1066.83\pm266.12}$ \\
 cost limit: 20.0         &Constraint     & ${\color{black}26.13\pm2.13}$ & $18.22\pm3.04$ & ${\color{black}22.94\pm4.54}$ & ${\color{black}48.66\pm3.52}$ & $20.04\pm0.19$& $19.91\pm0.36$ \\
 \hline 
 Humanoid-Circle  &Return   & $237.54\pm23.20$ & $384.45\pm47.66$ & $243.35\pm37.90$& $525.23 \pm48.32$& $713.04\pm9.25$ & $\boldsymbol{768.65\pm63.70}$ \\
  cost limit: 50.0          &Constraint     & $43.64\pm1.91$ & ${\color{black}53.77\pm1.48}$ & $41.17\pm3.98$ & $50.80\pm4.57$&$47.73\pm0.64$& $48.23\pm0.65$ \\
 \bottomrule
 \end{tabular}
 \end{adjustbox}
  
%  \caption{Bootstrap mean with 100 bootstrap samples of reward/cost return after training on robot with speed limit environments. Cost thresholds are in brackets under the environment names.}
  \caption{Average results for baseline algorithms and CUP over 10 seeds the last 500 iterations.}
 \label{tab:mujoco}
 % \end{footnotesize}
\end{table*}
 \begin{figure*}[t!]
 \centering
 \subfigure[Cost constraint with respect to hyper-parameter $\nu$ (defined in Projection step).]{\includegraphics[width=13.5cm,height=2.5cm]{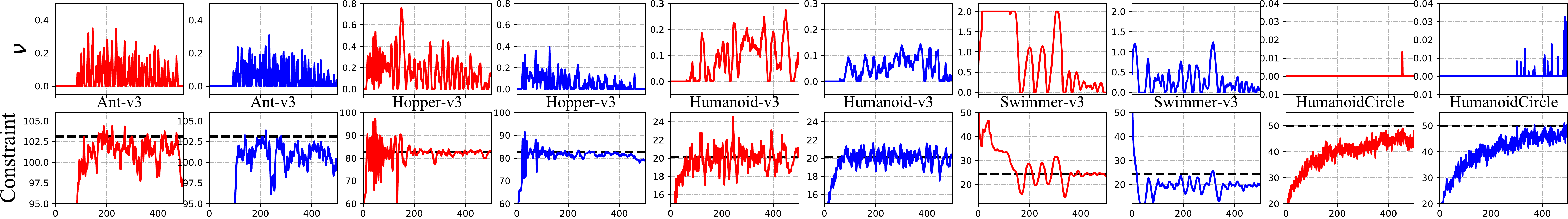}} 
 \subfigure[Performance w.r.t. penalty factor $\alpha$.]{\includegraphics[width=6.8cm,height=2.3cm]{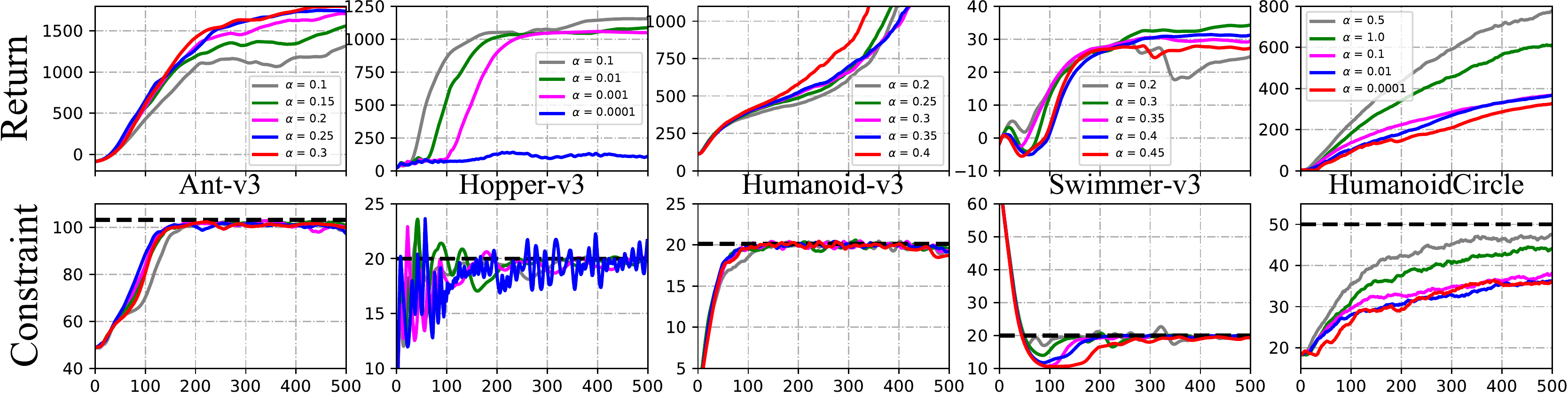}} 
  \subfigure[Performance w.r.t. cost limit.]{\includegraphics[width=6.8cm,height=2.3cm]{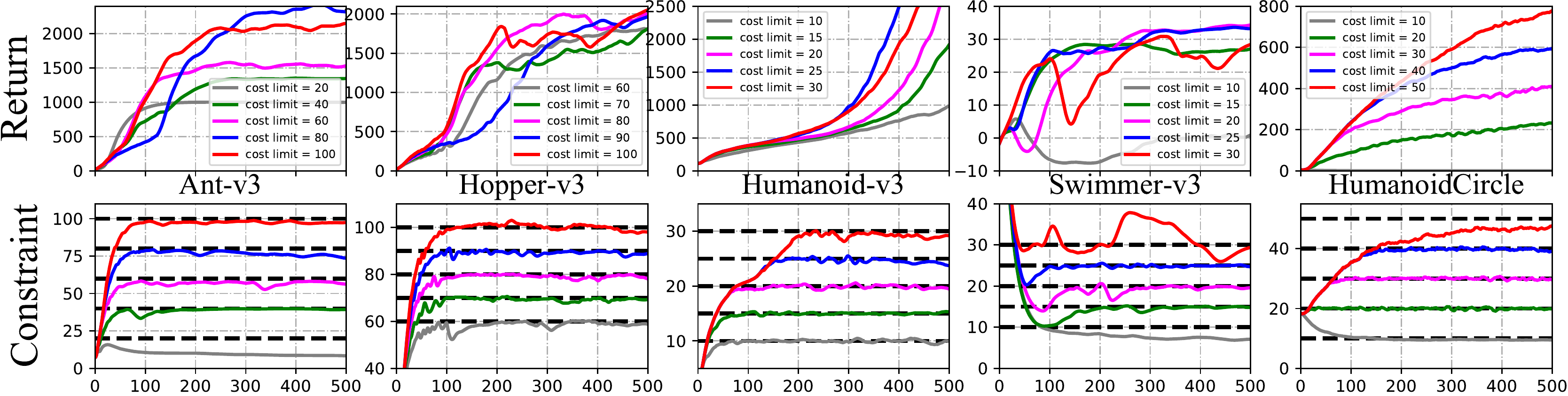}} 
 \caption{Sensitivity analysis for hyper-parameters tuning with respect to $\nu$, $\alpha$ and cost limit.}
 \label{fig-sensitivity-analysis}
\end{figure*}

\subsection{Sensitivity Analysis for Hyper-Parameters Tuning}
Hyper-parameters tuning is necessary to achieve efficient policy improvement and enforce constraints. 
We investigate the performance with respect to the parameters: $\nu$, step-size $\alpha$, and cost limit $b$.

From Figure \ref{fig-sensitivity-analysis} (a), we know if the estimated cost under the target threshold $b$, then $\nu$ keeps calm, which implies $\nu$ is not activated.
Such an empirical phenomenon gives significant expression to the Ant-v3, Humanoid-v3, and Hopper-v3 environments.
While if the estimated cost exceeds the target threshold $b$, $\nu$ will be activated, which requires the agent to play a policy on the safe region.
Those empirical results are consistent with the update rule of $\nu$:
\[
 \nu_{k+1}=\{\nu_{k}+\eta(\hat{J}_{k}^{C}-b)\}_{+},
\]
 which implies the projection of CUP plays an important role for the agent to learn a safe policy.
Additionally, Figure \ref{fig-sensitivity-analysis} (a) provides a visualization way to show the difficulty of different tasks, where the task actives much quantification of $\nu$, such a task is more challenging to obtain a safe policy.
Furthermore, Figure \ref{fig-sensitivity-analysis} (b) shows that the performance of CUP is still very stable for different settings of $\alpha$, and the constraint value of CUP also still fluctuates around the target value. The different value achieved by CUP in different setting $\alpha$ is affected by the simulated environment and constraint thresholds, which are easy to control.
Finally, Figure \ref{fig-sensitivity-analysis} (c) shows that CUP learns safe policies stably under the cost limit thresholds.
We compare policy performance and cost under different cost limit settings. For example, in the Swimmer-v3, we set cost limit $b$ among $\{10,15,20,25,30\}$.
Different cost limit setting implies different difficulty for learning, results show that CUP is scalable to various complex tasks, which means CUP is robust to different cost limit settings for various safe RL tasks.

\section{Conclusion}
This paper proposes the CUP algorithm with a theoretical safety guarantee.
We derive the CUP based on the newly proposed surrogate functions with respect to objectives and constraints and provide a practical implementation of CUP that does not depend on any convex approximation.
We compared CUP against a comprehensive list of safe RL baselines on a wide range of tasks, which shows the effectiveness of CUP where the agent satisfies safe constraints.

\clearpage

\bibliographystyle{named}
\bibliography{example_paper}

\clearpage
\appendix

\section*{Checklist}

%%% BEGIN INSTRUCTIONS %%%
The checklist follows the references.  Please
read the checklist guidelines carefully for information on how to answer these
questions.  For each question, change the default \answerTODO{} to \answerYes{},
\answerNo{}, or \answerNA{}.  You are strongly encouraged to include a {\bf
justification to your answer}, either by referencing the appropriate section of
your paper or providing a brief inline description.  For example:
\begin{itemize}
  \item Did you include the license to the code and datasets? \answerYes{See Section~\ref{gen_inst}.}
  \item Did you include the license to the code and datasets? \answerNo{The code and the data are proprietary.}
  \item Did you include the license to the code and datasets? \answerNA{}
\end{itemize}
Please do not modify the questions and only use the provided macros for your
answers.  Note that the Checklist section does not count towards the page
limit.  In your paper, please delete this instructions block and only keep the
Checklist section heading above along with the questions/answers below.
%%% END INSTRUCTIONS %%%

\begin{enumerate}

\item For all authors...
\begin{enumerate}
  \item Do the main claims made in the abstract and introduction accurately reflect the paper's contributions and scope?
    \answerYes{See Abstract and Section \ref{sec:intro}.}
  \item Did you describe the limitations of your work?
    \answerNo{}
  \item Did you discuss any potential negative societal impacts of your work?
    \answerNo{}
  \item Have you read the ethics review guidelines and ensured that your paper conforms to them?
    \answerYes{We ensure our paper to conform to the ethics review guidelines.}
\end{enumerate}

\item If you are including theoretical results...
\begin{enumerate}
  \item Did you state the full set of assumptions of all theoretical results?
    \answerYes{See Section \ref{sec:generalized-bound} and Section \ref{sec:algorithm}.}
        \item Did you include complete proofs of all theoretical results?
    \answerYes{See Appendix \ref{sec:proof-them-01} and Appendix \ref{sec:app-them2}.}
\end{enumerate}

\item If you ran experiments...
\begin{enumerate}
  \item Did you include the code, data, and instructions needed to reproduce the main experimental results (either in the supplemental material or as a URL)?
    \answerYes{See the URL in the supplementary material for the code, and see \ref{sec:app-ex} for environments of experiments.}
  \item Did you specify all the training details (e.g., data splits, hyperparameters, how they were chosen)?
    \answerYes{See Appendix \ref{sec:app-ex}}
        \item Did you report error bars (e.g., with respect to the random seed after running experiments multiple times)?
     \answerYes{See Appendix \ref{sec:app-ex}}
        \item Did you include the total amount of compute and the type of resources used (e.g., type of GPUs, internal cluster, or cloud provider)?
   \answerYes{See Appendix \ref{sec:app-ex}}
\end{enumerate}

\item If you are using existing assets (e.g., code, data, models) or curating/releasing new assets...
\begin{enumerate}
  \item If your work uses existing assets, did you cite the creators?
    \answerYes{}
  \item Did you mention the license of the assets?
    \answerYes{}
  \item Did you include any new assets either in the supplemental material or as a URL?
    \answerYes{We provide the code for our implementation of CUP in the supplemental material.}
  \item Did you discuss whether and how consent was obtained from people whose data you're using/curating?
    \answerYes{We use open source safe reinforcement learning environments, see Appendix \ref{sec:app-ex-env}}
  \item Did you discuss whether the data you are using/curating contains personally identifiable information or offensive content?
    \answerYes{Our data does not contain any personally identifiable information or offensive content.}
\end{enumerate}

\item If you used crowdsourcing or conducted research with human subjects...
\begin{enumerate}
  \item Did you include the full text of instructions given to participants and screenshots, if applicable?
    \answerNA{}
  \item Did you describe any potential participant risks, with links to Institutional Review Board (IRB) approvals, if applicable?
    \answerNA{}
  \item Did you include the estimated hourly wage paid to participants and the total amount spent on participant compensation?
    \answerNA{}
\end{enumerate}

\end{enumerate}

\section{Notations}

\subsection{Matrix Index}
In this paper,
we use a bold capital letter to denote matrix, e.g., $\bA=(a_{i,j})\in\R^{m\times n}$, and its $(i,j)$-th element denoted as \[\bA[i,j]=:a_{i,j},\] where $1\leq i\leq m,1\leq j\leq n$.
Similarly, a bold  lowercase letter denotes a vector, e.g., $\ba=(a_1,a_2,\cdots,a_n)\in\R^{n}$, and  its $i$-th element denoted as \[\ba[i]=:a_{i},\] where $1\leq i\leq n$.

\subsection{Key Notations of Reinforcement Learning}

For convenience of reference, we list key notations that have be used in this paper.

\subsubsection{Value Function and Dynamic System of MDP.}
\begin{tabular}{|r c p{10.5cm}|}
   \hline
   $\mathbf{r}_{\pi_{\bm \theta}},~R_{\policy}(s),$ &: & $\mathbf{r}_{\pi_{\bm \theta}}\in\R^{|\calS|}$ is the expected vector reward according to $\pi_{\bm \theta}$, i.e., their components are: $\mathbf{r}_{\pi_{\bm \theta}}[s] =\sum_{a\in\mathcal{A}}\sum_{s^{'}\in\mathcal{S}}\pi_{\bm{\theta}}(a|s)r(s'|s,a)=:R_{\policy}(s),~s\in\calS.$\\
   $\bv_{\pi_{\bm{\theta}}},~V_{\policy}(s),$ &: & 
   $\bv_{\pi_{\bm{\theta}}}\in\R^{|\calS|}$ is the vector that stores all the state value functions, and its components are:
$
\bv_{\pi_{\bm{\theta}}}[s]=V_{\pi_{\bm{\theta}}}(s),~s\in\calS.
$
\\
   \hline
$\rho(\cdot),\bm{\rho}$ &: & $\rho(s)$: the initial state distribution of state $s$; $\bm{\rho}\in\R^{|\calS|}$, and $\bm{\rho}[s]=\rho(s)$. \\
   \hline
        $\mathbf{P}_{\pi_{\bm \theta}}$ &: & Single-step state transition matrix by executing $\policy$. \\
        $\Pro_{\policy}(s^{'}|s)$ &: & Single-step state transition probability from $s$ to $s^{'}$ by executing $\policy$, and it is the $(s,s^{'})$-th component of the matrix $\mathbf{P}_{\pi_{\bm \theta}}$, i.e., $\mathbf{P}_{\pi_{\bm \theta}}[s,s^{'}]=\Pro_{\policy}(s^{'}|s)$.\\
        $\mathbb{P}_{\policy}(s_t=s^{'}|s)$&: & The probability of visiting the state $s^{'}$ after $t$
time steps from the state $s$ by executing $\policy$, and it is the $(s,s^{'})$-th component of the matrix $\mathbf{P}_{\pi_{\bm \theta}}$, i.e., $\mathbf{P}^{t}_{\pi_{\bm \theta}}[s,s^{'}]=\mathbb{P}_{\policy}(s_t=s^{'}|s)$.\\
  \hline
$d_{\pi_{\bm {\theta}}}^{s_0}(s),~d_{\pi_{\bm {\theta}}}^{\rho_0}(s)$  &: & The normalized discounted distribution of the future state $s$ encountered starting at $s_0$ by executing $\pi_{\bm {\theta}}$:
$d_{\pi_{\bm {\theta}}}^{s_0}(s)=:(1-\gamma)\sum_{t=0}^{\infty}\gamma^{t}\mathbb{P}_{\pi_{\bm {\theta}}}(s_t=s|s_0).$ Since $s_0\sim\rho(\cdot)$, we define $d_{\pi_{\bm {\theta}}}^{\rho_0}(s)=:\E_{s_0\sim\rho(\cdot)}[d_{\pi_{\bm {\theta}}}^{s_0}(s)]$.\\
$\bd_{\pi_{\bm {\theta}}}^{\rho_0}$&: &  It stores all the normalized discounted state distributions $d_{\pi_{\bm {\theta}}}^{\rho_0}(s)$, $\in\calS$, i.e., $\bd_{\pi_{\bm {\theta}}}^{\rho_0}\in\R^{|\calS|}$, and its components are:
$
\bd_{\pi_{\bm {\theta}}}^{\rho_0}[s]=d_{\pi_{\bm {\theta}}}^{\rho_0}(s).
$\\
   \hline
\end{tabular}

\subsubsection{Extend them to $\lambda$-version. }

\begin{tabular}{|r c p{10.5cm}|}
 \hline
    $\mathbf{P}^{(\lambda)}_{\pi_{\bm \theta}}$&: &$\mathbf{P}^{(\lambda)}_{\pi_{\bm \theta}}=(1-\gamma\lambda)\sum_{{t}=0}^{\infty}(\gamma\lambda)^{{t}}\bP^{{t}+1}_{\policy}.$
    \\
    $\Pro_{\policy}^{(\lambda)}(s^{'}|s)$&: &$\Pro_{\policy}^{(\lambda)}(s^{'}|s)=:\mathbf{P}^{(\lambda)}_{\pi_{\bm \theta}}[s,s^{'}]=(1-\gamma\lambda)\sum_{{t}=0}^{\infty}(\gamma\lambda)^{{t}}\Pro_{\policy}(s_{t+1}=s^{'}|s)$.
    \\
       \hline
   $\mathbf{r}^{(\lambda)}_{\pi_{\bm \theta}},~R^{(\lambda)}_{\pi_{\bm \theta}}(s)$ &: &$\mathbf{r}^{(\lambda)}_{\pi_{\bm \theta}}=\sum_{{t}=0}^{\infty}(\gamma\lambda\bP_{\policy})^{{t}}\mathbf{r}_{\pi_{\bm \theta}};~ R^{(\lambda)}_{\pi_{\bm \theta}}(s)=:
  \mathbf{r}^{(\lambda)}_{\pi_{\bm \theta}}[s].$\\
     \hline
   $\tilde{\gamma}$&: &$\tilde{\gamma}=\dfrac{\gamma(1-\lambda)}{1-\gamma\lambda}$.\\
      \hline
       $d_{\pi_{\bm {\theta}}}^{s_0,\lambda}(s)$&: &$d_{\pi_{\bm {\theta}}}^{s_0,\lambda}(s)=(1-\tilde \gamma)\sum_{t=0}^{\infty}\tilde{\gamma}^{t}\mathbb{P}^{(\lambda)}_{\pi_{\bm {\theta}}}(s_t=s|s_0)$.\\
          $d_{\pi_{\bm {\theta}}}^{\lambda}(s),~\bd_{\pi_{\bm {\theta}}}^{\lambda}$&: & $d_{\pi_{\bm {\theta}}}^{\lambda}(s)=\E_{s_0\sim\rho_{0}(\cdot)}\left[d_{\pi_{\bm {\theta}}}^{s_0,\lambda}(s)\right],~\bd_{\pi_{\bm {\theta}}}^{\lambda}[s]=d_{\pi_{\bm {\theta}}}^{\lambda}(s)$.      
         \\
      \hline
\end{tabular}

\subsubsection{
 TD error w.r.t. any function $\varphi(\cdot)$.
 }

\begin{tabular}{|r c p{10.5cm}|}
 \hline
     $\delta_t^{\varphi}$&: &$ \delta_t^{\varphi}=r(s_{t+1}|s_t,a_t)+\gamma\varphi(s_{t+1})-\varphi(s_{t}).$
     \\
    $ \delta^{\varphi}_{\policy,t}(s)$&: &$ \delta^{\varphi}_{\policy,t}(s)=\E_{s_{t}\sim\Pro_{\policy}(\cdot|s),a_{t}\sim{\policy}(\cdot|s_t),s_{t+1}\sim\Pro(\cdot|s_t,a_t)}\left[\delta_t^{\varphi}\right]$.\\
      $\bm{\delta}^{\varphi}_{\policy,t}$&: & $\bm{\delta}^{\varphi}_{\policy,t}[s]={{\delta}}^{\varphi}_{\policy,t}(s).$
      \\
      $\Delta_{t}^{\varphi}(\policy,\policyy,s)$&: & $\E_{s_{t}\sim\Pro_{\policyy}(\cdot|s),a_{t}\sim{\policyy}(\cdot|s_t),s_{t+1}\sim\Pro(\cdot|s_t,a_t)}\left[\left(\dfrac{\policy(a_t|s_t)}{\policyy(a_t|s_t)}-1\right)\delta_t^{\varphi}\right]$.\\
      $\bm{\Delta}_{t}^{\varphi}(\policy,\policyy)$&: & $\bm{\Delta}_{t}^{\varphi}(\policy,\policyy)[s]=\Delta_{t}^{\varphi}(\policy,\policyy,s)$.
      \\
          \hline
\end{tabular}

\clearpage

\section{Additional Discussion about Related Work}
\label{app-related-work}

This section reviews three typical safe reinforcement learning algorithms: CPO \citep{AchiamHTA17}, PCPO \citep{yang2020projection} and FOCOPS \citep{zhang2020first}.
Those algorithms also use new surrogate functions to replace the objective and constraints, which resembles the proposed CUP algorithm.
The goal is to present the contribution of our work.

\subsection{CPO \citep{AchiamHTA17}}

\label{app-sec:cpo}
For a given policy $\pi_{{\bm{\theta}}_{k}}$, CPO updates new policy $\pi_{{\bm{\theta}}_{k+1}}$ as follows:
%\footnote
%{
%The objective presented by \cite{AchiamHTA17} is $\E_{s\sim d^{\rho_0}_{\pi_{{\bm{\theta}}_k}}(\cdot),{\color{black}{a\sim\pi_{{\bm{\theta}}}(\cdot|s)}}}[A_{\pi_{{\bm{\theta}}_k}}(s,a)]$, which is same as the presentation in Eq.(\ref{cpo-objective}) since 
%$\E_{s\sim d^{\rho_0}_{\pi_{{\bm{\theta}}_k}}(\cdot),{\color{blue}{a\sim\pi_{{\bm{\theta}}_k}(\cdot|s)}}}[\frac{\pi_{{\bm{\theta}}}(a|s)}{\pi_{{\bm{\theta}}_k}(a|s)}A_{\pi_{{\bm{\theta}}_k}}(s,a)]=\E_{s\sim d^{\rho_0}_{\pi_{{\bm{\theta}}_k}}(\cdot),{\color{black}{a\sim\pi_{{\bm{\theta}}}(\cdot|s)}}}[A_{\pi_{{\bm{\theta}}_k}}(s,a)]$.
%}
\begin{flalign}
\label{app-cpo-objective}
&\pi_{{\bm{\theta}}_{k+1}}=\arg\max_{\pi_{{\bm{\theta}}}\in\Pi_{{\bm{\theta}}}}~~~~\E_{s\sim d^{\rho_0}_{\pi_{{\bm{\theta}}_k}}(\cdot),a\sim\pi_{{\bm{\theta}}}(\cdot|s)}\left[A_{\pi_{{\bm{\theta}}_k}}(s,a)\right]\\
\label{app-cost-constraint}
&~~~~~~~~\text{s.t.}~~J^{c}(\pi_{{\bm{\theta}}_k})+\frac{1}{1-\gamma}\E_{s\sim d^{\rho_0}_{\pi_{{\bm{\theta}}_k}}(\cdot),a\sim\pi_{{\bm{\theta}}}(\cdot|s)}\left[A^{c}_{\pi_{{\bm{\theta}}_k}}(s,a)\right]\leq b,\\
\label{app-trust-region}
&~~~~~~~~\bar{D}_{\mathrm{KL}}(\pi_{{\bm{\theta}}},\pi_{{\bm{\theta}}_k})=\E_{s\sim d^{\rho_0}_{\pi_{{\bm{\theta}}_k}}(\cdot)}[\mathrm{KL}(\pi_{{\bm{\theta}}},\pi_{{\bm{\theta}}_k})[s]]\leq\delta.
\end{flalign}
It is impractical to solve the problem (\ref{app-cpo-objective}) directly due to the computational cost.
\citep{AchiamHTA17} suggest to find some convex approximations to replace the term $A_{\pi_{{\bm{\theta}}_k}}(s,a)$ and $\bar{D}_{\mathrm{KL}}(\pi_{{\bm{\theta}}},\pi_{{\bm{\theta}}_k})$ Eq.(\ref{app-cpo-objective})-(\ref{app-trust-region}).

Concretely, according to (\ref{performance-difference-2002}), \cite{AchiamHTA17} suggest to use first-order Taylor expansion of $J(\pi_{\bm{\theta}})$ to replace the objective (\ref{app-cpo-objective}) as follows,
\begin{flalign}
\nonumber
\frac{1}{1-\gamma}\E_{s\sim d_{\pi_{{\bm{\theta}}_k}}^{\rho_0}(\cdot),a\sim\pi_{{\bm{\theta}}_k}(\cdot|s)}\left[\dfrac{\pi_{{\bm{\theta}}}(a|s)}{\pi_{{\bm{\theta}}_k}(a|s)}A_{\pi_{{\bm{\theta}}_k}}(s,a)\right]=J(\pi_{{\bm{\theta}}})-J(\pi_{{\bm{\theta}}_k})\approx({\bm{\theta}}-{\bm{\theta}}_k)^{\top}\nabla_{{\bm{\theta}}} J(\pi_{\bm{\theta}}).
\end{flalign}
Similarly, \cite{AchiamHTA17} use the following approximations to turn the constrained policy optimization (\ref{app-cpo-objective})-(\ref{app-trust-region}) to be a convex problem,
\begin{flalign}
\label{app-contrained-01}
\frac{1}{1-\gamma}\E_{s\sim d_{\pi_{{\bm{\theta}}_k}}^{\rho_0}(\cdot),a\sim\pi_{{\bm{\theta}}_k}(\cdot|s)}&\left[\frac{\pi_{{\bm{\theta}}}(a|s)}{\pi_{{\bm{\theta}}_k}(a|s)}A^{c}_{\pi_{{\bm{\theta}}_k}}(s,a)\right]\approx({\bm{\theta}}-{\bm{\theta}}_k)^{\top}\nabla_{{\bm{\theta}}} J^{c}(\pi_{\bm{\theta}}),\\
\label{app-contrained-02}
\bar{D}_{\mathrm{KL}}(\pi_{{\bm{\theta}}},\pi_{{\bm{\theta}}_k})&\approx({\bm{\theta}}-{\bm{\theta}}_k)^{\top}\mathbf{H}({\bm{\theta}}-{\bm{\theta}}_k),
\end{flalign}
where $\mathbf{H}$ is Hessian matrix of $\bar{D}_{\mathrm{KL}}(\pi_{{\bm{\theta}}},\pi_{{\bm{\theta}}_k})$, i.e., 
\[\mathbf{H}[i,j]=:\dfrac{\partial^2}{\partial {\bm{\theta}}_i \partial {\bm{\theta}}_j}\E_{s\sim d^{\rho_0}_{\pi_{{\bm{\theta}}_k}}(\cdot)}\left[\mathrm{KL}(\pi_{{\bm{\theta}}},\pi_{{\bm{\theta}}_k})[s]\right],\]
Eq.(\ref{app-contrained-02}) is the second-oder approximation of (\ref{app-trust-region}).

Let $ \lambda_{\star},\nu_{\star}$ is the dual solution of the following problem
\[
\lambda_{\star},\nu_{\star}=\arg\max_{\lambda\ge0,\nu\ge0}
\left\{
\dfrac{-1}{2\lambda}
\left(
\bg^{\top}\bH^{-1}\bg-2\nu r+sv^2
\right)
+\nu c-\dfrac{\lambda\delta}{2}
\right\}
;
\]
where
$\bg=\nabla_{\bm{\theta}}\E_{s\sim d^{\rho_0}_{\pi_{{\bm{\theta}}_k}}(\cdot),a\sim\pi_{{\bm{\theta}}}(\cdot|s)}\left[A_{\pi_{{\bm{\theta}}_k}}(s,a)\right]$, $\ba=\nabla_{\bm{\theta}}\E_{s\sim d^{\rho_0}_{\pi_{{\bm{\theta}}_k}}(\cdot),a\sim\pi_{{\bm{\theta}}}(\cdot|s)}\left[A^{c}_{\pi_{{\bm{\theta}}_k}}(s,a)\right]$, $r=\bg^{\top}\bH\ba,s=\ba^{\top}\bH^{-1}\ba$, and $c=J^{c}(\pi_{\bm{\theta}_k})-b$.

Finally, CPO updates parameters according to conjugate gradient as follows:
if approximation to CPO is feasible:
\begin{flalign}
\nonumber
\bm{\theta}_{k+1}=\bm{\theta}_{k}+\frac{1}{\lambda_{\star}}\bH^{-1}(\bg-\nu_{\star}\ba),
\end{flalign}
else,
\[
\bm{\theta}_{k+1}=\bm{\theta}_{k}-\sqrt{\dfrac{2\delta}{\ba^{\top}\bH^{-1}\ba}}\bH^{-1} \ba.
\]

 \subsection{PCPO \citep{yang2020projection}}

Projection-Based Constrained Policy Optimization (PCPO) is an iterative method for optimizing policies in a two-step process: the first step performs a local reward improvement update, while the second step reconciles any constraint violation by projecting the policy back onto the constraint set.

 \textbf{Reward Improvement:}
 \begin{flalign}
 \nonumber
 \pi_{\bm{\theta}_{k+\frac{1}{2}}}=\arg\max_{\pi_{{\bm{\theta}}}\in\Pi_{{\bm{\theta}}}}\E_{s\sim d^{\rho_0}_{\pi_{{\bm{\theta}}_k}}(\cdot),a\sim\pi_{{\bm{\theta}}}(\cdot|s)}\left[A_{\pi_{{\bm{\theta}}_k}}(s,a)\right],\\
 \nonumber
  \text{ s.t.} \bar{D}_{\mathrm{KL}}(\pi_{{\bm{\theta}}},\pi_{{\bm{\theta}}_k})=\E_{s\sim d^{\rho_0}_{\pi_{{\bm{\theta}}_k}}(\cdot)}[\mathrm{KL}(\pi_{{\bm{\theta}}},\pi_{{\bm{\theta}}_k})[s]]\leq\delta;
\end{flalign}
 \textbf{Projection:}
\begin{flalign}
\nonumber
    & \pi_{{\bm{\theta}}_{k+1}}=\arg\min_{\pi_{{\bm{\theta}}}\in\Pi_{{\bm{\theta}}}}~D\left(\pi_{{\bm{\theta}}},\pi_{{\bm{\theta}}_{k+\frac{1}{2}}}\right),\\
 \nonumber    
    \text{s.t.}~ J^{c}(\pi_{{\bm{\theta}}_k})&+\dfrac{1}{1-\gamma}\E_{s\sim d^{\rho_0}_{\pi_{{\bm{\theta}}_k}}(\cdot),a\sim\pi_{{\bm{\theta}}}(\cdot|s)}\left[A^{c}_{\pi_{{\bm{\theta}}_k}}(s,a)\right]\leq b.
\end{flalign}

Then, \cite{yang2020projection} follows CPO \citep{AchiamHTA17} uses convex approximation to original problem, and calculate the update rule as follows,
\[
\bm{\theta}_{k+1}=\bm{\theta}_{k}-\sqrt{\dfrac{2\delta}{\bg^{\top}\bH^{-1}\bg}}\bH^{-1} \bg
-\max
\left(0,
\dfrac
{
\sqrt{\dfrac{2\delta}{\bg^{\top}\bH^{-1}\bg}}\ba^{\top}\bH^{-1} \bg+c
}
{\ba^{\top}\bL^{-1}\ba}
\right)\bL^{-1}\ba,
\]
where $\bL=\bI$ if $D$ is $\ell_2$-norm, and $\bL=\bH$ if $D$ is KL-divergence.

\subsection{FOCOPS \citep{zhang2020first}}

\cite{zhang2020first} propose the First Order Constrained Optimization in Policy Space (FOCOPS) that is a two-step approach. We present it as follows.

\textbf{Step1: Finding the optimal update policy.}
Firstly, for a given policy $\pi_{\bm{\theta}k}$, we find an optimal update policy $\pi^{\star}$ by solving the optimization problem (\ref{app-cpo-objective})-(\ref{app-trust-region}) in the non-parameterized policy space.
\begin{flalign}
\label{app-non-parameterized-01}
&\pi^{\star}=\arg\max_{\pi\in\Pi}~~~~\E_{s\sim d^{\rho_0}_{\pi_{{\bm{\theta}}_k}}(\cdot),a\sim\pi(\cdot|s)}\left[A_{\pi_{{\bm{\theta}}_k}}(s,a)\right]\\
\label{app-non-parameterized-02}
&~~~~~~~~\text{s.t.}~~J^{c}(\pi_{{\bm{\theta}}_k})+\frac{1}{1-\gamma}\E_{s\sim d^{\rho_0}_{\pi_{{\bm{\theta}}_k}}(\cdot),a\sim\pi(\cdot|s)}\left[A^{c}_{\pi_{{\bm{\theta}}_k}}(s,a)\right]\leq b,\\
\label{app-non-parameterized-03}
&~~~~~~~~\bar{D}_{\mathrm{KL}}(\pi_{{\bm{\theta}}},\pi_{{\bm{\theta}}_k})=\E_{s\sim d^{\rho_0}_{\pi_{{\bm{\theta}}_k}}(\cdot)}[\mathrm{KL}(\pi,\pi_{{\bm{\theta}}_k})[s]]\leq\delta.
\end{flalign}
If $\pi_{\bm{\theta}_k}$ is feasible, then the optimal policy for (\ref{app-non-parameterized-01})-(\ref{app-non-parameterized-03}) takes the following form:
\begin{flalign}
\label{optimal-solu}
\pi^{\star}(a|s)=\dfrac{\pi_{\bm{\theta}_k}(a|s)}{Z_{\lambda,\nu}(s)}\exp
\left(\dfrac{1}{\lambda}
\left(
A_{\pi_{\bm{\theta}_k}}(s,a)-\nu A^{c}_{\pi_{\bm{\theta}_k}}(s,a)
\right)
\right),
\end{flalign}
where $Z_{\lambda,\nu}(s)$ is the partition function which ensures (\ref{optimal-solu}) is a valid probability distribution, $\lambda$ and $\nu$ are solutions to the optimization problem:
\[
\min_{\lambda,\nu\ge0}\lambda\nu+\nu \tilde{b}+\lambda \E_{s\sim d^{\rho_0}_{\pi_{{\bm{\theta}}_k}}(\cdot),a\sim\pi^{\star}(\cdot|s)}\left[Z_{\lambda,\nu}(s)\right],
\]
the term $\tilde{b}=(1-\gamma)(b-J^{c}(\pi_{\bm{\theta}_k}))$.

\textbf{Step 2: Projection}
Then, we project the policy found in the previous step back into the parameterized policy space $\Pi_{\bm{\theta}}$ by solving for the closest policy $\policy\in\Pi_{\bm{\theta}}$ to $\pi^{\star}$ in order to obtain $\pi_{\bm{\theta}_{k+1}}$:
\[
\bm{\theta}_{k+1}=\arg\min_{\bm\theta} \E_{s\sim d^{\rho_0}_{\pi_{{\bm{\theta}}_k}}(\cdot)}[\mathrm{KL}(\pi_{{\bm{\theta}}},\pi^{\star})[s]].
\]

\begin{landscape}
%\global\pdfpageattr\expandafter{\the\pdfpageattr/Rotate 90}
\renewcommand{\arraystretch}{1.5}
\begin{table}[t]
 \caption{Comparison of some safe reinforcement algorithms.}
 \label{tab:comp}
 \centering
 \small
 \setlength{\extrarowheight}{0.04cm}
 \begin{tabularx}{\linewidth}{m{2.5cm}|>{\centering}m{8cm}|>{\centering}m{7.5cm}|m{3cm}}
  \toprule
  Algorithm & Optimization problem & Implementation & Remark \\
  \midrule
  \makecell{CPO\\ \citep{AchiamHTA17}}
   & $\pi_{\bm{\theta}_{k+1}}=\arg\max_{\pi_{{\bm{\theta}}}\in\Pi_{{\bm{\theta}}}}\E_{s\sim d^{\rho_0}_{\pi_{{\bm{\theta}}_k}}(\cdot),a\sim\pi_{{\bm{\theta}}}(\cdot|s)}\left[A_{\pi_{{\bm{\theta}}_k}}(s,a)\right]$,
   \newline
    s.t. $J^{c}(\pi_{{\bm{\theta}}_k})+\E_{s\sim d^{\rho_0}_{\pi_{{\bm{\theta}}_k}}(\cdot),a\sim\pi_{{\bm{\theta}}}(\cdot|s)}\left[A^{c}_{\pi_{{\bm{\theta}}_k}}(s,a)\right]\leq b $,
    \newline
   $ \bar{D}_{\mathrm{KL}}(\pi_{{\bm{\theta}}},\pi_{{\bm{\theta}}_k})=\E_{s\sim d^{\rho_0}_{\pi_{{\bm{\theta}}_k}}(\cdot)}[\mathrm{KL}(\pi_{{\bm{\theta}}},\pi_{{\bm{\theta}}_k})[s]]\leq\delta$.
    & $\bm{\theta}_{k+1}=\arg\max_{\bm\theta}~\bg^{\top}(\bm{\theta}-\bm{\theta}_{k})$, 
    \newline
     s.t. $c+\mathbf{b}^{\top}(\bm{\theta}-\bm{\theta}_k)\leq0$,
     \newline
   $ \dfrac{1}{2}(\bm{\theta}-\bm{\theta}_k)^{\top}\mathbf{H}(\bm{\theta}-\bm{\theta}_k)\leq\delta$.
    &Convex Implementation
    \\
   \hline 
    \makecell{PCPO\\ \citep{yang2020projection}}
   &Reward Improvement $\pi_{\bm{\theta}_{k+\frac{1}{2}}}=\arg\max_{\pi_{{\bm{\theta}}}\in\Pi_{{\bm{\theta}}}}\E_{s\sim d^{\rho_0}_{\pi_{{\bm{\theta}}_k}}(\cdot),a\sim\pi_{{\bm{\theta}}}(\cdot|s)}\left[A_{\pi_{{\bm{\theta}}_k}}(s,a)\right]$,
    \newline
   s.t. $\bar{D}_{\mathrm{KL}}(\pi_{{\bm{\theta}}},\pi_{{\bm{\theta}}_k})=\E_{s\sim d^{\rho_0}_{\pi_{{\bm{\theta}}_k}}(\cdot)}[\mathrm{KL}(\pi_{{\bm{\theta}}},\pi_{{\bm{\theta}}_k})[s]]\leq\delta$;
    \newline
    Projection
     \newline
    $\pi_{{\bm{\theta}}_{k+1}}=\arg\min_{\pi_{{\bm{\theta}}}\in\Pi_{{\bm{\theta}}}}~D\left(\pi_{{\bm{\theta}}},\pi_{{\bm{\theta}}_{k+\frac{1}{2}}}\right)$,
    \newline
    s.t. $J^{c}(\pi_{{\bm{\theta}}_k})+\dfrac{1}{1-\gamma}\E_{s\sim d^{\rho_0}_{\pi_{{\bm{\theta}}_k}}(\cdot),a\sim\pi_{{\bm{\theta}}}(\cdot|s)}\left[A^{c}_{\pi_{{\bm{\theta}}_k}}(s,a)\right]\leq b $.
    & Reward Improvement
     \newline
    $\bm{\theta}_{k+\frac{1}{2}}=\arg\max_{\bm\theta}~\bg^{\top}(\bm{\theta}-\bm{\theta}_{k})$, 
     \newline
     s.t.$\dfrac{1}{2}(\bm{\theta}-\bm{\theta}_k)^{\top}\mathbf{H}(\bm{\theta}-\bm{\theta}_k)\leq\delta$;
    \newline
    Projection
     \newline
      ${{\bm{\theta}}_{k+1}}=\arg\min_{\bm\theta}\dfrac{1}{2}(\bm{\theta}-\bm{\theta}_k)^{\top}\mathbf{L}(\bm{\theta}-\bm{\theta}_k)$,
       \newline 
     s.t. $c+\mathbf{b}^{\top}(\bm{\theta}-\bm{\theta}_k)\leq0$.
    &Convex Implementation\\  
   \hline 
    \makecell{FOCOPS\\ \citep{zhang2020first}}
   &Optimal update policy
    \newline
    $\pi^{\star}=\arg\max_{\pi\in\Pi}\E_{s\sim d^{\rho_0}_{\pi_{{\bm{\theta}}_k}}(\cdot),a\sim\pi(\cdot|s)}\left[A_{\pi_{{\bm{\theta}}_k}}(s,a)\right]$,
   \newline
    s.t. $J^{c}(\pi_{{\bm{\theta}}_k})+\E_{s\sim d^{\rho_0}_{\pi_{{\bm{\theta}}_k}}(\cdot),a\sim\pi(\cdot|s)}\left[A^{c}_{\pi_{{\bm{\theta}}_k}}(s,a)\right]\leq b $,
    \newline
   $ \bar{D}_{\mathrm{KL}}(\pi_{{\bm{\theta}}},\pi_{{\bm{\theta}}_k})=\E_{s\sim d^{\rho_0}_{\pi_{{\bm{\theta}}_k}}(\cdot)}[\mathrm{KL}(\pi,\pi_{{\bm{\theta}}_k})[s]]\leq\delta$;
   \newline
   Projection
    \newline
    $\pi_{{\bm{\theta}}_{k+1}}=\arg\min_{\pi_{{\bm{\theta}}}\in\Pi_{{\bm{\theta}}}}~\E_{s\sim d^{\rho_0}_{\pi_{{\bm{\theta}}_k}}(\cdot)}[\mathrm{KL}(\pi_{{\bm{\theta}}},\pi^{\star})[s]]$.
    & Optimal update policy
    \newline
     \newline
    $\pi^{\star}(a|s)=\frac{\pi_{\bm{\theta}_k}(a|s)}{Z_{\lambda,\nu}(s)}\exp\left(\frac{1}{\lambda}\left(A_{\pi_{\bm{\theta}_k}}(s,a)-\nu A^{c}_{\pi_{\bm{\theta}_k}}(s,a)\right)\right)$;
     \newline
      \newline
      Projection
       \newline
    $\bm{\theta}_{k+1}=\arg\min_{\bm\theta} \E_{s\sim d^{\rho_0}_{\pi_{{\bm{\theta}}_k}}(\cdot)}[\mathrm{KL}(\pi_{{\bm{\theta}}},\pi^{\star})[s]].$
    & Non-Convex Implementation
    \\
     \hline 
    \makecell{CUP\\ (Our Work)}
    &
    Policy Improvement
    \begin{flalign} 
    \nonumber
    \pi_{{\bm{\theta}}_{k+\frac{1}{2}}}=\arg\max_{\pi_{{\bm{\theta}}}\in\Pi_{{\bm{\theta}}}}
\bigg\{
\E_{s\sim{d}_{\pi_{\bm{\theta}_k}}^{\lambda}(\cdot),a\sim\policy(\cdot|s)}
\left[
A^{\mathrm{GAE}(\gamma,\lambda)}_{{\pi_{\bm{\theta}_k}}}(s,a)\right]
\\
 \nonumber
-\alpha_k
\sqrt{
\E_{s\sim{d}_{{\pi_{\bm{\theta}_k}}}^{\lambda}(\cdot)}\left[\mathrm{KL}(\pi_{\bm{\theta}_k},\policy)[s]\right]}
\bigg\},
\end{flalign}
Projection
 \begin{flalign}
\nonumber
\pi_{{\bm{\theta}}_{k+1}}=\arg\min_{\pi_{{\bm{\theta}}}\in\Pi_{{\bm{\theta}}}}~D\Big(\pi_{{\bm{\theta}}},\pi_{{\bm{\theta}}_{k+\frac{1}{2}}}\Big),~~~~~~~~~~~~~~~~~\\
\nonumber
 \text{s.t.} J^{c}(\pi_{{\bm{\theta}}_k})+\dfrac{1}{1-\tilde\gamma}\E_{s\sim{d}_{\pi_{\bm{\theta}_k}}^{\lambda}(\cdot),a\sim\policy(\cdot|s)}
\left[A^{\mathrm{GAE}(\gamma,\lambda)}_{{\pi_{\bm{\theta}_k}},C}(s,a)\right]\\
\nonumber
+\beta_k\sqrt{\E_{s\sim{d}_{{\pi_{\bm{\theta}_k}}}^{\lambda}(\cdot)}\left[\mathrm{KL}(\pi_{\bm{\theta}_k},\policy)[s]\right]}\leq b. 
\end{flalign}
    &
    Policy Improvement
     \newline
    \begin{flalign}
    \nonumber
{\bm{\theta}}_{k+\frac{1}{2}}=\arg\max_{{\bm{\theta}}}\Bigg\{\frac{1}{T}\sum_{t=1}^{T}\frac{\pi_{\bm{\theta}}(a_{t}|s_{t})}{{\pi_{\bm{\theta}_{k}}}(a_{t}|s_{t})}\hat{A}_{t}~~~~~~~~~~~~~~~~~~~~~~~~~~~~~\\
\nonumber
-\alpha\sqrt{\frac{1}{T}\sum_{t=1}^{T}\mathrm{KL}(\pi_{\bm{\theta}_{k}}(\cdot|s_{t}),\pi_{{\bm{\theta}}}(\cdot|s_{t}))}\Bigg\};
\end{flalign}
Projection
 \newline
\begin{flalign}
\nonumber
{\bm{\theta}}_{k+1}=\arg\min_{{\bm{\theta}}} \dfrac{1}{T}\sum_{t=1}^{T}
\bigg\{
\mathrm{KL}\left({\pi_{\bm{\theta}_{k+\frac{1}{2}}}}(\cdot|s_{t}),\pi_{{\bm{\theta}}}(\cdot|s_{t})\right)\\
\nonumber
+\nu_k
\dfrac{1-\gamma\lambda}{1-\gamma}\dfrac{\pi_{\bm{\theta}}(a_{t}|s_{t})}{\pi_{{\bm{\theta}}_k}(a_{t}|s_{t})}\hat{A}^{C}_{t}
\bigg\}.
\end{flalign}
    &
    Non-Convex Implementation
    \\
    \bottomrule
 \end{tabularx}
 \label{app-table-com}
\end{table}
\end{landscape}

\clearpage

\section{Constrained Update Projection Algorithm}
\label{sec-app-cpu}

\subsection{Practical Implementation of Performance Improvement}

\subsubsection{Sample-based Performance Improvement}

Let the trajectory $\{(s_t,a_t,r_{t+1},c_{t+1})\}_{t=1}^{T}$ be sampled according to $\pi_{\bm{\theta}_k}$, then we denote the empirical KL-divergence with respect to $\policy$ and $\policyy$ as follows,
\[\hat{D}_{\mathtt{KL}}(\policy,\policyy)=\dfrac{1}{T}\sum_{t=1}^{T}\mathtt{KL}(\policy(a_t|s_t),\policyy(a_t|s_t)).\]

We defined the following $\hat{\mathcal{L}}_{\text{R}}(\policy,\pi_{{\bm{\theta}}_k})$,
\begin{flalign}
\label{loss-performance-improvement}
\hat{\mathcal{L}}_{\text{R}}(\policy,\pi_{{\bm{\theta}}_k})&=\dfrac{1}{T}\sum_{t=1}^{T}\dfrac{\pi_{\bm{\theta}}(a_t|s_t)}{\pi_{{\bm{\theta}}_k}(a_t|s_t)}\hat{A}_t-\alpha_k\sqrt{\hat{D}_{\mathtt{KL}}(\pi_{{\bm{\theta}}_k},\pi_{{\bm{\theta}}})},
\end{flalign}
where $\hat{A}_t$ is an estimator of $A^{\mathtt{GAE}(\gamma,\lambda)}_{\pi_{{\bm{\theta}}_k}}(s,a)$.
The term $\hat{\mathcal{L}}_{\text{R}}(\policy,\pi_{{\bm{\theta}}_k})$ (\ref{loss-performance-improvement}) is an estimator of the next expectation that appears in (\ref{performance-improvement})
\[
\E_{s \sim{d}_{\pi_{\bm{\theta}_k}}^{\lambda}(\cdot),~a\sim\pi_{\bm{\theta}_k}(\cdot|s)}\left[\frac{\pi_{\bm{\theta}}(a|s)}{\pi_{{\bm{\theta}}_k}(a|s)}
A^{\mathtt{GAE}(\gamma,\lambda)}_{{\pi_{\bm{\theta}_k}}}(s,a)\right]
-\alpha_k
\sqrt{
\E_{s\sim{d}_{{\pi_{\bm{\theta}_k}}}^{\lambda}(\cdot)}\left[\mathtt{KL}(\pi_{\bm{\theta}_k},\policy)[s]\right]}.
\]
Then we implement the performance improvement as follows,
\begin{flalign}
\label{app-perfor-practical-01}
\pi_{{\bm{\theta}}_{k+\frac{1}{2}}}&=\arg\max_{\pi_{\bm{\theta}}\in\Pi_{\theta}}\left\{\hat{\mathcal{L}}_{\text{R}}(\policy,\pi_{{\bm{\theta}}_k})\right\}.
\end{flalign}

\subsubsection{Clipped Surrogate Objective}
\label{sec-clip}

How can the implementation (\ref{app-perfor-practical-01}) take the biggest possible improvement step on a policy using the data we currently have, without stepping so far that we accidentally cause performance collapse? Now, we present a clip implementation for policy improvement, which is very efficient in practice.

Instead of the previous policy improvement (\ref{performance-improvement}), according to PPO \cite{schulman2017proximal}, we update the policy as follows,
\begin{flalign}
\nonumber
\pi_{{\bm{\theta}}_{k+\frac{1}{2}}}&=\arg\max_{\pi_{{\bm{\theta}}}\in\Pi_{{\bm{\theta}}}}
\left\{
\E_{s\sim{d}_{\pi_{\bm{\theta}_k}}^{\lambda}(\cdot),~a\sim\policy(\cdot|s)}\left[\calL_{\text{clip}}\left(s,a,\policy,\pi_{\bm{\theta}_{k}},\epsilon\right)\right]
\right\},
\end{flalign}
where the the objective $\calL_{\text{clip}}$ is defined as follows,
\begin{flalign}
\label{clip-objective}
\calL_{\text{clip}}\left(s,a,\policy,\pi_{\bm{\theta}_{k}},\epsilon\right)
=&\min\left\{
\dfrac{\pi_{\bm{\theta}}(a|s)}{\pi_{{\bm{\theta}}_k}(a|s)}A^{\mathtt{GAE}(\gamma,\lambda)}_{{\pi_{\bm{\theta}_k}}}(s,a),~
\text{clip}\left(
\dfrac{\pi_{\bm{\theta}}(a|s)}{\pi_{{\bm{\theta}}_k}(a|s)},1-\epsilon,1+\epsilon
\right)A^{\mathtt{GAE}(\gamma,\lambda)}_{{\pi_{\bm{\theta}_k}}}(s,a)
\right\},
\end{flalign}
$\epsilon$ is a hyperparameter which roughly says how far away the policy $\pi_{{\bm{\theta}}_{k+\frac{1}{2}}}$ is allowed to go from the current policy $\pi_{{\bm{\theta}}_{k}}$.
The objective $\calL_{\text{clip}}\left(s,a,\policy,\pi_{\bm{\theta}_{k}},\epsilon\right)$ is complex, we present the insights of this clip mechanism \cite{schulman2017proximal} to make CUP learn stably.

\textbf{Positive GAE: $A^{\mathtt{GAE}(\gamma,\lambda)}_{{\pi_{\bm{\theta}_k}}}(s,a)>0$}. Firstly, we consider the positive advantage, which implies the objective $\calL_{\text{clip}}\left(s,a,\policy,\pi_{\bm{\theta}_{k}},\epsilon\right)$ reduces to 
\begin{flalign}
\calL_{\text{clip}}\left(s,a,\policy,\pi_{\bm{\theta}_{k}},\epsilon\right)
=\min\left\{
\dfrac{\pi_{\bm{\theta}}(a|s)}{\pi_{{\bm{\theta}}_k}(a|s)}, 1+\epsilon
\right\}A^{\mathtt{GAE}(\gamma,\lambda)}_{{\pi_{\bm{\theta}_k}}}(s,a).
\end{flalign}
Since $A^{\mathtt{GAE}(\gamma,\lambda)}_{{\pi_{\bm{\theta}_k}}}(s,a)>0$, to improve the performance, we need to increase $\policy$.
The $\min\{\cdot\}$ operator determines the quantization how much the CUP improves. If the policy improves too much such that \[\policy(a|s)>(1+\epsilon)\pi_{\bm{\theta}_{k}}(a|s),\]
The $\min\{\cdot\}$ operator hit the objective with a ceiling of $(1+\epsilon)A^{\mathtt{GAE}(\gamma,\lambda)}_{{\pi_{\bm{\theta}_k}}}(s,a).$
The clip technique requires CUP learns a policy $\pi_{{\bm{\theta}}_{k+\frac{1}{2}}}$ does not benefit by going far away from the current policy $\pi_{{\bm{\theta}}_{k}}$.

\textbf{Negative GAE: $A^{\mathtt{GAE}(\gamma,\lambda)}_{{\pi_{\bm{\theta}_k}}}(s,a)<0$}. Let us consider the negative advantage, which implies the objective $\calL_{\text{clip}}\left(s,a,\policy,\pi_{\bm{\theta}_{k}},\epsilon\right)$ reduces to 
\begin{flalign}
\calL_{\text{clip}}\left(s,a,\policy,\pi_{\bm{\theta}_{k}},\epsilon\right)
=\max\left\{
\dfrac{\pi_{\bm{\theta}}(a|s)}{\pi_{{\bm{\theta}}_k}(a|s)}, 1-\epsilon
\right\}A^{\mathtt{GAE}(\gamma,\lambda)}_{{\pi_{\bm{\theta}_k}}}(s,a).
\end{flalign}
Since $A^{\mathtt{GAE}(\gamma,\lambda)}_{{\pi_{\bm{\theta}_k}}}(s,a)<0$, to improve the performance, we need to decrease the policy $\policy(a|s)$.
The $\max\{\cdot\}$ operator determines the quantization how much the CUP improves. If the policy decrease too much such that \[\policy(a|s)<(1-\epsilon)\pi_{\bm{\theta}_{k}}(a|s),\]
The $\max\{\cdot\}$ operator hit the objective with a ceiling of $(1-\epsilon)A^{\mathtt{GAE}(\gamma,\lambda)}_{{\pi_{\bm{\theta}_k}}}(s,a).$
Thus, similar to the positive GAE the clip technique requires CUP learns a policy $\pi_{{\bm{\theta}}_{k+\frac{1}{2}}}$ does not benefit by going far away from the current policy $\pi_{{\bm{\theta}}_{k}}$.

\subsubsection{Learning from Sampling}

To short the expression, we introduce a function $g(\epsilon,A)$ as follows,
\begin{flalign}
\nonumber
g(\epsilon,A)
=
\begin{cases}
(1+\epsilon)A~~A\ge0
\\
(1+\epsilon)A~~A<0.
\end{cases}
\end{flalign}
Then we rewrite the objective (\ref{clip-objective}) as follows,
\begin{flalign}
\label{clip-objective-01}
\calL_{\text{clip}}\left(s,a,\policy,\pi_{\bm{\theta}_{k}},\epsilon\right)=\min\left\{
\dfrac{\pi_{\bm{\theta}}(a|s)}{\pi_{{\bm{\theta}}_k}(a|s)}A^{\mathtt{GAE}(\gamma,\lambda)}_{{\pi_{\bm{\theta}_k}}}(s,a),
g\left(\epsilon,A^{\mathtt{GAE}(\gamma,\lambda)}_{{\pi_{\bm{\theta}_k}}}(s,a)\right)
\right\}.
\end{flalign}

Recall the trajectory $\{(s_t,a_t,r_{t+1},c_{t+1})\}_{t=1}^{T}$ be sampled according to $\pi_{\bm{\theta}_k}$,
we defined the following $\hat{\mathcal{L}}_{\text{clip}}(\policy,\pi_{{\bm{\theta}}_k})$,
\begin{flalign}
\label{loss-clip-performance-improvement}
\hat{\mathcal{L}}_{\text{clip}}(\policy,\pi_{{\bm{\theta}}_k},\epsilon)&=
\min\left\{
\dfrac{1}{T}\sum_{t=1}^{T}\dfrac{\pi_{\bm{\theta}}(a_t|s_t)}{\pi_{{\bm{\theta}}_k}(a_t|s_t)}\hat{A}_t-
g\left(\epsilon,\dfrac{1}{T}\sum_{t=1}^{T}\hat{A}_t\right)
\right\}
,
\end{flalign}
where $\hat{A}_t$ is an estimator of $A^{\mathtt{GAE}(\gamma,\lambda)}_{\pi_{{\bm{\theta}}_k}}(s,a)$.
The term $\hat{\mathcal{L}}_{\text{clip}}(\policy,\pi_{{\bm{\theta}}_k},\epsilon)$ (\ref{loss-clip-performance-improvement}) is an estimator of the next expectation that appears in (\ref{clip-objective-01}).

Then we implement the performance improvement as follows,
\begin{flalign}
\nonumber
\pi_{{\bm{\theta}}_{k+\frac{1}{2}}}&=\arg\max_{\pi_{\bm{\theta}}\in\Pi_{\theta}}\left\{\hat{\mathcal{L}}_{\text{clip}}(\policy,\pi_{{\bm{\theta}}_k},\epsilon)\right\},
\end{flalign}
i.e., we obtain the parameter ${\bm{\theta}}_{k+\frac{1}{2}}$ according to 
\begin{flalign}
\nonumber
{\bm{\theta}}_{k+\frac{1}{2}}&={\bm{\theta}}_{k}+\eta_1\dfrac{\partial}{\partial{\bm{\theta}}}\hat{\mathcal{L}}_{\text{clip}}(\policy,\pi_{{\bm{\theta}}_k},\epsilon)\Big|_{\bm{\theta}=\bm{\theta}_{k}},
\end{flalign}
where $\eta_1$ is step-size.

\subsection{Practical Implementation of Projection}
\label{app-sec-practical-ip}

Recall (\ref{projection}), we introduce the new surrogate function with respected to cost function as follows,
\[
C_{\policyy}(\policy,\beta)=J^{c}(\policyy)+
\dfrac{1}{1-\tilde\gamma}\E_{s\sim{d}_{\policyy}^{\lambda}(\cdot),a\sim\policy(\cdot|s)}
\left[
A^{\mathtt{GAE}(\gamma,\lambda)}_{\policyy,C}(s,a)+\beta
\sqrt{\E_{s\sim{d}_{\policyy}^{\lambda}(\cdot)}\left[\mathtt{KL}(\policyy,\policy)[s]\right]}
\right],
\]
where $\beta$ is adaptive to the term $\frac{\sqrt{2}\tilde{\gamma}\left(\gamma\lambda(|\calS|-1)+1\right)\epsilon^{V}_{\policy}(\policyy)}{(1-\gamma\lambda)}$.
Now, we rewrite the projection step (\ref{projection}) as follows,
\begin{flalign}
\label{eq:projection}
\pi_{{\bm{\theta}}_{k+1}}=&\arg\min_{\pi_{{\bm{\theta}}}\in\Pi_{{\bm{\theta}}}}~D\left(\pi_{{\bm{\theta}}},\pi_{{\bm{\theta}}_{k+\frac{1}{2}}}\right),
~~\text{s.t.}~C_{\pi_{\bm{\theta}_k}}(\policy,\beta)\leq b.
\end{flalign}
We update the projection step (\ref{projection}) by replacing the distance function $D(\cdot,\cdot)$ by KL-divergence, and we solve the constraint problem (\ref{projection}) by the primal-dual approach.  
\begin{theorem}
\label{min-max-alg}
The constrained problem (\ref{eq:projection}) is equivalent to the following primal-dual problem:
\begin{flalign}
\nonumber
\max_{\nu\ge0}\min_{\pi_{\bm{\theta}}\in\Pi_{\bm{\theta}}}
\left\{
D\left(\pi_{{\bm{\theta}}},\pi_{{\bm{\theta}}_{k+\frac{1}{2}}}\right)+\nu
\left(
C_{\pi_{\bm{\theta}_k}}(\policy,\beta)-b
\right)
\right\}.
\end{flalign}
\end{theorem}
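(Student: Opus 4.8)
The plan is to treat this as a standard Lagrangian duality statement and establish it in two moves: first recast the constrained problem (\ref{eq:projection}) as a min-max over the Lagrangian, and then exchange the order of optimization by strong duality.

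First I would introduce the Lagrangian
\[
\mathcal{L}(\policy,\nu)=D\left(\policy,\pi_{{\bm{\theta}}_{k+\frac{1}{2}}}\right)+\nu\left(C_{\pi_{\bm{\theta}_k}}(\policy,\beta)-b\right),
\]
and observe that (\ref{eq:projection}) is exactly $\min_{\policy\in\Pi_{\bm{\theta}}}\max_{\nu\ge0}\mathcal{L}(\policy,\nu)$. This uses the penalty interpretation of the inner maximization: for a fixed $\policy$, if $C_{\pi_{\bm{\theta}_k}}(\policy,\beta)-b\le0$ then the supremum over $\nu\ge0$ is attained at $\nu=0$ and equals $D(\policy,\pi_{{\bm{\theta}}_{k+\frac{1}{2}}})$, whereas if the constraint is violated the supremum is $+\infty$; hence minimizing over $\policy$ recovers precisely the constrained minimum over the feasible set. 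Weak duality $\max_{\nu\ge0}\min_{\policy}\mathcal{L}\le\min_{\policy}\max_{\nu\ge0}\mathcal{L}$ is then automatic.

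Second I would upgrade weak duality to equality. Carrying out the argument in the nonparametric space of policy distributions, I would check the hypotheses of a minimax theorem (e.g.\ Sion's): $\mathcal{L}(\cdot,\nu)$ is convex in the policy---$D=\mathrm{KL}$ is convex in its first argument, and the surrogate cost $C_{\pi_{\bm{\theta}_k}}(\cdot,\beta)$ is a sum of a constant $J^{c}(\pi_{\bm{\theta}_k})$, a term linear in the action distribution, and the penalty $\beta\sqrt{\E_{s\sim d^{\lambda}_{\pi_{\bm{\theta}_k}}}[\mathrm{KL}(\pi_{\bm{\theta}_k},\policy)[s]]}$---while $\mathcal{L}(\policy,\cdot)$ is affine, hence concave, in $\nu$ over the convex set $\{\nu\ge0\}$. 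Slater's condition holds because $\pi_{\bm{\theta}_k}$ itself is strictly feasible for the surrogate constraint whenever the current policy satisfies $J^{c}(\pi_{\bm{\theta}_k})<b$ (the GAE and KL terms vanish at $\policy=\pi_{\bm{\theta}_k}$). These ingredients license the exchange $\min_{\policy}\max_{\nu\ge0}\mathcal{L}=\max_{\nu\ge0}\min_{\policy}\mathcal{L}$, which is the claimed primal-dual problem.

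The hard part will be justifying this exchange rigorously. Two subtleties must be handled: the penalty $\beta\sqrt{\E_{s}[\mathrm{KL}(\pi_{\bm{\theta}_k},\policy)[s]]}$ is the square root of a convex functional and so is not convex in $\policy$ merely by composition, so I would need to argue convexity of this composite term directly (or restrict $\beta$ and the feasible region so the constraint set stays convex); and the genuine optimization runs over the parameterized family $\Pi_{\bm{\theta}}$, which is generally non-convex, so I would prove the duality in the distribution space and then transfer it back, invoking sufficient expressiveness of the parameterization. Establishing this strong-duality equality---rather than the automatic weak inequality---is where essentially all the work lies.
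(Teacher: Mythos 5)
Your proposal follows essentially the same route as the paper's own proof: recast the projection step (\ref{eq:projection}) via its Lagrangian and obtain the primal-dual equivalence from strong duality, arguing convexity of $D\left(\cdot,\pi_{{\bm{\theta}}_{k+\frac{1}{2}}}\right)$ and of $C_{\pi_{\bm{\theta}_k}}(\cdot,\beta)$ over the nonparametric policy space together with Slater's condition (the paper invokes Boyd, Chapter 5.9, and notes Slater holds because the surrogate advantage and KL terms vanish at $\pi_{\bm{\theta}_k}$). The two subtleties you single out as \emph{the hard part} --- that $\beta\sqrt{\E_{s}[\mathrm{KL}(\pi_{\bm{\theta}_k},\policy)[s]]}$ is not convex merely by composition, and that the genuine optimization runs over the non-convex parameterized family $\Pi_{\bm{\theta}}$ --- are exactly the points the paper disposes of by bare assertion (plus a footnote distinguishing policy space from parameter space), so your outline is, if anything, more careful about where the real work lies than the published argument.
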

\begin{proof}
This result is a direct application of \cite[Chapter 5.9]{boyd2004convex}, and we also present it in \ref{app-sec-slater-condition}.
Firstly, we notice if $D\left(\cdot,\pi_{{\bm{\theta}}_{k+\frac{1}{2}}}\right)$ is KL divergence or $\ell_2$-norm, then
the constrained problem (\ref{eq:projection}) is a convex problem \footnote{It is worth noting that $\min_{\pi_{{\bm{\theta}}}\in\Pi_{{\bm{\theta}}}}~D\left(\pi_{{\bm{\theta}}},\pi_{{\bm{\theta}}_{k+\frac{1}{2}}}\right)$ is a convex problem, while $\min_{\bm{\theta}\in\R^{p}}~D\left(\pi_{{\bm{\theta}}},\pi_{{\bm{\theta}}_{k+\frac{1}{2}}}\right)$ can be a non-convex problem.}.
In fact, for a given policy $\pi_{{\bm{\theta}}_{k+\frac{1}{2}}}$, $D\left(\cdot,\pi_{{\bm{\theta}}_{k+\frac{1}{2}}}\right)$ is convex over the policy $\Pi_{\bm{\theta}}$, and $C_{\policyy}(\cdot,\beta)$ is also convex over the policy $\Pi_{\bm{\theta}}$. Additionally, Slater’s condition alway holds since $C_{\policyy}(\policyy,\beta)=0$.
\end{proof}

According to Theorem \ref{min-max-alg}, we turn the projection step (\ref{eq:projection}) as the following unconstrained problem,
\begin{flalign}
\label{app-proj-01}
\max_{\nu\ge0}\min_{\pi_{\bm{\theta}}\in\Pi_{\bm{\theta}}}
\left\{
D\left(\pi_{{\bm{\theta}}},\pi_{{\bm{\theta}}_{k+\frac{1}{2}}}\right)+\nu
\left(
C_{\pi_{\bm{\theta}_k}}(\policy,\beta)-b
\right)
\right\}.
\end{flalign}
In our implementation, we use KL-divergence as the distance $D(\cdot,\cdot)$ to measure the difference between two policies, then
 \begin{flalign}
\label{app-d-measure}
 D\left(\pi_{{\bm{\theta}}},\pi_{{\bm{\theta}}_{k+\frac{1}{2}}}\right)=\E_{s\sim{d}_{{\pi_{\bm{\theta}_k}}}^{\lambda}(\cdot)}\left[\mathtt{KL}\left(\pi_{\bm{\theta}_{k+\frac{1}{2}}},\policy\right)[s]\right],
 \end{flalign}
 which implies we can rewrite the problem (\ref{app-proj-01}) as follows,
 \begin{flalign}
\label{app-proj-02}
\max_{\nu\ge0}\min_{\pi_{\bm{\theta}}}
\left\{
\E_{s\sim{d}_{{\pi_{\bm{\theta}_k}}}^{\lambda}(\cdot)}\left[\mathtt{KL}\left(\pi_{\bm{\theta}_{k+\frac{1}{2}}},\policy\right)[s]\right]+\nu
\left(
C_{\pi_{\bm{\theta}_k}}(\policy,\beta)-b
\right)
\right\}.
\end{flalign}

Furthermore, we update the projection step as follows,
\begin{flalign}
\nonumber
(\pi_{{\bm{\theta}}_{k+1}},\nu_{k+1})=\arg\min_{\pi_{\bm{\theta}}\in\Pi_{\bm{\theta}}}\max_{\nu\ge0}
\left\{
\hat{\mathcal{L}}_{\text{c}}\left(\policy,\pi_{{\bm{\theta}}_k},{\bm{\theta}}_{k+\frac{1}{2}},\nu\right)
\right\},
\end{flalign}
where 
\begin{flalign}
\nonumber
&\hat{\mathcal{L}}_{\text{c}}\left(\policy,\pi_{{\bm{\theta}}_k},{\bm{\theta}}_{k+\frac{1}{2}},\nu\right)=
\hat{D}_{\mathtt{KL}}(\pi_{{\bm{\theta}}_{k+\frac{1}{2}}},\pi_{{\bm{\theta}}})
+\nu \hat{C}(\policy,\pi_{{\bm{\theta}}_k}),\\
\nonumber
\hat{C}(\policy,\pi_{{\bm{\theta}}_k})&=\hat{J}^{C}+\frac{1}{1-\tilde\gamma}\cdot\frac{1}{T}\sum_{t=1}^{T}\frac{\pi_{\bm{\theta}}(a_{t}|s_{t})}{\pi_{{\bm{\theta}}_k}(a_{t}|s_{t})}\hat{A}^{C}_{t}+\beta_k\sqrt{\hat{D}_{\mathtt{KL}}(\pi_{{\bm{\theta}}_k},\pi_{{\bm{\theta}}})}-b,
\end{flalign}
$\hat{J}^{C}$and $\hat{A}^{C}_{t}$ are estimators for cost-return and cost-advantage.
\begin{remark}[Track for Learning $\nu$]
Particularly, after some simple algebra, we obtain the derivation of $\hat{\mathcal{L}}_{\text{c}}(\cdot)$ with respect to $\nu$ as follows,
\begin{flalign}
\dfrac{\partial \hat{\mathcal{L}}_{\text{c}}\left(\policy,\pi_{{\bm{\theta}}_k},{\bm{\theta}}_{k+\frac{1}{2}},\nu\right)}{\partial \nu}=J^{c}(\pi_{{\bm{\theta}}_k})+\dfrac{1}{1-\tilde\gamma}\E_{s\sim{d}_{\pi_{\bm{\theta}_k}}^{\lambda}(\cdot),a\sim\policy(\cdot|s)}
\left[
A^{\mathtt{GAE}(\gamma,\lambda)}_{{\pi_{\bm{\theta}_k}},C}(s,a)\right]-b.
\end{flalign}
But recall (\ref{performance-improvement}) is a minimization-maximization iteration, i.e., 
we require to minimize the distance $\E_{s\sim{d}_{\pi_{\bm{\theta}_k}}^{\lambda}(\cdot)}\mathrm{KL}\left(\pi_{{\bm{\theta}}},\pi_{{\bm{\theta}}_{k}}\right)[s]$, which implies $\policy$ is close to $\pi_{\bm{\theta}_k}$.
Thus it is reasonable to consider \[\E_{s\sim{d}_{\pi_{\bm{\theta}_k}}^{\lambda}(\cdot),a\sim\policy(\cdot|s)}
\left[
A^{\mathtt{GAE}(\gamma,\lambda)}_{{\pi_{\bm{\theta}_k}},C}(s,a)\right]\approx 0.\] 
Thus, in practice, we update $\nu$ following a simple way
\[
\nu\leftarrow\left\{\nu+\eta(J^{c}(\pi_{{\bm{\theta}}_k})-b)\right\}_{+}.
\]
\end{remark}
Finally, we obtain the parameters $({\bm{\theta}}_{k+1},\nu_{k+1})$ as follows,
\begin{flalign}
\label{learning-para-01}
\bm{\theta}_{k+1}&\leftarrow \bm{\theta}_{k}-\eta_2\dfrac{\partial}{\partial \bm{\theta}}\hat{\mathcal{L}}_{\text{c}}\left(\policy,\pi_{{\bm{\theta}}_k},{\bm{\theta}}_{k+\frac{1}{2}},\nu\right)\Big|_{\bm{\theta}=\bm{\theta}_{k},\nu=\nu_k},
\\
\label{learning-para-02}
\nu_{k+1}&\leftarrow \left\{\nu_{k}+\eta_2\dfrac{\partial}{\partial \nu}\hat{\mathcal{L}}_{\text{c}}\left(\policy,\pi_{{\bm{\theta}}_k},{\bm{\theta}}_{k+\frac{1}{2}},\nu\right)\Big|_{\bm{\theta}=\bm{\theta}_{k},\nu=\nu_k}\right\}_{+},
\end{flalign}
where $\{\cdot\}_{+}$ denotes the positive part, i.e., if $x\leq0$, $\{x\}_{+}=0$, else $\{x\}_{+}=x$. We have shown all the details of the implementation in Algorithm \ref{alg-app-cpu}.

\begin{algorithm}[h]
\caption{Constrained Update Projection (CUP)} 
\label{alg-app-cpu}
\begin{algorithmic}
\STATE \textbf{Initialize:} policy network parameters ${\bm{\theta}}_0$; value network parameter $\bm{\omega}_0$; cost value function parameter $\bm{\nu}_0$, step-size $\nu_0$;
\STATE \textbf{Hyper-parameters:} trajectory horizon $T$; discount rate $\gamma$; episode number $M,N$, mini-batch size $B$, positive constant $\alpha,\eta$;
\FOR{$k=0,1,2,\ldots $}
\STATE Collect batch data of $M$ episodes of horizon $T$ in $\cup_{i=1}^{M}\cup_{t=0}^{T}\left\{(s_{i,t},a_{i,t},r_{i,t+1},c_{i,t+1})\right\}$ according to current policy $\pi_{{\bm{\theta}}_k}$;
\STATE Estimate $c$-return by discount averaging on each  episode: $\hat{J}_{i}^{C}=\sum_{t=0}^{T}\gamma^{t}c_{i,t+1};$
\STATE Compute TD errors $\cup_{i=1}^{M}\cup_{t=0}^{T}\{\delta_{i,t}\}$, cost TD errors $\cup_{i=1}^{M}\cup_{t=0}^{T}\{\delta^{C}_{i,t}\}$:
\[
\delta_{i,t}=r_{i,t}+\gamma V_{\bm{\omega}_k}(s_{i,t})- V_{\bm{\omega}_k}(s_{i,t-1}),~\delta^{C}_{i,t}=c_{i,t}+\gamma V^{C}_{\bm{\nu}_k}(s_{i,t})- V^{C}_{\bm{\nu}_k}(s_{i,t-1});
\]
\STATE Compute GAE: $\cup_{i=1}^{M}\cup_{t=0}^{T}\{\hat{A}_{i,t},\hat{A}^{C}_{i,t}\}$: $\hat{A}_{i,t}=\sum_{j=t}^{T}(\gamma\lambda)^{j-t}\delta_{i,j}, ~\hat{A}^{C}_{i,t}=\sum_{j=t}^{T}(\gamma\lambda)^{j-t}\delta^{C}_{i,j};$
\STATE Compute target function for value function and cost value function as follows,
\[
V^{\text{target}}_{i,t}=\hat{A}_{i,t}+ V_{\bm{\omega}_k}(s_{i,t}),~~V^{\text{target},C}_{i,t}=\hat{A}_{i,t}^{C}+ V^{C}_{\bm{\nu}_k}(s_{i,t})
;
\]
\STATE Store data: $\calD_k=\cup_{i=1}^{M}\cup_{t=0}^{T}\left\{(a_{i,t},s_{i,t},\hat{A}_{i,t},\hat{A}^{C}_{i,t},V^{\text{target}}_{i,t},V^{\text{target},C}_{i,t})\right\}$;
\STATE\tikzmark{start1}$\pi_{\text{old}}\leftarrow\pi_{\bm{\theta}_k}$;~~~~~~~~~~~~~~~~~~~~~~~~~~~~~~~~~~~~~~~~~~~~~~~~~~~~~~~~~~~~~~~~~~~~~~~~~~~~~~~~~~~~~~{\color{blue}{\texttt{Policy Improvement}}}
\FOR{$i=0,1,2,\ldots,M$}
\STATE
\[
{\bm{\theta}}_{k+\frac{1}{2}}=\arg\max_{{\bm{\theta}}}\left\{\frac{1}{T}\sum_{t=1}^{T}\frac{\pi_{\bm{\theta}}(a_{i,t}|s_{i,t})}{\pi_{\text{old}}(a_{i,t}|s_{i,t})}\hat{A}_{i,t}-g\left(\epsilon,\dfrac{1}{T}\sum_{t=1}^{T}\hat{A}_{i,t}\right)\right\};
\]
\ENDFOR
\tikzmark{end1}
\STATE \tikzmark{start2}$\pi_{\text{old}}\leftarrow\pi_{\bm{\theta}_{k+\frac{1}{2}}}$;~~~~~~~~~~~~~~~~~~~~~~~~~~~~~~~~~~~~~~~~~~~~~~~~~~~~~~~~~~~~~~~~~~~~~~~~~~~~~~~~~~~~~~~~~~~~{\color{blue}{\texttt{Projection}}}
\STATE$\nu_{k+1}=(\nu_{k}+\eta(\hat{J}_{i}^{C}-b))_{+}$;
\FOR{$i=0,1,2,\ldots,M$}
\STATE
\begin{flalign}
\nonumber
{\bm{\theta}}_{k+1}&=\arg\min_{{\bm{\theta}}} \dfrac{1}{T}\sum_{t=1}^{T}
\left\{
\mathrm{KL}(\pi_{{\bm{\theta}}_{\text{old}}}(\cdot|s_{i,t}),\pi_{{\bm{\theta}}}(\cdot|s_{i,t}))+\nu_k
\dfrac{1-\gamma\lambda}{1-\gamma}\dfrac{\pi_{\bm{\theta}}(a_{i,t}|s_{i,t})}{\pi_{{\bm{\theta}}_k}(a_{i,t}|s_{i,t})}\hat{A}^{C}_{i,t}
\right\};
\end{flalign}
\ENDFOR
\tikzmark{end2}
\FOR{each mini-batch $\{(a_{j},s_{j},\hat{A}_{j},\hat{A}^{C}_{j},V^{\text{target}}_{j},V^{\text{target},C}_{j})\}$ of size $B$ from $\calD_k$}
\STATE
\[
\bm{\omega}_{k+1}=\arg\min_{\bm{\omega}}
\sum_{j=1}^{B}
\left(
V_{\bm{\omega}}(s_j)-V^{\text{target}}_{j}
\right)^2,
\bm{\nu}_{k+1}=\arg\min_{\bm{\nu}}
\sum_{j=1}^{B}
\left(
V^{c}_{\bm{\nu}}(s_j)-V^{\text{target},C}_{j}
\right)^2;
\]
\ENDFOR
%\STATE\tikzmark{start1}$b\gets r$
%\STATE $r= a\bmod b$\tikzmark{end1}
\Textbox{start1}{end1}{}
\Textbox{start2}{end2}{}
\ENDFOR
\end{algorithmic}
\end{algorithm}

\clearpage

\section{Preliminaries}

In this section, we introduce some new notations and results about convex optimization, state distribution, policy optimization and $\lambda$-returns.

\subsection{Strong Duality via Slater's Condition}

\label{app-sec-slater-condition}

We consider a convex optimization problem:
\begin{flalign}
p_{\star}&=\min_{x} f_{0}(x),\\
\mathrm{s.t.}~f_{i}(x)&\leq0,~i=1,2,\cdots,m,
\\
h_{i}(x)&=0,~i=1,2,\cdots,p,
\end{flalign}
where the functions $f_0,f_1,\cdots,f_m$ are convex, and $h_1,\cdots,h_p$ are affine. We denote by $\calD$ the domain of the problem (which is the intersection of the domains of all the functions involved), and by $\calX \subset \calD$ its feasible set.

To the problem we associate the Lagrangian $\calL : \R^{n} \times \R^{m} \times \R^{p} \rightarrow \R$, with values
\begin{flalign}
\calL(x,\lambda,\nu)=f_{0}+\sum_{i=1}^{m}\lambda_{i}f_{i}(x)+\sum_{i=1}^{p}\nu_i h_{i}(x).
\end{flalign}
The dual function is $g : \R^{m} \times \R^{p} \rightarrow \R$, with values
\begin{flalign}
g(\lambda,\nu)=\min_{x} \calL(x,\lambda,\nu).
\end{flalign}
The associated dual problem is
\begin{flalign}
d_{\star}=\max_{\lambda\succeq 0,\nu}g(\lambda,\nu).
\end{flalign}

\textbf{Slater's condition}. We say that the problem satisfies Slater’s condition if it is strictly feasible, that is:
\begin{flalign}
\exists x_0~ \in\calD: f_{i}(x_0)<0,i=1,\cdots,m,~h_{i}(x_0)=0,i=1,\cdots,p.
\end{flalign}

\begin{theorem}[Strong duality via Slater condition]
\label{theeorem:strong-duality}
If the primal problem (8.1) is convex, and satisfies the weak Slater’s condition, then strong duality holds, that is, $p_{\star} = d_{\star}$.
\end{theorem}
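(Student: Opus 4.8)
The plan is to establish the two inequalities $d_\star\le p_\star$ and $d_\star\ge p_\star$ separately, whose combination gives $p_\star=d_\star$. The first (weak duality) is immediate: for any feasible $x\in\calX$ and any $\lambda\succeq 0$, each product $\lambda_i f_i(x)\le 0$ while $\nu_i h_i(x)=0$, so $\calL(x,\lambda,\nu)\le f_0(x)$; taking the infimum over $x$ and then the supremum over $(\lambda,\nu)$ yields $d_\star\le p_\star$. All the real content lies in the reverse inequality, for which I would run a separating-hyperplane argument.

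First I would form the set
\[
\calA=\left\{(u,v,t)\in\R^{m}\times\R^{p}\times\R:\exists\,x\in\calD\text{ with }f_i(x)\le u_i,\ h_i(x)=v_i,\ f_0(x)\le t\right\},
\]
writing $f(x)=(f_1(x),\dots,f_m(x))$ and $h(x)=(h_1(x),\dots,h_p(x))$. Convexity of $\calA$ follows from convexity of $f_0,\dots,f_m$ and affineness of $h_1,\dots,h_p$: if two points of $\calA$ are witnessed by $x,x'\in\calD$, their convex combination is witnessed by the corresponding combination of $x,x'$. The point $(0,0,p_\star)$ then lies on the boundary of $\calA$, while no $(0,0,t)$ with $t<p_\star$ belongs to $\calA$ (such a point would supply a feasible $x$ with objective below $p_\star$).

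Next I would apply the supporting hyperplane theorem at $(0,0,p_\star)$: there is a nonzero triple $(\tilde\lambda,\tilde\nu,\mu)$ with
\[
\tilde\lambda^{\top}u+\tilde\nu^{\top}v+\mu t\ \ge\ \mu\,p_\star\qquad\text{for all }(u,v,t)\in\calA.
\]
Since $u_i$ and $t$ may be sent to $+\infty$ while staying in $\calA$, one reads off $\tilde\lambda\succeq 0$ and $\mu\ge 0$. The decisive case is $\mu>0$: dividing by $\mu$ and setting $\lambda=\tilde\lambda/\mu$, $\nu=\tilde\nu/\mu$, and then choosing $(u,v,t)=(f(x),h(x),f_0(x))\in\calA$ for each $x\in\calD$ gives $\calL(x,\lambda,\nu)\ge p_\star$; taking the infimum over $x$ yields $g(\lambda,\nu)\ge p_\star$, hence $d_\star\ge p_\star$.

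The main obstacle — and the sole place Slater's condition is used — is excluding the degenerate vertical case $\mu=0$. Here I would substitute the strictly feasible point $x_0$ into the supporting inequality, which with $\mu=0$ reduces to $\sum_i\tilde\lambda_i f_i(x_0)\ge 0$ because $h_i(x_0)=0$. As $\tilde\lambda\succeq 0$ and $f_i(x_0)<0$, this forces $\tilde\lambda=0$. The inequality then collapses to $\tilde\nu^{\top}h(x)\ge 0$ for all $x\in\calD$, and since $x_0$ lies in the relative interior of $\calD$ and attains $h(x_0)=0$, the affine map $h$ can be moved in both directions near $x_0$, forcing $\tilde\nu=0$ as well. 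This makes $(\tilde\lambda,\tilde\nu,\mu)$ the zero triple, contradicting the nontriviality from the separation theorem; thus $\mu>0$ and the previous step closes the proof. I expect the tidy handling of the $\tilde\nu=0$ step (exploiting that affine equalities relax in both directions about an interior Slater point) to be the fussiest point, but it is routine once the geometric picture is in place.
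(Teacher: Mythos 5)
The paper itself does not prove this theorem: it explicitly omits the proof and defers to \cite[Chapter 5]{boyd2004convex}, so the natural benchmark is the textbook argument, which is exactly the supporting-hyperplane proof you reconstruct. Your weak-duality half, the construction and convexity of the set $\calA$, the derivation of the sign conditions $\tilde\lambda\succeq 0$, $\mu\ge 0$, and the normalization in the case $\mu>0$ are all correct and coincide with that standard argument.

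The genuine gap is in your final step excluding $\mu=0$. Writing $h(x)=Ax-b$, the inequality $\tilde\nu^{\top}h(x)\ge 0$ for all $x\in\calD$, together with $h(x_0)=0$ and $x_0$ interior, only forces $\tilde\nu^{\top}Ad=0$ for every direction $d$ admissible at $x_0$, i.e. $A^{\top}\tilde\nu=0$; it does \emph{not} force $\tilde\nu=0$. If the rows of $A$ are linearly dependent (say, a repeated equality constraint), there is a nonzero $\tilde\nu$ with $A^{\top}\tilde\nu=0$, and since $b=Ax_0$ this $\tilde\nu$ also satisfies $\tilde\nu^{\top}b=0$, so $\tilde\nu^{\top}h(x)\equiv 0$ on all of $\R^{n}$. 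Consequently $(\tilde\lambda,\tilde\nu,\mu)=(0,\tilde\nu,0)$ is a perfectly legitimate nonzero supporting hyperplane of $\calA$ — it satisfies your supporting inequality with equality everywhere — and no contradiction with nontriviality can be extracted. Boyd and Vandenberghe close precisely this hole by assuming, without loss of generality, that $\operatorname{rank}A=p$ (redundant equality constraints are eliminated beforehand; consistency of the system is guaranteed by the Slater point), and they also assume $p_{\star}$ finite and $x_0\in\operatorname{int}\calD$, hypotheses you use implicitly (your appeal to the relative interior) but never state. With the rank reduction made explicit your argument is complete; you should also dispose of the degenerate case $p_{\star}=-\infty$, where weak duality alone already gives $d_{\star}=p_{\star}$.
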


We omit the proof of Theorem \ref{theeorem:strong-duality}, for more discussions, please refer to \cite[Chapter 5.9]{boyd2004convex}.

\subsection{State Distribution}
We use $\mathbf{P}_{\pi_{\bm \theta}}\in\R^{|\calS|\times|\calS|}$ to denote the state transition matrix by executing $\policy$, and their components are:
\[
\mathbf{P}_{\pi_{\bm \theta}}[s,s'] =\sum_{a\in\mathcal{A}}\pi_{\bm{\theta}}(a|s)\mathbb{P}(s'|s,a)=:\Pro_{\policy}(s^{'}|s),~~s,s^{'}\in\calS,
\]
which denotes one-step state transformation probability from $s$ to $s^{'}$.

We use $\mathbb{P}_{\policy}(s_t=s|s_0)$ to denote the probability of visiting $s$ after $t$
time steps from the initial state $s_0$ by executing $\policy$.
Particularly, we notice if $t=0$, $s_t\ne s_0$, then $\mathbb{P}_{\policy}(s_t=s|s_0)=0$, i.e.,
\begin{flalign}
\label{special-inititial-pro}
\mathbb{P}_{\policy}(s_t=s|s_0)=0,~~ t=0~\text{and}~s\ne s_0.
\end{flalign}
Then for any initial state $s_0\sim\rho(\cdot)$, the following holds,
\begin{flalign}
\label{pro-pi-t-step-app}
\mathbb{P}_{{\policy}}(s_t=s|s_0)=&\sum_{s^{'}\in\mathcal{S}}\mathbb{P}_{{\policy}}(s_t=s|s_{t-1}=s^{'})\mathbb{P}_{{\policy}}(s_{t-1}=s^{'}|s_0).
\end{flalign}

Recall $d_{\pi_{\bm {\theta}}}^{s_0}(s)$ denotes the normalized discounted distribution of the future state $s$ encountered starting at $s_0$ by executing $\pi_{\bm {\theta}}$,
\[d_{\pi_{\bm {\theta}}}^{s_0}(s)=(1-\gamma)\sum_{t=0}^{\infty}\gamma^{t}\mathbb{P}_{\pi_{\bm {\theta}}}(s_t=s|s_0).\]
Furthermore, since $s_0\sim\rho_{0}(\cdot)$, we define
\[
d_{\pi_{\bm {\theta}}}^{\rho_0}(s)=\mathbb{E}_{s_0\sim\rho_{0}(\cdot)}[d_{\pi_{\bm {\theta}}}^{s_0}(s)]=\int_{s_0\in\calS}\rho_{0}(s_0)d^{s_0}_{\pi_{\bm {\theta}}}(s)\text{d}s_0
\]
as the discounted state visitation distribution over the initial distribution $\rho_0(\cdot)$. We use $\bd_{\pi_{\bm {\theta}}}^{\rho_0}\in\R^{|\calS|}$ to store all the normalized discounted state distributions, and its components are:
\[
\bd_{\pi_{\bm {\theta}}}^{\rho_0}[s]=d_{\pi_{\bm {\theta}}}^{\rho_0}(s),~~s\in\calS.
\]

We use $\bm{\rho}_0\in\R^{|\calS|}$ to denote initial state distribution vector, and their components are:
\[
\bm{\rho}_{0}[s]=\rho_{0}(s),~~s\in\calS.
\]
Then, we rewrite $\bd_{\pi_{\bm {\theta}}}^{\rho_0}$ as the following matrix version,
\begin{flalign}
\bd_{\pi_{\bm {\theta}}}^{\rho_0}=(1-\gamma)\sum_{t=0}^{\infty}(\gamma\bP_{\policy})^{t}\bm{\rho}_{0}=(1-\gamma)(\bI-\gamma\bP_{\policy})^{-1}\bm{\rho}_{0}.
\end{flalign}

\subsection{Objective of MDP}

Recall $\tau=\{s_{t}, a_{t}, r_{t+1}\}_{t\ge0}\sim{\pi_{\bm \theta}}$, according to $\tau$,
we define the expected return $J({\pi_{\bm \theta}}|s_0)$ as follows,
\begin{flalign}
\label{Eq:J-theta-app}
  J({\pi_{\bm \theta}}|s_0)=&\mathbb{E}_{\tau\sim\pi_{\bm {\theta}}}[R(\tau)]
    =\dfrac{1}{1-\gamma}\mathbb{E}_{s\sim d_{\pi_{\bm {\theta}}}^{s_0}(\cdot),a\sim\pi_{\bm {\theta}}(\cdot|s),s^{'}\sim\Pro(\cdot|s,a)}\left[r(s^{'}|s,a)\right],
\end{flalign}
where $R(\tau)=\sum_{t\ge0}\gamma^{t}r_{t+1}$, and the notation $J({\pi_{\bm \theta}}|s_0)$ is ``conditional'' on $s_0$ is to emphasize the trajectory $\tau$ starting from $s_0$.

Since $s_0\sim\rho_{0}(\cdot)$, we define the objective of MDP as follows,
\begin{flalign}
\label{app-objective-fun-01}
J(\pi_{\bm {\theta}})
=\dfrac{1}{1-\gamma}\mathbb{E}_{s\sim d_{\pi_{\bm {\theta}}}^{\rho_0}(\cdot),a\sim\pi_{\bm {\theta}}(\cdot|s),s^{'}\sim\Pro(\cdot|s,a)}\left[r(s^{'}|s,a)\right].
\end{flalign}
The goal of reinforcement learning is to solve the following optimization problem:
\begin{flalign}
\label{Eq:thata-optimal-app}
    \bm{\theta}_{\star}=\arg\max_{\bm{\theta}\in\R^{p}} J(\policy).
\end{flalign}

\subsection{$\lambda$-Return}

Let $\mathcal{B}_{\pi_{\bm\theta}}$ be the \emph{Bellman operator}:
 \begin{flalign}
 \label{bellman-op}
\mathcal{B}_{\pi_{\bm\theta}}:  \mathbb{R}^{|\mathcal{S}|}\rightarrow \mathbb{R}^{|\mathcal{S}|},~~~~ v\mapsto \mathbf{r}_{\pi_{\bm \theta}}+\gamma \mathbf{P}_{\pi_{\bm \theta}}v,
 \end{flalign}
 where
$\mathbf{r}_{\pi_{\bm \theta}}\in\R^{|\calS|}$ is the expected reward according to $\pi_{\bm \theta}$, i.e., their components are: \[\mathbf{r}_{\pi_{\bm \theta}}[s] =\sum_{a\in\mathcal{A}}\sum_{s^{'}\in\mathcal{S}}\pi_{\bm{\theta}}(a|s)r(s'|s,a)=:R_{\policy}(s),~~s\in\calS.\]
Let $\bv_{\pi_{\bm{\theta}}}\in\R^{|\calS|}$ be a vector that stores all the state value functions, and its components are:
\[
\bv_{\pi_{\bm{\theta}}}[s]=V_{\pi_{\bm{\theta}}}(s),~~s\in\calS.
\]
Then, according to Bellman operator (\ref{bellman-op}), we rewrite Bellman equation \citep{bellman1957markovian} as the following matrix version:
\begin{flalign}
\label{bellman-eq-matrix}
\mathcal{B}_{\pi_{\bm\theta}}\bv_{\pi_{\bm{\theta}}}=\bv_{\pi_{\bm{\theta}}}.
\end{flalign}

Furthermore, we define $\lambda$-\emph{Bellman operator} $\mathcal{B}^{\lambda}_{\pi_{\bm\theta}}$ as follows,
\[
\mathcal{B}^{\lambda}_{\pi_{\bm\theta}}=(1-\lambda)\sum_{t=0}^{\infty}\lambda^{t} (\mathcal{B}_{\pi_{\bm\theta}})^{{t}+1},
\]
which implies
 \begin{flalign}
 \label{bellman-op-lam}
\mathcal{B}^{\lambda}_{\pi_{\bm\theta}}: \mathbb{R}^{|\mathcal{S}|}\rightarrow \mathbb{R}^{|\mathcal{S}|},~~~~ 
 v\mapsto \mathbf{r}^{(\lambda)}_{\pi_{\bm \theta}}+\tilde{\gamma}\mathbf{P}^{(\lambda)}_{\pi_{\bm \theta}}v,
 \end{flalign}
 where 
 \begin{flalign}
\label{def:matrix-p-lam-return}
 \mathbf{P}^{(\lambda)}_{\pi_{\bm \theta}}=(1-\gamma\lambda)\sum_{{t}=0}^{\infty}(\gamma\lambda)^{{t}}\bP^{{t}+1}_{\policy},~~ \mathbf{r}^{(\lambda)}_{\pi_{\bm \theta}}=\sum_{{t}=0}^{\infty}(\gamma\lambda\bP_{\policy})^{{t}}\mathbf{r}_{\pi_{\bm \theta}},~~\tilde{\gamma}=\dfrac{\gamma(1-\lambda)}{1-\gamma\lambda}.
 \end{flalign}

 Let
 \begin{flalign}
 \label{lam-pro-value}
 \Pro_{\policy}^{(\lambda)}(s^{'}|s)=\mathbf{P}^{(\lambda)}_{\pi_{\bm \theta}}[s,s^{'}]=:(1-\gamma\lambda)\sum_{{t}=0}^{\infty}(\gamma\lambda)^{{t}}\left(\bP^{{t}+1}_{\policy}[s,s^{'}]\right),
 \end{flalign}
 where $\bP^{{t}+1}_{\policy}[s,s^{'}]$ is the $(s,s^{'})$-th component of matrix $\bP^{{t}+1}_{\policy}$, which is the probability of visiting $s^{'}$ after $t+1$ time steps from
 the state $s$ by executing $\policy$, i.e.,
  \begin{flalign}
 \label{lam-pro-value-01}
 \bP^{{t}+1}_{\policy}[s,s^{'}]= \Pro_{\policy}(s_{t+1}=s^{'}|s).
\end{flalign}
Thus, we rewrite  $\Pro_{\policy}^{(\lambda)}(s^{'}|s)$ (\ref{lam-pro-value}) as follows
 \begin{flalign}
 \label{lam-pro-value-02}
 \Pro_{\policy}^{(\lambda)}(s^{'}|s)=(1-\gamma\lambda)\sum_{{t}=0}^{\infty}(\gamma\lambda)^{{t}}\Pro_{\policy}(s_{t+1}=s^{'}|s),~~s\in\calS.
 \end{flalign}
 \begin{remark}
 Furthermore,
recall the following visitation sequence $\tau=\{s_{t}, a_{t}, r_{t+1}\}_{t\ge0}$ induced by $\policy$,
it is similar to the probability $\mathbb{P}_{{\policy}}(s_t=s^{'}|s_0)$, we introduce $\mathbb{P}^{(\lambda)}_{{\policy}}(s_t=s^{'}|s_0)$
as the probability of  transition from state $s$ to state $s^{'}$after $t$
time steps under the dynamic transformation matrix $ \mathbf{P}^{(\lambda)}_{\pi_{\bm \theta}}$.
Then, the following equity holds
\begin{flalign}
\label{pro-pi-t-step}
\mathbb{P}_{{\policy}}^{(\lambda)}(s_t=s|s_0)=&\sum_{s^{'}\in\mathcal{S}}\mathbb{P}_{{\policy}}^{(\lambda)}(s_t=s|s_{t-1}=s^{'})\mathbb{P}_{{\policy}}^{(\lambda)}(s_{t-1}=s^{'}|s_0).
\end{flalign}
\end{remark}
 Similarly, let
  \begin{flalign}
  \nonumber
 R^{(\lambda)}_{\pi_{\bm \theta}}(s)=:
  \mathbf{r}^{(\lambda)}_{\pi_{\bm \theta}}[s]=&\sum_{{t}=0}^{\infty}(\gamma\lambda\bP_{\policy})^{{t}}\mathbf{r}_{\pi_{\bm \theta}}[s]
  =  \sum_{{t}=0}^{\infty}(\gamma\lambda)^{t}\left(\sum_{s^{'}\in\calS}\Pro_{\policy}(s_{t}=s^{'}|s)R_{\policy}(s^{'})\right)\\
   \label{lam-pro-value-03}
   =&
     \sum_{{t}=0}^{\infty}\sum_{s^{'}\in\calS}(\gamma\lambda)^{t}\Pro_{\policy}(s_{t}=s^{'}|s)R_{\policy}(s^{'}).
 \end{flalign}

It is similar to normalized discounted distribution $d_{\pi_{\bm {\theta}}}^{\rho_0}(s)$, we introduce $\lambda$-return version of discounted state distribution $d_{\pi_{\bm {\theta}}}^{\lambda}(s)$ as follows: $\forall s\in\calS$,
\begin{flalign}
\label{lambda-dis-state-distribution}
d_{\pi_{\bm {\theta}}}^{s_0,\lambda}(s)&=(1-\tilde \gamma)\sum_{t=0}^{\infty}\tilde{\gamma}^{t}\mathbb{P}^{(\lambda)}_{\pi_{\bm {\theta}}}(s_t=s|s_0),\\
d_{\pi_{\bm {\theta}}}^{\lambda}(s)&=\E_{s_0\sim\rho_{0}(\cdot)}\left[d_{\pi_{\bm {\theta}}}^{s_0,\lambda}(s)\right],\\
\label{mat-lambda-dis-state-distribution}
\bd_{\pi_{\bm {\theta}}}^{\lambda}[s]&=d_{\pi_{\bm {\theta}}}^{\lambda}(s),
\end{flalign}
where $\mathbb{P}^{(\lambda)}_{\pi_{\bm {\theta}}}(s_t=s|s_0)$ is the $(s_0,s)$-th component of the matrix $\left(\mathbf{P}^{(\lambda)}_{\pi_{\bm \theta}}\right)^{t}$, i.e.,
\[
\mathbb{P}^{(\lambda)}_{\pi_{\bm {\theta}}}(s_t=s|s_0)=:\left(\mathbf{P}^{(\lambda)}_{\pi_{\bm \theta}}\right)^{t}[s_0,s].
\]
Similarly, $\mathbb{P}^{(\lambda)}_{\pi_{\bm {\theta}}}(s_t=s^{'}|s)$ is the $(s,s^{'})$-th component of the matrix $\left(\mathbf{P}^{(\lambda)}_{\pi_{\bm \theta}}\right)^{t}$, i.e.,
\[
\mathbb{P}^{(\lambda)}_{\pi_{\bm {\theta}}}(s_t=s^{'}|s)=:\left(\mathbf{P}^{(\lambda)}_{\pi_{\bm \theta}}\right)^{t}[s,s^{'}].
\]

Finally, we rewrite $\bd_{\pi_{\bm {\theta}}}^{\rho_0,\lambda}$ as the following matrix version,
\begin{flalign}
\label{matrixversion-lambda-dis-state-distribution}
\bd_{\pi_{\bm {\theta}}}^{\lambda}=(1-\tilde \gamma)\sum_{t=0}^{\infty}\left(\gamma\bP^{(\lambda)}_{\policy}\right)^{t}\bm{\rho}_{0}=(1-\tilde \gamma)\left(\bI-\tilde{\gamma}\bP^{(\lambda)}_{\policy}\right)^{-1}\bm{\rho}_{0}.
\end{flalign}

 \begin{remark}[$\lambda$-Return Version of Bellman Equation]
 According to Bellman equation (\ref{bellman-eq-matrix}), $\bv_{\pi_{\bm{\theta}}}$ is fixed point of $\lambda$-operator $\mathcal{B}^{\lambda}_{\pi_{\bm\theta}}$, i.e.,
 \begin{flalign}
 \label{bellman-eq-return}
 \bv_{\pi_{\bm{\theta}}}=\mathbf{r}^{(\lambda)}_{\pi_{\bm \theta}}+{\tilde{\gamma}} \mathbf{P}^{(\lambda)}_{\pi_{\bm \theta}}\bv_{\pi_{\bm{\theta}}}.
 \end{flalign}
 Recall $\tau=\{s_t,a_t,r_{t+1}\}_{t\ge0}\sim\policy$, according to (\ref{bellman-eq-return}), the value function of initial state $s_0$ is
 \begin{flalign}
  \nonumber
  V_{\policy}(s_0)&= \bv_{\pi_{\bm{\theta}}}[s_0]=\mathbf{r}^{(\lambda)}_{\pi_{\bm \theta}}[s_0]+\tilde \gamma \mathbf{P}^{(\lambda)}_{\pi_{\bm \theta}}\bv_{\pi_{\bm{\theta}}}[s_0]\\
  \label{bellman-eq-1}
 & =R^{(\lambda)}_{\pi_{\bm \theta}}(s_0)+{\tilde{\gamma}}\sum_{s^{'}\in\calS} \Pro_{\policy}^{(\lambda)}(s_1=s^{'}|s_0)V_{\policy}(s^{'}).
 \end{flalign}
 \end{remark}

We unroll the expression of (\ref{bellman-eq-1}) repeatedly, then we have
\begin{flalign}
\nonumber
&V_{{\policy}}(s_0)
\\
\nonumber
=&{R}^{(\lambda)}_{\policy}(s_0)+{\tilde{\gamma}}\sum_{s^{'}\in\mathcal{S}}\mathbb{P}_{{\policy}}^{(\lambda)}(s_1=s^{'}|s_0)
\underbrace{\left(
{R}^{(\lambda)}_{\policy}(s^{'})
+{\tilde{\gamma}}\sum_{s^{''}\in\mathcal{S}}\mathbb{P}_{{\policy}}^{(\lambda)}(s_2=s^{''}|s_1=s^{'})V_{{\policy}}(s^{''})
\right)}_{=V_{{\policy}}(s^{'})}\\
\nonumber
=&{R}^{(\lambda)}_{\policy}(s_0)+{\tilde{\gamma}}\sum_{s^{'}\in\mathcal{S}}\mathbb{P}_{{\policy}}^{(\lambda)}(s_1=s^{'}|s_0){R}^{(\lambda)}_{\policy}(s^{'})\\
\nonumber
&~~~~~~~~~~~~~+{\tilde{\gamma}}^2\sum_{s^{''}\in\mathcal{S}}\underbrace{
\left(
\sum_{s^{'}\in\mathcal{S}}\mathbb{P}_{{\policy}}^{(\lambda)}(s_1=s^{'}|s_0)
\mathbb{P}_{{\policy}}^{(\lambda)}(s_2=s^{''}|s_1=s^{'})
\right)
}_{\overset{(\ref{pro-pi-t-step})}=:\mathbb{P}^{(\lambda)}_{{\policy}}\left(s_2=s^{''}|s_0\right)}V_{{\policy}}(s^{''})\\
\nonumber
=&{R}^{(\lambda)}_{\policy}(s_0)+{\tilde{\gamma}}\sum_{s\in\mathcal{S}}\mathbb{P}_{{\policy}}^{(\lambda)}(s_1=s|s_0){R}^{(\lambda)}_{\policy}(s)
+{\tilde{\gamma}}^2\sum_{s\in\mathcal{S}}\mathbb{P}_{{\policy}}^{(\lambda)}(s_2=s|s_0)V_{{\policy}}(s)\\
\nonumber
=&{R}^{(\lambda)}_{\policy}(s_0)+{\tilde{\gamma}}\sum_{s\in\mathcal{S}}\mathbb{P}_{{\policy}}^{(\lambda)}(s_1=s|s_0){R}^{(\lambda)}_{\policy}(s)\\
\nonumber
&~~~~~~~~~~~~~+{\tilde{\gamma}}^2\sum_{s\in\mathcal{S}}\mathbb{P}_{{\policy}}^{(\lambda)}(s_2=s|s_0)
\left(
{R}^{(\lambda)}_{\policy}(s)+{\tilde{\gamma}}\sum_{s^{'}\in\mathcal{S}}\mathbb{P}_{{\policy}}^{(\lambda)}(s_3=s^{'}|s_2=s)V_{{\policy}}(s^{'})
\right)\\
\nonumber
=&{R}^{(\lambda)}_{\policy}(s_0)+{\tilde{\gamma}}\sum_{s\in\mathcal{S}}\mathbb{P}_{{\policy}}^{(\lambda)}(s_1=s|s_0){R}^{(\lambda)}_{\policy}(s)+{\tilde{\gamma}}^2\sum_{s\in\mathcal{S}}\mathbb{P}_{{\policy}}^{(\lambda)}(s_2=s|s_0){R}^{(\lambda)}_{\policy}(s)\\
\nonumber
&~~~~~~~~~~~~~+{\tilde{\gamma}}^3\sum_{s^{'}\in\mathcal{S}}\underbrace{
\left(
\sum_{s\in\mathcal{S}}\mathbb{P}_{{\policy}}^{(\lambda)}(s_2=s|s_0)
\mathbb{P}_{{\policy}}^{(\lambda)}(s_3=s^{'}|s_2=s)
\right)
}_{=\mathbb{P}_{{\policy}}^{(\lambda)}(s_3=s^{'}|s_0)}V_{{\policy}}(s^{'})\\
\nonumber
=&{R}^{(\lambda)}_{}(s_0)+{\tilde{\gamma}}\sum_{s\in\mathcal{S}}\mathbb{P}_{{\policy}}^{(\lambda)}(s_1=s|s_0){R}^{(\lambda)}_{\policy}(s)+{\tilde{\gamma}}^2\sum_{s\in\mathcal{S}}\mathbb{P}_{{\policy}}^{(\lambda)}(s_2=s|s_0){R}^{(\lambda)}_{\policy}(s)\\
\nonumber
&~~~~~~~~~~~~~+{\tilde{\gamma}}^3\sum_{s\in\mathcal{S}}\mathbb{P}^{(\lambda)}_{{\policy}}(s_3=s|s_0)V_{{\policy}}(s)\\
\nonumber
=&\cdots
\\
\label{re-bellman-eq-01}
=&\sum_{s\in\calS}\sum_{t=0}^{\infty}{\tilde{\gamma}}^{t}\mathbb{P}_{{\policy}}^{(\lambda)}(s_t=s|s_0){R}^{(\lambda)}_{\policy}(s)
\overset{(\ref{lambda-dis-state-distribution})}=\dfrac{1}{1-{\tilde{\gamma}}}\sum_{s\in\calS}d^{s_0,\lambda}_{\policy}(s) {R}^{(\lambda)}_{\policy}(s)
.
\end{flalign}

According to (\ref{Eq:J-theta-app}) and (\ref{re-bellman-eq-01}), we have
\begin{flalign}
\nonumber
J({\policy})=&\sum_{s_0\in\calS}\rho_{0}(s_0)V_{\policy}(s_0)\overset{(\ref{re-bellman-eq-01})}=
\dfrac{1}{1-{\tilde{\gamma}}}\sum_{s_0\in\calS}\rho_{0}(s_0)\sum_{s\in\calS}d^{s_0,\lambda}_{\policy}(s) R^{(\lambda)}_{\pi_{\bm \theta}}(s)\\
\nonumber
=&\dfrac{1}{1-{\tilde{\gamma}}}\sum_{s\in\calS}\underbrace{\left(\sum_{s_0\in\calS}\rho_{0}(s_0)d^{s_0,\lambda}_{\policy}(s)\right)}_{=d^{\lambda}_{\policy}(s)} R^{(\lambda)}_{\pi_{\bm \theta}}(s)\\
\label{lam-return-objective}
=&\dfrac{1}{1-{\tilde{\gamma}}}\sum_{s\in\calS}d^{\lambda}_{\policy}(s)R^{(\lambda)}_{\pi_{\bm \theta}}(s)
=
\dfrac{1}{1-{\tilde{\gamma}}}\E_{s\sim d^{\lambda}_{\policy}(\cdot)}
\left[R^{(\lambda)}_{\pi_{\bm \theta}}(s)\right]
.
\end{flalign}

Finally, we summarize above results in the following Lemma \ref{lem:lam-return-objective}.
\begin{lemma}
\label{lem:lam-return-objective}
The objective $J(\policy)$ (\ref{app-objective-fun-01}) can be rewritten as the following version:
\[
J({\policy})=\dfrac{1}{1-{\tilde{\gamma}}}\sum_{s\in\calS}d^{\lambda}_{\policy}(s)R^{(\lambda)}_{\pi_{\bm \theta}}(s)=
\dfrac{1}{1-{\tilde{\gamma}}}\E_{s\sim d^{\lambda}_{\policy}(\cdot)}
\left[R^{(\lambda)}_{\pi_{\bm \theta}}(s)\right].
\]
\end{lemma}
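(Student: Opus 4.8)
The plan is to reduce the claim to the single-start-state identity $V_\policy(s_0) = \frac{1}{1-\tilde\gamma}\sum_{s\in\calS} d^{s_0,\lambda}_\policy(s) R^{(\lambda)}_{\pi_{\bm\theta}}(s)$ and then average over the initial distribution $\rho_0$. First I would invoke the definition $J(\policy) = \E_{s_0\sim\rho_0(\cdot)}[V_\policy(s_0)] = \sum_{s_0\in\calS}\rho_0(s_0) V_\policy(s_0)$, so that it suffices to produce a $\tilde\gamma$-weighted expansion of each $V_\policy(s_0)$ and then interchange the two (absolutely convergent) sums over $s_0$ and $s$. The inner sum $\sum_{s_0}\rho_0(s_0) d^{s_0,\lambda}_\policy(s)$ is, by definition (\ref{lambda-dis-state-distribution}), exactly $d^\lambda_\policy(s)$, which delivers both stated forms simultaneously.

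The technical core is the per-state expansion. I would start from the $\lambda$-return fixed-point equation (\ref{bellman-eq-return}), $\bv_{\pi_{\bm\theta}} = \mathbf{r}^{(\lambda)}_{\pi_{\bm\theta}} + \tilde\gamma \mathbf{P}^{(\lambda)}_{\pi_{\bm\theta}}\bv_{\pi_{\bm\theta}}$, read off the scalar recursion $V_\policy(s_0) = R^{(\lambda)}_{\pi_{\bm\theta}}(s_0) + \tilde\gamma\sum_{s'}\Pro^{(\lambda)}_\policy(s_1=s'|s_0)V_\policy(s')$, and unroll it repeatedly. At each unrolling step two $\lambda$-transition factors get composed, and I would collapse them using the Chapman--Kolmogorov relation (\ref{pro-pi-t-step}) for the $\lambda$-chain, so that after $n$ steps the expansion reads $\sum_{t=0}^{n-1}\tilde\gamma^t\sum_s \mathbb{P}^{(\lambda)}_\policy(s_t=s|s_0)R^{(\lambda)}_{\pi_{\bm\theta}}(s) + \tilde\gamma^n\sum_s \mathbb{P}^{(\lambda)}_\policy(s_n=s|s_0)V_\policy(s)$. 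Letting $n\to\infty$ and recognizing $d^{s_0,\lambda}_\policy(s) = (1-\tilde\gamma)\sum_{t\ge0}\tilde\gamma^t\mathbb{P}^{(\lambda)}_\policy(s_t=s|s_0)$ from (\ref{lambda-dis-state-distribution}) yields the claimed single-state identity.

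The main obstacle is controlling the remainder term $\tilde\gamma^n\sum_s \mathbb{P}^{(\lambda)}_\policy(s_n=s|s_0)V_\policy(s)$ and justifying the limit. Here I would use that $\tilde\gamma = \gamma(1-\lambda)/(1-\gamma\lambda)\in[0,1)$, that each row of $\mathbf{P}^{(\lambda)}_{\pi_{\bm\theta}}$ sums to one (so $\sum_s \mathbb{P}^{(\lambda)}_\policy(s_n=s|s_0)=1$), and that $V_\policy$ is bounded (finite $\calS$, bounded reward, $\gamma<1$), whence the remainder is $O(\tilde\gamma^n)\to 0$. A secondary bookkeeping point is verifying that the composed $\lambda$-transition factors are genuinely the powers $\left(\mathbf{P}^{(\lambda)}_{\pi_{\bm\theta}}\right)^t$, i.e. that (\ref{pro-pi-t-step}) legitimately telescopes; this is immediate once $\mathbf{P}^{(\lambda)}_{\pi_{\bm\theta}}$ is treated as a bona fide stochastic matrix and $\mathbb{P}^{(\lambda)}_\policy(s_t=\cdot|s_0)$ as its $t$-step entries. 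With both forms assembled, the final equality $\sum_s d^\lambda_\policy(s)R^{(\lambda)}_{\pi_{\bm\theta}}(s) = \E_{s\sim d^\lambda_\policy(\cdot)}[R^{(\lambda)}_{\pi_{\bm\theta}}(s)]$ is simply the definition of the expectation, which closes the proof.
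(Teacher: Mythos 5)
Your proposal is correct and takes essentially the same route as the paper's own proof: both start from the $\lambda$-return Bellman fixed-point equation (\ref{bellman-eq-return}), unroll the scalar recursion repeatedly while collapsing the composed $\lambda$-transition factors via (\ref{pro-pi-t-step}), recognize the resulting series as $d^{s_0,\lambda}_{\policy}$ from (\ref{lambda-dis-state-distribution}), and then average over $s_0\sim\rho_0(\cdot)$ and swap sums to produce $d^{\lambda}_{\policy}$. If anything, your version is slightly more careful, since you explicitly bound the remainder term $\tilde{\gamma}^{n}\sum_{s}\mathbb{P}^{(\lambda)}_{\policy}(s_n=s|s_0)V_{\policy}(s)$ using $\tilde{\gamma}<1$, row-stochasticity of $\mathbf{P}^{(\lambda)}_{\pi_{\bm\theta}}$, and boundedness of $V_{\policy}$, whereas the paper elides this limiting step in its unrolling.
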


\clearpage

\section{Proof of Theorem \ref{them:general-performance-difference}}

\label{sec:proof-them-01}

We need the following Proposition \ref{objective-td-error-version} to prove Theorem \ref{them:general-performance-difference}, which illustrates an identity for the objective function of policy optimization.

\begin{proposition}
\label{objective-td-error-version}
For any function $\varphi(\cdot):\calS\rightarrow\R$, for any policy $\policy$, for any trajectory satisfies $\tau=\{s_{t}, a_{t}, r_{t+1}\}_{t\ge0}\sim{\pi_{\bm \theta}}$,
let 
\begin{flalign}
\nonumber
\delta_t^{\varphi}&=r(s_{t+1}|s_t,a_t)+\gamma\varphi(s_{t+1})-\varphi(s_{t}),
\\
\nonumber
\delta^{\varphi}_{\policy,t}(s)&=\E_{s_{t}\sim\Pro_{\policy}(\cdot|s),a_{t}\sim{\policy}(\cdot|s_t),s_{t+1}\sim\Pro(\cdot|s_t,a_t)}\left[\delta_t^{\varphi}\right],
\end{flalign}
then, the objective $J(\policy)$ (\ref{lam-return-objective}) can be rewritten as the following version:
\begin{flalign}
\label{lam-return-phi-objective-prop}
J(\policy)=&\E_{s_0\sim\rho_{0}(\cdot)}[\varphi(s_0)]
+
\dfrac{1}{1-\tilde\gamma}\sum_{s\in\calS}d^{\lambda}_{\policy}(s)
\left(
\sum_{t=0}^{\infty}\gamma^t \lambda^t
\delta^{\varphi}_{\policy,t}(s)
\right)
\\
\nonumber
=&\E_{s_0\sim\rho_{0}(\cdot)}[\varphi(s_0)]
+
\dfrac{1}{1-\tilde\gamma}\E_{s\sim d^{\lambda}_{\policy}(\cdot)}
\left[
\sum_{t=0}^{\infty}\gamma^t \lambda^t
\delta^{\varphi}_{\policy,t}(s)
\right]
.
\end{flalign}
\end{proposition}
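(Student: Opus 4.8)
The plan is to reduce the statement to the $\lambda$-return identity of Lemma~\ref{lem:lam-return-objective} by first rewriting the discounted sum of expected TD errors $\sum_{t\ge0}(\gamma\lambda)^{t}\delta^{\varphi}_{\policy,t}(s)$ in closed matrix form, and then invoking a flow-conservation (normalization) property of the $\lambda$-state distribution $\bd_{\pi_{\bm\theta}}^{\lambda}$. Introduce the vector $\bm{\varphi}\in\R^{|\calS|}$ with $\bm{\varphi}[s]=\varphi(s)$. First I would observe that, since $s_{t}$ is drawn $t$ steps from $s$ under $\policy$, conditioning on $s_{t}$ factorizes the expectation and gives $\bm{\delta}^{\varphi}_{\policy,t}=\mathbf{P}^{t}_{\pi_{\bm\theta}}\,\bm{\delta}^{\varphi}_{\policy,0}$, where the one-step expected TD error vector is
\[
\bm{\delta}^{\varphi}_{\policy,0}=\mathbf{r}_{\pi_{\bm\theta}}+\gamma\mathbf{P}_{\pi_{\bm\theta}}\bm{\varphi}-\bm{\varphi}.
\]
Summing the resulting geometric (Neumann) series yields $\sum_{t\ge0}(\gamma\lambda)^{t}\bm{\delta}^{\varphi}_{\policy,t}=(\bI-\gamma\lambda\mathbf{P}_{\pi_{\bm\theta}})^{-1}\bm{\delta}^{\varphi}_{\policy,0}$.

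The second step is to simplify this into the $\lambda$-operators. Using $\mathbf{r}^{(\lambda)}_{\pi_{\bm\theta}}=(\bI-\gamma\lambda\mathbf{P}_{\pi_{\bm\theta}})^{-1}\mathbf{r}_{\pi_{\bm\theta}}$, the splitting $\gamma\mathbf{P}_{\pi_{\bm\theta}}-\bI=\gamma(1-\lambda)\mathbf{P}_{\pi_{\bm\theta}}-(\bI-\gamma\lambda\mathbf{P}_{\pi_{\bm\theta}})$, the fact that powers of $\mathbf{P}_{\pi_{\bm\theta}}$ commute, and the key operator identity
\[
\tilde\gamma\,\mathbf{P}^{(\lambda)}_{\pi_{\bm\theta}}=\gamma(1-\lambda)\mathbf{P}_{\pi_{\bm\theta}}(\bI-\gamma\lambda\mathbf{P}_{\pi_{\bm\theta}})^{-1},
\]
which follows from $\tilde\gamma(1-\gamma\lambda)=\gamma(1-\lambda)$ and the definition~\eqref{def:matrix-p-lam-return} of $\mathbf{P}^{(\lambda)}_{\pi_{\bm\theta}}$, I obtain the clean telescoping identity
\[
\sum_{t\ge0}(\gamma\lambda)^{t}\bm{\delta}^{\varphi}_{\policy,t}=\mathbf{r}^{(\lambda)}_{\pi_{\bm\theta}}+\tilde\gamma\,\mathbf{P}^{(\lambda)}_{\pi_{\bm\theta}}\bm{\varphi}-\bm{\varphi}.
\]

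The third step assembles the pieces. Taking $\E_{s\sim d^{\lambda}_{\policy}(\cdot)}$ of the component form of this identity and dividing by $1-\tilde\gamma$, the $\mathbf{r}^{(\lambda)}_{\pi_{\bm\theta}}$ term is exactly $J(\policy)$ by Lemma~\ref{lem:lam-return-objective}. It then remains to show $\frac{1}{1-\tilde\gamma}\E_{s\sim d^{\lambda}_{\policy}(\cdot)}\big[\varphi(s)-\tilde\gamma(\mathbf{P}^{(\lambda)}_{\pi_{\bm\theta}}\bm{\varphi})(s)\big]=\E_{s_0\sim\rho_{0}(\cdot)}[\varphi(s_0)]$, which rearranges to the claimed $\E_{s_0\sim\rho_0}[\varphi(s_0)]$ term. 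This is precisely the normalization property $(\bd^{\lambda}_{\policy})^{\top}(\bI-\tilde\gamma\mathbf{P}^{(\lambda)}_{\pi_{\bm\theta}})=(1-\tilde\gamma)\bm{\rho}_{0}^{\top}$ read off from the matrix form~\eqref{matrixversion-lambda-dis-state-distribution} of $\bd^{\lambda}_{\pi_{\bm\theta}}$, evaluated against $\bm{\varphi}$; substituting it closes the argument.

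The main obstacle I expect is the operator algebra in the second step, specifically coaxing $\tilde\gamma\mathbf{P}^{(\lambda)}_{\pi_{\bm\theta}}$ out of $(\bI-\gamma\lambda\mathbf{P}_{\pi_{\bm\theta}})^{-1}(\gamma\mathbf{P}_{\pi_{\bm\theta}}-\bI)$ and justifying the series manipulations (absolute convergence since $\gamma\lambda<1$). A secondary but genuine point of care is the left-versus-right (transpose) convention by which $\bd^{\lambda}_{\pi_{\bm\theta}}$ is propagated by $\mathbf{P}^{(\lambda)}_{\pi_{\bm\theta}}$, so that the normalization identity is applied on the correct side; getting this wrong would spuriously introduce $\big(\mathbf{P}^{(\lambda)}_{\pi_{\bm\theta}}\big)^{\top}$ and break the cancellation.
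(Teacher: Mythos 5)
Your proof is correct, and it assembles the same three ingredients as the paper's own proof --- Lemma \ref{lem:lam-return-objective}, the normalization identity (\ref{vector:state-distribution}) for $\bd^{\lambda}_{\pi_{\bm\theta}}$ tested against $\bm{\varphi}$, and the per-state identity $\sum_{t\ge 0}(\gamma\lambda)^{t}\delta^{\varphi}_{\policy,t}(s)=R^{(\lambda)}_{\pi_{\bm\theta}}(s)+\tilde\gamma\sum_{s'\in\calS}\Pro^{(\lambda)}_{\policy}(s'|s)\varphi(s')-\varphi(s)$ --- but it traverses them in the opposite direction and executes the hard step differently. The paper works forward from $J(\policy)$: it adds the zero quantity $\langle\bm{\rho}_{0}-\frac{1}{1-\tilde\gamma}\bd^{\lambda}_{\policy}+\frac{\tilde\gamma}{1-\tilde\gamma}\bP^{(\lambda)}_{\policy}\bd^{\lambda}_{\policy},\bm{\varphi}\rangle$ to the $\lambda$-return form of the objective, and then establishes the per-state identity by explicit element-wise index manipulation across (\ref{app-04})--(\ref{app-023}). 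You work backward from the TD-error sum: the factorization $\bm{\delta}^{\varphi}_{\policy,t}=\bP^{t}_{\policy}\bm{\delta}^{\varphi}_{\policy,0}$, the Neumann series, and the commutation identity $\tilde\gamma\bP^{(\lambda)}_{\policy}=\gamma(1-\lambda)\bP_{\policy}(\bI-\gamma\lambda\bP_{\policy})^{-1}$ yield that same identity in a few lines of operator algebra, after which the normalization identity peels off the $\E_{s_0\sim\rho_0}[\varphi(s_0)]$ term. What your route buys is brevity and transparency in the step on which the paper spends roughly two pages; what it costs is the need to justify the series inversions (immediate, since $\gamma\lambda<1$ and $\bP_{\policy}$ is stochastic) and to keep the transpose convention straight --- a caveat you rightly flag, and one on which the paper itself is loose: its matrix equations (\ref{matrixversion-lambda-dis-state-distribution}) and (\ref{vector:state-distribution}) silently write $\bP^{(\lambda)}_{\policy}\bd^{\lambda}_{\policy}$ where $(\bP^{(\lambda)}_{\policy})^{\top}\bd^{\lambda}_{\policy}$ is meant, while its element-wise expressions are the correct ones, exactly as your closing remark anticipates.
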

We present the proof of of Proposition \ref{objective-td-error-version} at the end of this section, see Section \ref{proof-pro-01-app}.

We introduce a vector $\bm{\delta}^{\varphi}_{\policy,t}\in\R^{|\calS|}$ and its components are: for any $s\in\calS$
\begin{flalign}
\label{revist-td-ex-error}
\bm{\delta}^{\varphi}_{\policy,t}[s]={{\delta}}^{\varphi}_{\policy,t}(s).
\end{flalign}
Then, we rewrite the objective as the following vector version
\begin{flalign}
\label{lam-return-phi-objective-vec-version}
J(\policy)=\E_{s_0\sim\rho_{0}(\cdot)}[\varphi(s_0)]
+
\dfrac{1}{1-\tilde \gamma}\sum_{t=0}^{\infty}\gamma^t \lambda^t
\langle
\bd_{\pi_{\bm {\theta}}}^{\lambda},\bm{\delta}^{\varphi}_{\policy,t}
\rangle,
\end{flalign}
where $\langle\cdot,\cdot\rangle$ denotes inner production between two vectors.

\subsection{Proof of Theorem \ref{them:general-performance-difference}}

\textbf{Theorem \ref{them:general-performance-difference}} 
(Generalized Policy Performance Difference)
\emph{
For any function $\varphi(\cdot):\calS\rightarrow\R$, for two arbitrary policy $\pi_{\bm\theta}$ and $\pi_{{\bm\theta}^{'}}$,  
for any $p,q\in[1,\infty)$ such that $\frac{1}{p}+\frac{1}{q}=1$,  
The following bound holds:
\begin{flalign} 
\label{bound-diff}
\dfrac{1}{1-\tilde \gamma}\sum_{t=0}^{\infty}\gamma^t\lambda^{t} M^{\varphi,-}_{p,q,t}(\policy,\policyy)
\leq J(\pi_{\bm \theta})-J(\pi_{{\bm \theta}^{'}})
\leq\dfrac{1}{1-\tilde \gamma}\sum_{t=0}^{\infty}\gamma^t \lambda^t  M^{\varphi,+}_{p,q,t}(\policy,\policyy),
\end{flalign}
where the terms $M^{\varphi,-}_{p,q,t}$ and $M^{\varphi,+}_{p,q,t}$ are defined in (\ref{app-term-01})-(\ref{app-term-02}).
}

\begin{proof} (of Theorem \ref{them:general-performance-difference})

We consider two arbitrary policies $\pi_{\bm\theta}$ and $\pi_{{\bm\theta}^{'}}$ with different parameters $\bm{\theta}$ and $\bm{\theta}^{'}$, let
\begin{flalign}
\label{dfifference-two-polic}
D_{t}^{\varphi,(\lambda)}(\policy,\policyy)=:
\langle
\bd_{\pi_{\bm {\theta}}}^{\lambda},\bm{\delta}^{\varphi}_{\policy,t}
\rangle
-
\langle
\bd_{\policyy}^{\lambda},\bm{\delta}^{\varphi}_{\policyy,t}
\rangle.
\end{flalign}
According to (\ref{lam-return-phi-objective-vec-version}), we obtain performance difference as follows,
\begin{flalign} 
\nonumber
J(\pi_{\bm \theta})-J(\pi_{{\bm \theta}^{'}})=&\dfrac{1}{1-\tilde \gamma}\sum_{t=0}^{\infty}\gamma^t \lambda^t
\left(
\langle
\bd_{\pi_{\bm {\theta}}}^{\lambda},\bm{\delta}^{\varphi}_{\policy,t}
\rangle
-
\langle
\bd_{\policyy}^{\lambda},\bm{\delta}^{\varphi}_{\policyy,t}
\rangle
\right)
\\
\label{objective-difference-01-app}
=&\dfrac{1}{1-\tilde\gamma}\sum_{t=0}^{\infty}\gamma^t \lambda^tD_{t}^{\varphi,(\lambda)}(\policy,\policyy),
\end{flalign}
which requires us to consider the boundedness of the difference $D_{t}^{\varphi,(\lambda)}(\policy,\policyy)$ (\ref{dfifference-two-polic}) .

{
\color{blue}
{
\textbf{Step 1: Bound the term $D_{t}^{\varphi,(\lambda)}(\policy,\policyy)$ (\ref{dfifference-two-polic}).}
}
}

We rewrite the first term of (\ref{dfifference-two-polic}) as follows,
\begin{flalign}
\label{td-error-ex-04}
\langle \bd_{\pi_{\bm {\theta}}}^{\lambda},{\bm{\delta}}^{\varphi}_{\policy,t}\rangle
=
\langle \bd_{\policyy}^{\lambda},{\bm{\delta}}^{\varphi}_{\policy,t}\rangle
+\langle \bd_{\pi_{\bm {\theta}}}^{\lambda}-\bd_{\policyy}^{\lambda},{\bm{\delta}}^{\varphi}_{\policy,t}\rangle,
\end{flalign}
which is bounded by applying H{\"o}lder's inequality to the term $\langle \bd_{\pi_{\bm {\theta}}}^{\lambda}-\bd_{\policyy}^{\lambda},{\bm{\delta}}^{\varphi}_{\policy,t}\rangle$, we rewrite (\ref{td-error-ex-04}) as follows,
\begin{flalign}
\nonumber
&\langle \bd_{\policyy}^{\lambda},{\bm{\delta}}^{\varphi}_{\policy,t}\rangle-\|\bd_{\pi_{\bm {\theta}}}^{\lambda}-\bd_{\policyy}^{\lambda}\|_{p}\|{\bm{\delta}}^{\varphi}_{\policy,t}\|_{q}
\\
\label{them:ineq-01}
\leq&
\langle \bd_{\pi_{\bm {\theta}}}^{\lambda},{\bm{\delta}}^{\varphi}_{\policy,t}\rangle
\leq
\langle \bd_{\policyy}^{\lambda},{\bm{\delta}}^{\varphi}_{\policy,t}\rangle
+
\|\bd_{\pi_{\bm {\theta}}}^{\lambda}-\bd_{\policyy}^{\lambda}\|_{p}\|{\bm{\delta}}^{\varphi}_{\policy,t}\|_{q}
,
\end{flalign}
where $p,q\in[1,\infty)$ and $\frac{1}{p}+\frac{1}{q}=1$. Let
\[
\epsilon^{\varphi,(\lambda)}_{p,q,t}(\policy,\policyy)=:\|\bd_{\pi_{\bm {\theta}}}^{\lambda}-\bd_{\policyy}^{\lambda}\|_{p}\|{\bm{\delta}}^{\varphi}_{\policy,t}\|_{q},
\]
then we rewrite Eq.(\ref{them:ineq-01}) as follows, 
\begin{flalign}
\label{them:ineq-01-01}
\langle \bd_{\policyy}^{\lambda},{\bm{\delta}}^{\varphi}_{\policy,t}\rangle-\epsilon^{\varphi,(\lambda)}_{p,q,t}(\policy,\policyy)
\leq
\langle \bd_{\pi_{\bm {\theta}}}^{\lambda},{\bm{\delta}}^{\varphi}_{\policy,t}\rangle
\leq
\langle \bd_{\policyy}^{\lambda},{\bm{\delta}}^{\varphi}_{\policy,t}\rangle
+
\epsilon^{\varphi,(\lambda)}_{p,q,t}(\policy,\policyy).
\end{flalign}
Let
\begin{flalign}
\label{def:l-t}
M_{t}^{\varphi}(\policy,\policyy)=:
\underbrace{
\langle \bd_{\policyy}^{\lambda},{\bm{\delta}}^{\varphi}_{\policy,t}\rangle
}_{\text{Term-I}}
-
\underbrace{
\langle
\bd_{\policyy}^{\lambda},\bm{\delta}^{\varphi}_{\policyy,t}
\rangle
}_{\text{Term-II}}
,
\end{flalign}
combining the (\ref{dfifference-two-polic}) and (\ref{them:ineq-01-01}), we achieve the boundedness of $D_{t}^{\varphi}(\policy,\policyy)$ as follows
\begin{flalign}
\label{them:ineq-01-02}
M_{t}^{\varphi}(\policy,\policyy)-\epsilon^{\varphi,(\lambda)}_{p,q,t}(\policy,\policyy)
\leq
D_{t}^{\varphi}(\policy,\policyy)
\leq
M_{t}^{\varphi}(\policy,\policyy)+\epsilon^{\varphi,(\lambda)}_{p,q,t}(\policy,\policyy).
\end{flalign}

{
\color{blue}
{
\textbf{Step 2: Analyze the term $M_{t}^{\varphi}(\policy,\policyy)$ (\ref{def:l-t}).}
}
}

To analyze (\ref{them:ineq-01-02}) further, we need to consider the first term appears in $M_{t}^{\varphi}(\policy,\policyy)$ (\ref{def:l-t}):
\begin{flalign}
\nonumber
\text{Term-I}~(\ref{def:l-t})=&
\langle \bd_{\policyy}^{\lambda},{\bm{\delta}}^{\varphi}_{\policy,t}\rangle\\
\label{revist-inner}
=&
\sum_{s\in\calS}d_{\policyy}^{\lambda}(s){{\delta}}^{\varphi}_{\policy,t}(s)
=\E_{s\sim d_{\policyy}^{\lambda}(\cdot)}\left[{{\delta}}^{\varphi}_{\policy,t}(s)\right]
\\
\label{revist-inner-01}
\overset{(\ref{revist-td-ex-error})}=&\E_{s\sim d_{\policyy}^{\lambda}(\cdot)}
\left[\E_{s_{t}\sim\Pro_{\policy}(\cdot|s)}[
\delta^{\varphi}_{\policy}(s_t)]\right].
\end{flalign}
We notice the following relationship
\begin{flalign}
\label{importance-sam-re}
\delta^{\varphi}_{\policy,t}(s)=
&\underset{\begin{subarray}{c} s_{t}\sim\Pro_{\policy}(\cdot|s)\\a_{t}\sim{\policy}(\cdot|s_t)\\ s_{t+1}\sim\Pro(\cdot|s_t,a_t) \end{subarray}}\E\left[\delta_t^{\varphi}\right]
=\underset{\begin{subarray}{c}  s_{t}\sim\Pro_{\policyy}(\cdot|s)\\a_{t}\sim{\policyy}(\cdot|s_t)\\ s_{t+1}\sim\Pro(\cdot|s_t,a_t) \end{subarray}}\E\left[\dfrac{\policy(a_t|s_t)}{\policyy(a_t|s_t)}\delta_t^{\varphi}\right],
\end{flalign}
which holds since we use importance sampling: for any distribution $p(\cdot)$ and $q(\cdot)$, for any random variable function $f(\cdot)$,
\[
\E_{x\sim p(x)}[f(x)]=\E_{x\sim q(x)}\left[\dfrac{p(x)}{q(x)}f(x)\right].
\]
According to (\ref{revist-inner}), (\ref{importance-sam-re}), we rewrite the term $\langle \bd_{\policyy}^{\lambda},{\bm{\delta}}^{\varphi}_{\policy,t}\rangle$ in Eq.(\ref{def:l-t}) as follows,
\begin{flalign}
\label{app-ex-td-01}
\text{Term-I}~(\ref{def:l-t})=
\langle \bd_{\policyy}^{\lambda},{\bm{\delta}}^{\varphi}_{\policy,t}\rangle
=
\sum_{s\in\calS}d_{\policyy}^{\lambda}(s)\left(\underset{\begin{subarray}{c} s_t \sim \Pro_{\policyy}(\cdot|s)\\ a_{t}\sim{\policyy}(\cdot|s_t)\\ s_{t+1}\sim\Pro(\cdot|s_t,a_t) \end{subarray}}\E\left[\dfrac{\policy(a_t|s_t)}{\policyy(a_t|s_t)}\delta_t^{\varphi}\right]\right).
\end{flalign}

Now, we consider the second term appears in $M_{t}^{\varphi}(\policy,\policyy)$ (\ref{def:l-t}):
\begin{flalign}
\nonumber
\text{Term-II}~(\ref{def:l-t})&=
\langle
\bd_{\policyy}^{\lambda},\bm{\delta}^{\varphi}_{\policyy,t}
\rangle\\
\label{app-ex-td-02}
&=\sum_{s\in\calS}d_{\policyy}^{\lambda}(s){\delta}^{\varphi}_{\policyy,t}(s)
=\sum_{s\in\calS}d_{\policyy}^{\lambda}(s)\left(\underset{\begin{subarray}{c} s_t \sim \Pro_{\policyy}(\cdot|s)\\ a_{t}\sim{\policyy}(\cdot|s_t)\\ s_{t+1}\sim\Pro(\cdot|s_t,a_t) \end{subarray}}\E\left[\delta_t^{\varphi}\right]\right).
\end{flalign}

Finally, take the results (\ref{app-ex-td-01}) and (\ref{app-ex-td-02}) to (\ref{def:l-t}),
we obtain the difference between $\langle \bd_{\policyy}^{\lambda},{\bm{\delta}}^{\varphi}_{\policy,t}\rangle$ and $\langle\bd_{\policyy}^{\lambda},\bm{\delta}^{\varphi}_{\policyy,t}\rangle$, i.e., we achieve a identity for $M_{t}^{\varphi}(\policy,\policyy)$ (\ref{def:l-t}) as follows,
\begin{flalign}
\nonumber
M_{t}^{\varphi}(\policy,\policyy)\overset{(\ref{def:l-t})}=&
\langle \bd_{\policyy}^{\lambda},{\bm{\delta}}^{\varphi}_{\policy,t}\rangle-
\langle
\bd_{\policyy}^{\lambda},\bm{\delta}^{\varphi}_{\policyy,t}
\rangle
\\
\label{diff-01}
\overset{(\ref{app-ex-td-01},(\ref{app-ex-td-02})}=&
\sum_{s\in\calS}d_{\policyy}^{\lambda}(s)\left(\underset{\begin{subarray}{c} s_t \sim \Pro_{\policyy}(\cdot|s)\\ a_{t}\sim{\policyy}(\cdot|s_t)\\ s_{t+1}\sim\Pro(\cdot|s_t,a_t) \end{subarray}}\E\left[\left(\dfrac{\policy(a_t|s_t)}{\policyy(a_t|s_t)}-1\right)\delta_t^{\varphi}\right]\right).
\end{flalign}
To simplify expression, we introduce a notation as follows,
\begin{flalign}
\Delta_{t}^{\varphi}(\policy,\policyy,s)&=:\underset{\begin{subarray}{c} s_t \sim \Pro_{\policyy}(\cdot|s)\\ a_{t}\sim{\policyy}(\cdot|s_t)\\ s_{t+1}\sim\Pro(\cdot|s_t,a_t) \end{subarray}}\E\left[\left(\dfrac{\policy(a_t|s_t)}{\policyy(a_t|s_t)}-1\right)\delta_t^{\varphi}\right],
\end{flalign}
and we use a vector $\bm{\Delta}_{t}^{\varphi}(\policy,\policyy)\in\R^{|\calS|}$ to store all the values $\{\Delta_{t}^{\varphi}(\policy,\policyy,s)\}_{s\in\calS}$:
\[
\bm{\Delta}_{t}^{\varphi}(\policy,\policyy)[s]=\Delta_{t}^{\varphi}(\policy,\policyy,s).
\]
Then we rewrite $\langle \bd_{\policyy}^{\lambda},{\bm{\delta}}^{\varphi}_{\policy,t}\rangle-\langle\bd_{\policyy}^{\lambda},\bm{\delta}^{\varphi}_{\policyy,t}\rangle$ (\ref{diff-01}) as follows,
\begin{flalign}
\nonumber
M_{t}^{\varphi}(\policy,\policyy)=&
\langle \bd_{\policyy}^{\lambda},{\bm{\delta}}^{\varphi}_{\policy,t}\rangle-
\langle
\bd_{\policyy}^{\lambda},\bm{\delta}^{\varphi}_{\policyy,t}
\rangle
\\
\nonumber
\overset{(\ref{diff-01})}=&
\sum_{s\in\calS}d_{\policyy}^{\lambda}(s)\Delta_{t}^{\varphi}(\policy,\policyy,s)
=
\langle \bd_{\policyy}^{\lambda},\bm{\Delta}_{t}^{\varphi}(\policy,\policyy)\rangle
.
\end{flalign}

{
\color{blue}
{
\textbf{Step 3: Bound on $J(\policy)-J(\policyy)$.}
}
}

Recall (\ref{them:ineq-01-02}), taking above result in it, we obtain
\begin{flalign}
\label{them:ineq-01-03}
\langle \bd_{\policyy}^{\lambda},\bm{\Delta}_{t}^{\varphi}(\policy,\policyy)\rangle-\epsilon^{\varphi,(\lambda)}_{p,q,t}(\policy,\policyy)
\leq
D_{t}^{\varphi}(\policy,\policyy)
\leq
\langle \bd_{\policyy}^{\lambda},\bm{\Delta}_{t}^{\varphi}(\policy,\policyy)\rangle+\epsilon^{\varphi,(\lambda)}_{p,q,t}(\policy,\policyy).
\end{flalign}

Finally, let
\begin{flalign}
\label{app-term-01}
&M^{\varphi,-}_{p,q,t}(\policy,\policyy)=
\langle \bd_{\policyy}^{\lambda},\bm{\Delta}_{t}^{\varphi}(\policy,\policyy)\rangle-\epsilon^{\varphi,(\lambda)}_{p,q,t}(\policy,\policyy)\\
\nonumber
=&\sum_{s\in\calS}d_{\policyy}^{\lambda}(s)\left(\underset{\begin{subarray}{c} s_t \sim \Pro_{\policyy}(\cdot|s)\\ a_{t}\sim{\policyy}(\cdot|s_t)\\ s_{t+1}\sim\Pro(\cdot|s_t,a_t) \end{subarray}}\E\left[\left(\dfrac{\policy(a_t|s_t)}{\policyy(a_t|s_t)}-1\right)\delta_t^{\varphi}\right]\right)
-
\|\bd_{\pi_{\bm {\theta}}}^{\lambda}-\bd_{\policyy}^{\lambda}\|_{p}\|{\bm{\delta}}^{\varphi}_{\policy,t}\|_{q}
\\
\nonumber
=&
\E_{s\sim{d}_{\policyy}^{\lambda}(\cdot)}
\left[
\underset{\begin{subarray}{c} s_t \sim \Pro_{\policyy}(\cdot|s)\\ a_{t}\sim{\policyy}(\cdot|s_t)\\ s_{t+1}\sim\Pro(\cdot|s_t,a_t) \end{subarray}}\E\left[\left(\dfrac{\policy(a_t|s_t)}{\policyy(a_t|s_t)}-1\right)\delta_t^{\varphi}\right]
\right]
-
\|\bd_{\pi_{\bm {\theta}}}^{\lambda}-\bd_{\policyy}^{\lambda}\|_{p}\|{\bm{\delta}}^{\varphi}_{\policy,t}\|_{q}.
\end{flalign}
and
\begin{flalign}
\label{app-term-02}
&M^{\varphi,+}_{p,q,t}(\policy,\policyy)=
\langle \bd_{\policyy}^{\lambda},\bm{\Delta}_{t}^{\varphi}(\policy,\policyy)\rangle+\epsilon^{\varphi,(\lambda)}_{p,q,t}(\policy,\policyy)\\
\nonumber
=&\sum_{s\in\calS}d_{\policyy}^{\lambda}(s)\left(\underset{\begin{subarray}{c} s_t \sim \Pro_{\policyy}(\cdot|s)\\ a_{t}\sim{\policyy}(\cdot|s_t)\\ s_{t+1}\sim\Pro(\cdot|s_t,a_t) \end{subarray}}\E\left[\left(\dfrac{\policy(a_t|s_t)}{\policyy(a_t|s_t)}-1\right)\delta_t^{\varphi}\right]\right)
+
\|\bd_{\pi_{\bm {\theta}}}^{\lambda}-\bd_{\policyy}^{\lambda}\|_{p}\|{\bm{\delta}}^{\varphi}_{\policy,t}\|_{q}
\\
\nonumber
=&
\E_{s\sim{d}_{\policyy}^{\lambda}(\cdot)}
\left[
\underset{\begin{subarray}{c} s_t \sim \Pro_{\policyy}(\cdot|s)\\ a_{t}\sim{\policyy}(\cdot|s_t)\\ s_{t+1}\sim\Pro(\cdot|s_t,a_t) \end{subarray}}\E\left[\left(\dfrac{\policy(a_t|s_t)}{\policyy(a_t|s_t)}-1\right)\delta_t^{\varphi}\right]
\right]
+
\|\bd_{\pi_{\bm {\theta}}}^{\lambda}-\bd_{\policyy}^{\lambda}\|_{p}\|{\bm{\delta}}^{\varphi}_{\policy,t}\|_{q}.
\end{flalign}

According to (\ref{objective-difference-01-app}) and (\ref{them:ineq-01-03}), we achieve the boundedness of performance difference between two arbitrary policies $\pi_{\bm\theta}$ and $\pi_{{\bm\theta}^{'}}$:
\begin{flalign} 
\label{objective-difference-app-004}
\underbrace{\dfrac{1}{1-\tilde \gamma}
\sum_{t=0}^{\infty}\gamma^t\lambda^{t} M^{\varphi,-}_{p,q,t}(\policy,\policyy)}_{=:L^{\varphi,-}_{p,q,}}
\leq J(\pi_{\bm \theta})-J(\pi_{{\bm \theta}^{'}})
\leq
\underbrace{
\dfrac{1}{1-\tilde \gamma}\sum_{t=0}^{\infty}\gamma^t \lambda^t  M^{\varphi,+}_{p,q,t}(\policy,\policyy)
}_{=:L^{\varphi,+}_{p,q,}}
.
\end{flalign}

\end{proof}

\subsection{Proof of Proposition \ref{objective-td-error-version}}
\label{proof-pro-01-app}

\begin{proof}(of Proposition \ref{objective-td-error-version}).

{
\color{blue}
{
\textbf{Step 1: Rewrite the objective $J(\policy)$ in Eq.(\ref{lam-return-objective}).}
}
}

We rewrite the discounted distribution $\bd_{\pi_{\bm {\theta}}}^{\lambda}$ (\ref{matrixversion-lambda-dis-state-distribution}) as follows,
\begin{flalign}
\label{vector:state-distribution}
\bm{\rho}_{0}-\dfrac{1}{1-{\tilde{\gamma}}}\bd_{\policy}^{\lambda}+\dfrac{{\tilde{\gamma}}}{1-{\tilde{\gamma}}}\bP^{(\lambda)}_{\policy}\bd_{\policy}^{\lambda}=\bm{0}.
\end{flalign}
Let $\varphi(\cdot)$ be a real number function defined on the state space $\calS$, i.e., $\varphi:\calS\rightarrow\R$.
Then we define a vector function $\bm{\phi}(\cdot)\in\R^{|\calS|}$ to collect all the values  $\{\varphi(s)\}_{s\in\calS}$, and its components are 
\[
\bm{\phi}[s]=\varphi(s),~~s\in\calS.
\]
Now, we take the inner product between the vector $\bm{\phi}$ and (\ref{vector:state-distribution}), we have
\begin{flalign}
\nonumber
0&=\langle \bm{\rho}_{0}-\dfrac{1}{1-{\tilde{\gamma}}}\bd_{\policy}^{\lambda}+\dfrac{{\tilde{\gamma}}}{1-{\tilde{\gamma}}}\bP^{(\lambda)}_{\policy}\bd_{\policy}^{\lambda},\bm{\phi} \rangle
\\
\label{state-distribution-inner-initial-vec}
&=
\langle \bm{\rho}_{0},\bm{\phi}\rangle
-\dfrac{1}{1-{\tilde{\gamma}}}\langle \bd_{\policy}^{\lambda},\bm{\phi}\rangle
+
\dfrac{{\tilde{\gamma}}}{1-{\tilde{\gamma}}}\langle\bP^{(\lambda)}_{\policy}\bd_{\policy}^{\lambda},\bm{\phi}\rangle.
\end{flalign}
We express the first term $\langle \bm{\rho}_{0},\bm{\phi}\rangle$ of (\ref{state-distribution-inner-initial-vec}) as follows,
\begin{flalign}
\label{first-term}
\langle \bm{\rho}_{0},\bm{\phi}\rangle=\sum_{s\in\calS}\rho_{0}(s)\varphi(s)=\E_{s\sim\rho_{0}(\cdot)}[\varphi(s)].
\end{flalign}
We express the second term $\langle \bd_{\policy}^{\lambda},\bm{\phi}\rangle$ of (\ref{state-distribution-inner-initial-vec}) as follows,
\begin{flalign}
\label{sec-term}
-\dfrac{1}{1-{\tilde{\gamma}}}\langle \bd_{\policy}^{\lambda},\bm{\phi}\rangle=-\dfrac{1}{1-{\tilde{\gamma}}}\sum_{s\in\calS} d_{\policy}^{\lambda} (s)\varphi(s)
=-\dfrac{1}{1-{\tilde{\gamma}}}\E_{s\sim d_{\policy}^{\lambda} (\cdot)} [\varphi(s)].
\end{flalign}
We express the third term $\langle {\tilde{\gamma}}\bP^{(\lambda)}_{\policy}\bd_{\policy}^{\lambda},\bm{\phi}\rangle$ of (\ref{state-distribution-inner-initial-vec}) as follows,
\begin{flalign}
\nonumber
\dfrac{{\tilde{\gamma}}}{1-{\tilde{\gamma}}}\langle\bP^{(\lambda)}_{\policy}\bd_{\policy}^{\lambda},\bm{\phi}\rangle=&\dfrac{{\tilde{\gamma}}}{1-{\tilde{\gamma}}}\sum_{s^{'}\in\calS}\left(\bP^{(\lambda)}_{\policy}\bd_{\policy}^{\lambda}\right)[s^{'}]\varphi(s^{'})\\
\label{third-term}
=&\dfrac{{\tilde{\gamma}}}{1-{\tilde{\gamma}}}
\sum_{s^{'}\in\calS}
\left(
\sum_{s\in\calS} \Pro_{\policy}^{(\lambda)}(s^{'}|s)d_{\policy}^{\lambda}(s)
\right)
\varphi(s^{'}).
\end{flalign}

According to Lemma \ref{lem:lam-return-objective}, put the results (\ref{lam-return-objective}) and (\ref{state-distribution-inner-initial-vec}) together, we have
\begin{flalign}
\nonumber
J(\policy)\overset{(\ref{lam-return-objective}),(\ref{state-distribution-inner-initial-vec})}=&\dfrac{1}{1-{\tilde{\gamma}}}\sum_{s\in\calS}d^{\lambda}_{\policy}(s)R^{(\lambda)}_{\pi_{\bm \theta}}(s)+\langle \bm{\rho}_{0}-\dfrac{1}{1-{\tilde{\gamma}}}\bd_{\policy}^{\lambda}+\dfrac{{\tilde{\gamma}}}{1-{\tilde{\gamma}}}\bP^{(\lambda)}_{\policy}\bd_{\policy}^{\lambda},\bm{\phi} \rangle\\
\label{lam-return-objective-01}
=&\E_{s_0\sim\rho_{0}(\cdot)}[\varphi(s_0)]
+
\dfrac{1}{1-{\tilde{\gamma}}}\sum_{s\in\calS}d^{\lambda}_{\policy}(s)
\left(
R^{(\lambda)}_{\pi_{\bm \theta}}(s)+{\tilde{\gamma}}
\sum_{s^{'}\in\calS} \Pro_{\policy}^{(\lambda)}(s^{'}|s)\varphi(s^{'})
-\varphi(s)
\right),
\end{flalign}
where the last equation holds since we unfold (\ref{state-distribution-inner-initial-vec}) according to (\ref{first-term})-(\ref{third-term}).

{
\color{blue}
{
\textbf{Step 2: Rewrite the term $\left(
R^{(\lambda)}_{\pi_{\bm \theta}}(s)+{\tilde{\gamma}}
\sum_{s^{'}\in\calS} \Pro_{\policy}^{(\lambda)}(s^{'}|s)\varphi(s^{'})
-\varphi(s)
\right)$ in Eq.(\ref{lam-return-objective-01}).}
}
}

Then, we unfold the second term of (\ref{lam-return-objective-01}) as follows,
\begin{flalign}
\label{app-04}
&R^{(\lambda)}_{\pi_{\bm \theta}}(s)+{\tilde{\gamma}}
\sum_{s^{'}\in\calS} \Pro_{\policy}^{(\lambda)}(s^{'}|s)\varphi(s^{'})
-\varphi(s)
\\
\nonumber
\overset{(\ref{lam-pro-value-02}),(\ref{lam-pro-value-03})}=&\sum_{{t}=0}^{\infty}({{\gamma}}\lambda\bP_{\policy})^{{t}}\mathbf{r}_{\pi_{\bm \theta}}[s]
+{\tilde{\gamma}}
(1-\gamma\lambda)\sum_{s^{'}\in\calS} \sum_{{t}=0}^{\infty}({{\gamma}}\lambda)^{{t}}\left(\bP^{{t}+1}_{\policy}[s,s^{'}]\right)\varphi(s^{'})
-\varphi(s)\\
\overset{(\ref{def:matrix-p-lam-return})}=&
\sum_{{t}=0}^{\infty}({{\gamma}}\lambda\bP_{\policy})^{{t}}\mathbf{r}_{\pi_{\bm \theta}}[s]
+{{\gamma}}
(1-\lambda)\sum_{s^{'}\in\calS} \sum_{{t}=0}^{\infty}({{\gamma}}\lambda)^{{t}}\Pro_{\policy}(s_{t+1}=s^{'}|s)\varphi(s^{'})
-\varphi(s).
\end{flalign}

Recall the terms $ \mathbf{P}^{(\lambda)}_{\pi_{\bm \theta}},~\mathbf{r}^{(\lambda)}_{\pi_{\bm \theta}}[s]$ defined in (\ref{def:matrix-p-lam-return})-(\ref{lam-pro-value-03}),
\begin{flalign}
\label{app-004}
R^{(\lambda)}_{\pi_{\bm \theta}}(s)+\gamma(1-\lambda)\sum_{s^{'}\in\calS} \Pro_{\policy}^{(\lambda)}(s^{'}|s)\varphi(s^{'})-\varphi(s)
\end{flalign}

We consider the first term $R^{(\lambda)}_{\pi_{\bm \theta}}(s)$ of (\ref{app-04}) as follows,
\begin{flalign}
\label{app-005}
R^{(\lambda)}_{\pi_{\bm \theta}}(s)\overset{(\ref{def:matrix-p-lam-return})-(\ref{lam-pro-value-03})}=
\mathbf{r}^{(\lambda)}_{\pi_{\bm \theta}}[s]=\sum_{{t}=0}^{\infty}(\gamma\lambda)^{t}\bP_{\policy}^{{t}}\mathbf{r}_{\pi_{\bm \theta}}[s]= \sum_{{t}=0}^{\infty}\sum_{s_t\in\calS}(\gamma\lambda)^{t}\Pro_{\policy}(s_{t}|s)R_{\policy}(s_t).
\end{flalign}

We consider the second term $\tilde\gamma\sum_{s\in\calS} \Pro_{\policy}^{(\lambda)}(s^{'}|s)\varphi(s)-\varphi(s)$ of (\ref{app-04}) as follows,
\begin{flalign}
\nonumber
&\tilde\gamma\sum_{s^{'}\in\calS} \Pro_{\policy}^{(\lambda)}(s^{'}|s)\varphi(s^{'})-\varphi(s)\\
\overset{(\ref{lam-pro-value-02})}=&\tilde\gamma
(1-\gamma\lambda)\sum_{s^{'}\in\calS} \sum_{{t}=0}^{\infty}(\gamma\lambda)^{{t}}\Pro_{\policy}(s_{t+1}=s^{'}|s)\varphi(s^{'})
-\varphi(s)\\
\overset{(\ref{def:matrix-p-lam-return})}
=&\gamma
(1-\lambda)\sum_{s^{'}\in\calS} \sum_{{t}=0}^{\infty}(\gamma\lambda)^{{t}}\Pro_{\policy}(s_{t+1}=s^{'}|s)\varphi(s^{'})
-\varphi(s)\\
\nonumber
=&\gamma\sum_{s^{'}\in\calS} \sum_{{t}=0}^{\infty}(\gamma\lambda)^{{t}}\Pro_{\policy}(s_{t+1}=s^{'}|s)\varphi(s^{'})
-\sum_{s^{'}\in\calS} 
\underbrace{\left(\sum_{{t}=0}^{\infty}(\gamma\lambda)^{{t}+1}\Pro_{\policy}(s_{t+1}=s^{'}|s)\varphi(s^{'})\right)}_{=\sum_{{t}=1}^{\infty}(\gamma\lambda)^{{t}}\Pro_{\policy}(s_{t}=s^{'}|s)\varphi(s^{'})}
-\varphi(s)\\
\label{app-001}
=&\gamma\sum_{s^{'}\in\calS} \sum_{{t}=0}^{\infty}(\gamma\lambda)^{{t}}\Pro_{\policy}(s_{t+1}=s^{'}|s)\varphi(s^{'})
-
\underbrace
{
\left(
\sum_{s^{'}\in\calS} \sum_{{t}=1}^{\infty}(\gamma\lambda)^{{t}}\Pro_{\policy}(s_{t}=s^{'}|s)\varphi(s^{'})+\varphi(s)
\right)
}_{=\sum_{s^{'}\in\calS} \sum_{{t}=0}^{\infty}(\gamma\lambda)^{{t}}\Pro_{\policy}(s_{t}=s^{'}|s)\varphi(s^{'})}
\\
\label{app-003}
=&\gamma\sum_{s^{'}\in\calS} \sum_{{t}=0}^{\infty}(\gamma\lambda)^{{t}}\Pro_{\policy}(s_{t+1}=s^{'}|s)\varphi(s^{'})
-\sum_{s_t\in\calS} \sum_{{t}=0}^{\infty}(\gamma\lambda)^{{t}}\Pro_{\policy}(s_{t}|s)\varphi(s),
\end{flalign}
where the equation from Eq.(\ref{app-001}) to Eq.(\ref{app-003}) holds since: according to (\ref{special-inititial-pro}), we use the following identity 
\begin{flalign}
\nonumber
\sum_{s^{'}\in\calS} \Pro_{\policy}(s_{0}=s^{'}|s)\varphi(s^{'})=\varphi(s).
\end{flalign}

Furthermore,  take the result (\ref{app-005}) and (\ref{app-003}) to (\ref{app-004}), we have
\begin{flalign}
\nonumber
&R^{(\lambda)}_{\pi_{\bm \theta}}(s)+\tilde\gamma
\sum_{s^{'}\in\calS} \Pro_{\policy}^{(\lambda)}(s^{'}|s)\varphi(s^{'})-\varphi(s)\\
\nonumber
=& \sum_{{t}=0}^{\infty}(\gamma\lambda)^{t}
\Bigg(
\sum_{s_t\in\calS}\Pro_{\policy}(s_{t}|s)
R_{\policy}(s_t)+\gamma\sum_{s^{'}\in\calS}
\underbrace{\Pro_{\policy}(s_{t+1}=s^{'}|s)\varphi(s^{'})}_{\overset{(\ref{pro-pi-t-step-app})}=
\sum_{s_t\in\calS}
\Pro_{\policy}(s_{t+1}=s^{'}|s_t)\Pro_{\policy}(s_t|s)\varphi(s^{'})}
\\
\label{app-020}
&
~~~~~~~~~~~~~~~~~~~~~~~~~~~~~~~~~~~~~~~~~~~~~~~~~~~~~~~~~~~~~~~~~~~~~~~~~~~~~~~~~~~~
-\sum_{s_{t}\in\calS}\Pro_{\policy}(s_{t}|s)\varphi(s_t)
\Bigg)
\\
\nonumber
=& \sum_{{t}=0}^{\infty}(\gamma\lambda)^{t}
\left(
\sum_{s_t\in\calS}\Pro_{\policy}(s_{t}|s)R_{\policy}(s_t)+\gamma\sum_{s_t\in\calS}\Pro_{\policy}(s_{t}|s)\sum_{s_{t+1}\in\calS}\Pro_{\policy}(s_{t+1}|s_{t})\varphi(s_{t+1})
\right.
\\
\label{app-021}
&
\left.
~~~~~~~~~~~~~~~~~~~~~~~~~~~~~~~~~~~~~~~~~~~~~~~~~~~~~~~~~~~~~~~~~~~~~~~~~~~~~~~~~~~~
-\sum_{s_t\in\calS}\Pro_{\policy}(s_{t}|s)\varphi(s_t)
\right)
\\
\nonumber
= &\sum_{{t}=0}^{\infty}(\gamma\lambda)^{t}\sum_{s_t\in\calS}
\Pro_{\policy}(s_{t}|s)
\left(
\underbrace{
\sum_{a_t\in\mathcal{A}}{\policy}(a_t|s_t)\sum_{s_{t+1}\in\calS}\Pro(s_{t+1}|s_t,a_t)r(s_{t+1}|s_t,a_t)
}_{=R_{\policy}(s_t)}
\right.
\\
\nonumber
&\left. ~~~~~~~~~~~~~~~~~~~~~~~~~~~~~~~~~~~~~~~~~~~~~~~~~~~~~~~~+\gamma\underbrace{\sum_{a_t\in\mathcal{A}}{\policy}(a_t|s_t)\sum_{s_{t+1}\in\calS}\Pro(s_{t+1}|s_t,a_t)}_{=\Pro_{\policy}(s_{t+1}|s_{t})}\varphi(s_{t+1})
-\varphi(s_{t})
\right)\\
\label{app-022}
=& \sum_{{t}=0}^{\infty}(\gamma\lambda)^{t}\sum_{s_t\in\calS}\Pro_{\policy}(s_{t}|s)\sum_{a_t\in\mathcal{A}}{\policy}(a_t|s_t)\sum_{s_{t+1}\in\calS}\Pro(s_{t+1}|s_t,a_t)
\left(r(s_{t+1}|s_t,a_t)+\gamma\varphi(s_{t+1})-\varphi(s_{t})\right)\\
\label{app-023}
=& \sum_{{t}=0}^{\infty}(\gamma\lambda)^{t}\E_{s_{t}\sim\Pro_{\policy}(\cdot|s),a_{t}\sim{\policy}(\cdot|s_t),s_{t+1}\sim\Pro(\cdot|s_t,a_t)}\left[r(s_{t+1}|s_t,a_t)+\gamma\varphi(s_{t+1})-\varphi(s_{t})\right],
\end{flalign}
the equation from Eq.(\ref{app-003}) to Eq.(\ref{app-020}) holds since:
\[
\Pro_{\policy}(s_{t+1}|s)\overset{(\ref{pro-pi-t-step-app})}=
\sum_{s_t\in\calS}
\Pro_{\policy}(s_{t+1}|s_t)\Pro_{\policy}(s_t|s);
\]
the equation from Eq.(\ref{app-020}) to Eq.(\ref{app-021}) holds since we use the Markov property of the definition of MDP: for each time $t\in\N$,
\[\Pro_{\policy}(s_{t+1}=s^{'}|s_t=s)=\Pro_{\policy}(s^{'}|s);\]
the equation (\ref{app-022}) the following identity:
\[
\sum_{a_t\in\mathcal{A}}{\policy}(a_t|s_t)=1,~~~~\sum_{s_{t+1}\in\calS}\Pro(s_{t+1}|s_t,a_t)=1,
\]
then
\[
\varphi(s_{t})=\sum_{a_t\in\mathcal{A}}{\policy}(a_t|s_t)\sum_{s_{t+1}\in\calS}\Pro(s_{t+1}|s_t,a)\varphi(s_{t}).
\]

{
\color{blue}
{
\textbf{Step 3: Put all the result together.}
}
}

Finally, let 
\begin{flalign}
\nonumber
\delta_t^{\varphi}&=r(s_{t+1}|s_t,a_t)+\gamma\varphi(s_{t+1})-\varphi(s_{t}),
\\
\nonumber
\delta^{\varphi}_{\policy,t}(s)&=\E_{s_{t}\sim\Pro_{\policy}(\cdot|s),a_{t}\sim{\policy}(\cdot|s_t),s_{t+1}\sim\Pro(\cdot|s_t,a_t)}\left[\delta_t^{\varphi}\right],
\end{flalign}
combining the results (\ref{lam-return-objective-01}) and (\ref{app-023}), we have
\begin{flalign}
\label{lam-return-phi-objective-prop}
J(\policy)=&\E_{s_0\sim\rho_{0}(\cdot)}[\varphi(s_0)]
+
\dfrac{1}{1-\tilde\gamma}\sum_{s\in\calS}d^{\lambda}_{\policy}(s)
\left(
\sum_{t=0}^{\infty}\gamma^t \lambda^t
\delta^{\varphi}_{\policy,t}(s)
\right)
\\
\nonumber
=&\E_{s_0\sim\rho_{0}(\cdot)}[\varphi(s_0)]
+
\dfrac{1}{1-\tilde\gamma}\E_{s\sim d^{\lambda}_{\policy}(\cdot)}
\left[
\sum_{t=0}^{\infty}\gamma^t \lambda^t
\delta^{\varphi}_{\policy,t}(s)
\right]
.
\end{flalign}
This concludes the proof of Proposition \ref{objective-td-error-version}.
\end{proof}

\subsection{Proposition~\ref{propo-03}}
\label{sec:prop-kl}

All above bound results appear in (\ref{pro1-bound-01}) and (\ref{pro2-bound-02}) can be extended for a total variational divergence to KL-divergence between policies, which are desirable for policy optimization.

We obtain
\begin{flalign}
\label{inequlities}
\E_{s\sim{d}_{\policyy}^{\lambda}(\cdot)}\left[D_{\text{TV}}(\policyy,\policy)[s]\right]
\leq&
\E_{s\sim{d}_{\policyy}^{\lambda}(\cdot)}\left[\sqrt{\frac{1}{2}\text{KL}(\policyy,\policy)[s]}\right]
\leq
\sqrt{\frac{1}{2}\E_{s\sim{d}_{\policyy}^{\lambda}(\cdot)}\left[\text{KL}(\policyy,\policy)[s]\right]},
\end{flalign}
where $\text{KL}(\cdot,\cdot)$ is KL-divergence, and \[\text{KL}(\policyy,\policy)[s]=\text{KL}(\policyy(\cdot|s),\policy(\cdot|s));\] the first inequality follows Pinsker's inequality \citep{csiszar2011information} and the second inequality follows Jensen's inequality.
According to (\ref{inequlities}), we obtain the next Proposition \ref{propo-03}.

\textbf{Proposition~\ref{propo-03}}.
\emph{
All the bounds in (\ref{pro1-bound-01}) and (\ref{pro2-bound-02}) hold if we make the following substitution:
\[
\E_{s\sim{d}_{\policyy}^{\lambda}(\cdot)}\left[D_{\emph{TV}}(\policyy,\policy)[s]\right]
\leftarrow
\sqrt{\frac{1}{2}\E_{s\sim{d}_{\policyy}^{\lambda}(\cdot)}\left[\emph{KL}(\policyy,\policy)[s]\right]}.
\]
}

\clearpage

\section{Lemma \ref{lem:difference-distri}}

In this section, we show Lemma \ref{lem:difference-distri} that presents an upper bound to the difference between two $\lambda$-version of normalized discounted distribution.
Before we present our main results, we review the norms induced by $p$-norms for matrix.

\subsection{Norms Induced by $p$-norms for Matrix}

If the $p$-norm for vectors $(1 \leq p \leq \infty)$ is used for both spaces 
$\R^{n}$ and $\R^{m}$, then the corresponding operator norm is:
\[
{\displaystyle \|\bA\|_{p}=\sup _{\bx\neq \bm{0}}{\frac {\|\bA\bx\|_{p}}{\|\bx\|_{p}}}.}
\]
These induced norms are different from the "entry-wise" $p$-norms and the \emph{Schatten} $p$-norms for matrices treated below, which are also usually denoted by 
$
{\displaystyle \|\bA\|_{p}.}
$

In the special cases of $p=1$ and $p=\infty$, the induced matrix norms can be computed or estimated by
\[{\displaystyle \|\bA\|_{1}=\max _{1\leq j\leq n}\sum _{i=1}^{m}|a_{ij}|,}\]
which is simply the maximum absolute column sum of the matrix;
\[{\displaystyle \|\bA\|_{\infty }=\max _{1\leq i\leq m}\sum _{j=1}^{n}|a_{ij}|,}\]
which is simply the maximum absolute row sum of the matrix.
Thus, the following equation holds
\begin{flalign}
\label{app-a-norm-1-infty}
 \|\bA^{\top}\|_{\infty }= \|\bA\|_{1}.
\end{flalign}

\subsection{Lemma \ref{lem:difference-distri}}
\label{sec:difference-distri}

\begin{lemma}
\label{lem:difference-distri}
The divergence between discounted future state visitation distributions, $\|\bd_{\policyy}^{\lambda}-\bd_{\pi_{\bm {\theta}}}^{\lambda}\|_1$ is bounded as follows,
\begin{flalign}
\nonumber
\|\bd_{\policyy}^{\lambda}-\bd_{\pi_{\bm {\theta}}}^{\lambda}\|_1&\leq
\dfrac{1}{1-\tilde{\gamma}}\cdot\dfrac{\tilde{\gamma}\left(\gamma\lambda(|\calS|-1)+1\right)}{1-\gamma\lambda}\E_{s\sim d_{\pi_{\bm {\theta}}}^{\lambda}(\cdot)}\left[2D_{\mathrm{TV}}(\policyy,\policy)[s]\right],
\end{flalign}
where $D_{\mathrm{TV}}(\policyy,\policy)[s]$ is the total variational divergence between action distributions at state $s$, i.e.,
\[
2D_{\mathrm{TV}}(\policyy,\policy)[s]=\sum_{a\in\calA}\left|{{\policyy}}(a|s)-{{\policy}}(a|s)\right|.
\]
\end{lemma}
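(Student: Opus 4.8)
The plan is to reduce the $\ell_1$ divergence between the two $\lambda$-discounted visitation distributions to a single-step total-variation quantity, following the resolvent strategy of \citet{AchiamHTA17} but carried out at the level of the $\lambda$-transition kernel $\bP^{(\lambda)}_{\policy}$. I would start from the closed form $\bd_{\policy}^{\lambda}=(1-\tilde{\gamma})(\bI-\tilde{\gamma}(\bP^{(\lambda)}_{\policy})^{\top})^{-1}\bm{\rho}_{0}$ (the forward kernel $(\bP^{(\lambda)}_{\policy})^{\top}$ is column-stochastic, which is exactly what makes the induced $\ell_1$ operator norm equal to one) and apply the matrix-inverse identity $\bA^{-1}-\bB^{-1}=\bB^{-1}(\bA-\bB)\bA^{-1}$ with $\bA=\bI-\tilde{\gamma}(\bP^{(\lambda)}_{\policy})^{\top}$ and $\bB=\bI-\tilde{\gamma}(\bP^{(\lambda)}_{\policyy})^{\top}$. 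Since $\bA-\bB=\tilde{\gamma}(\bP^{(\lambda)}_{\policyy}-\bP^{(\lambda)}_{\policy})^{\top}$ and $(1-\tilde{\gamma})\bA^{-1}\bm{\rho}_{0}=\bd^{\lambda}_{\policy}$, this produces the factorization
\[
\bd^{\lambda}_{\policyy}-\bd^{\lambda}_{\policy}=\tilde{\gamma}\,\bB^{-1}\big(\bP^{(\lambda)}_{\policyy}-\bP^{(\lambda)}_{\policy}\big)^{\top}\bd^{\lambda}_{\policy},
\]
which vanishes identically when $\policy=\policyy$, as it must.

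Next I would bound the two factors separately. For the resolvent factor, writing $\bB^{-1}=\sum_{t\ge0}\tilde{\gamma}^{t}\big((\bP^{(\lambda)}_{\policyy})^{\top}\big)^{t}$ and using that each power of the column-stochastic kernel has $\ell_1$ operator norm one gives $\|\bB^{-1}\|_{1}\le\sum_{t\ge0}\tilde{\gamma}^{t}=\frac{1}{1-\tilde{\gamma}}$, which supplies the leading $\frac{1}{1-\tilde{\gamma}}$ in the statement. For the second factor, rather than discarding $\bd^{\lambda}_{\policy}$ I would keep it so as to retain the expectation: expanding in coordinates and pushing the absolute value inside,
\[
\big\|(\bP^{(\lambda)}_{\policyy}-\bP^{(\lambda)}_{\policy})^{\top}\bd^{\lambda}_{\policy}\big\|_{1}\le\E_{s\sim d_{\policy}^{\lambda}(\cdot)}\Big[\big\|(\bP^{(\lambda)}_{\policyy}-\bP^{(\lambda)}_{\policy})[s,:]\big\|_{1}\Big].
\]

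The crux is then to control the row $\ell_1$-norm of the difference of the $\lambda$-averaged kernels by the single-step total variation. Here I would use the Bellman-type recursion $\bP^{(\lambda)}_{\policy}=(1-\gamma\lambda)\bP_{\policy}+\gamma\lambda\,\bP_{\policy}\bP^{(\lambda)}_{\policy}$, which after subtracting the two policies rearranges to $\bP^{(\lambda)}_{\policyy}-\bP^{(\lambda)}_{\policy}=(\bI-\gamma\lambda\bP_{\policy})^{-1}(\bP_{\policyy}-\bP_{\policy})\big[(1-\gamma\lambda)\bI+\gamma\lambda\bP^{(\lambda)}_{\policyy}\big]$, expressing the whole object through the single-step difference $\bP_{\policyy}-\bP_{\policy}$, whose row-$s$ $\ell_1$-norm is exactly $2D_{\mathrm{TV}}(\policyy,\policy)[s]$. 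The $t=0$ term of the resolvent contributes $2D_{\mathrm{TV}}(\policyy,\policy)[s]$ directly, the geometric factor $(\bI-\gamma\lambda\bP_{\policy})^{-1}$ together with the row-stochastic matrix $[(1-\gamma\lambda)\bI+\gamma\lambda\bP^{(\lambda)}_{\policyy}]$ produce the $\frac{1}{1-\gamma\lambda}$ scaling, and the remaining ($t\ge1$) terms must be estimated more crudely — this is where the dimension $|\calS|$ enters. I expect this last estimate to be the main obstacle: the multi-step terms involve the discrepancy between $\policy$ and $\policyy$ at states \emph{reachable} from $s$ rather than at $s$ itself, so no genuinely per-state bound is possible, and folding these contributions back into the single expectation $\E_{s\sim d^{\lambda}_{\policy}}[2D_{\mathrm{TV}}(\policyy,\policy)[s]]$ is what forces the worst-case factor $\gamma\lambda(|\calS|-1)+1$. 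Assembling the three pieces yields the claimed bound, and as a sanity check, setting $\lambda\to0$ collapses $\bP^{(\lambda)}_{\policy}$ to $\bP_{\policy}$, sends $\tilde{\gamma}\to\gamma$, and reduces the prefactor to $\frac{\gamma}{1-\gamma}$, recovering the classical CPO distribution-shift lemma.
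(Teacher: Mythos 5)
Your setup is sound and, up to the decisive step, coincides with the paper's own argument: the factorization $\bd^{\lambda}_{\policyy}-\bd^{\lambda}_{\policy}=\tilde{\gamma}\,\bB^{-1}\bigl(\bP^{(\lambda)}_{\policyy}-\bP^{(\lambda)}_{\policy}\bigr)^{\top}\bd^{\lambda}_{\policy}$ is exactly the paper's Eq.~(\ref{app-error-gap-01}), the bound $\|\bB^{-1}\|_{1}\leq\frac{1}{1-\tilde{\gamma}}$ is the paper's Eq.~(\ref{app-distri-difference-02}), and keeping $\bd^{\lambda}_{\policy}$ so as to reduce to $\E_{s\sim d^{\lambda}_{\policy}}\bigl[\|(\bP^{(\lambda)}_{\policyy}-\bP^{(\lambda)}_{\policy})[s,:]\|_{1}\bigr]$ is how the paper opens Lemma~\ref{app-lem-tem-result}. (Your statement of the inverse identity has a sign slip — it should read $\bB^{-1}-\bA^{-1}=\bB^{-1}(\bA-\bB)\bA^{-1}$ — but the factorization you derive from it is the correct one, so this is cosmetic.) Your recursion $\bP^{(\lambda)}_{\policy}=(1-\gamma\lambda)\bP_{\policy}+\gamma\lambda\,\bP_{\policy}\bP^{(\lambda)}_{\policy}$ and the resulting identity $\bP^{(\lambda)}_{\policyy}-\bP^{(\lambda)}_{\policy}=(\bI-\gamma\lambda\bP_{\policy})^{-1}(\bP_{\policyy}-\bP_{\policy})\bigl[(1-\gamma\lambda)\bI+\gamma\lambda\bP^{(\lambda)}_{\policyy}\bigr]$ both check out, and as a single identity this is arguably cleaner than the paper's split of the $t$-step transition differences into a first-action-difference term and a difference-of-resolvents term $\bF_{\policyy}-\bF_{\policy}$.

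The genuine gap is that you stop precisely at the step that carries all of the content of the lemma. Expanding your factorization, the row-$s$ norm is bounded by $\sum_{t\geq0}(\gamma\lambda)^{t}\,\E_{s_t\sim\Pro_{\policy}(s_t=\cdot\,|s)}\bigl[2D_{\mathrm{TV}}(\policyy,\policy)[s_t]\bigr]$: the $t=0$ term gives the per-state quantity $2D_{\mathrm{TV}}(\policyy,\policy)[s]$, but every $t\geq1$ term is an expectation of the TV gap under the \emph{shifted} distribution $(\bP^{t}_{\policy})^{\top}d^{\lambda}_{\policy}$, not under $d^{\lambda}_{\policy}$ itself. To "fold these back" into $C\cdot\E_{s\sim d^{\lambda}_{\policy}}[2D_{\mathrm{TV}}(\policyy,\policy)[s]]$ one needs a comparison between the shifted distributions and $d^{\lambda}_{\policy}$, and none exists in general: if $d^{\lambda}_{\policy}$ places little mass on the states where the policies disagree, the right-hand side can be arbitrarily smaller than the left. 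So this is not a routine estimate that can be deferred with "I expect" — it is the crux, and your sketch offers no mechanism for it. For comparison, the paper closes this step (Steps 2 and 4 of Lemma~\ref{app-lem-tem-result}) by bounding the multi-step contribution by the state-independent global sum $\frac{2\gamma\lambda}{1-\gamma\lambda}\sum_{s'\in\calS}D_{\mathrm{TV}}(\policyy,\policy)[s']$ and then substituting $\sum_{s'\in\calS}D_{\mathrm{TV}}(\policyy,\policy)[s']\rightarrow|\calS|\,D_{\mathrm{TV}}(\policyy,\policy)[s]$ inside the expectation over $s\sim d^{\lambda}_{\policy}$; that substitution is exactly what manufactures the factor $\gamma\lambda(|\calS|-1)+1$. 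You have therefore correctly located the weakest point of the argument (the paper itself states this substitution as an equality, which it is not in general), but locating it is not resolving it: to complete your proof you must either adopt the paper's substitution explicitly or supply a genuine change-of-measure argument, and the proposal contains neither.
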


\begin{proof} (of Lemma \ref{lem:difference-distri}).
Recall Eq.(\ref{matrixversion-lambda-dis-state-distribution}), we know,
\[
\bd_{\pi_{\bm {\theta}}}^{\lambda}=(1-\tilde \gamma)\sum_{t=0}^{\infty}\left(\gamma\bP^{(\lambda)}_{\policy}\right)^{t}\bm{\rho}_{0}=(1-\tilde \gamma)\left(\bI-\tilde{\gamma}\bP^{(\lambda)}_{\policy}\right)^{-1}\bm{\rho}_{0}.
\]
To short the expression, we introduce some additional notations as follows.
\begin{flalign}
\label{app-g-01}
\bG_{\policy}=\left(\bI-\tilde{\gamma}\bP^{(\lambda)}_{\policy}\right)^{-1},~\bG_{\policyy}=\left(\bI-\tilde{\gamma}\bP^{(\lambda)}_{\policyy}\right)^{-1},~\bD=\bP^{(\lambda)}_{\policyy}-\bP^{(\lambda)}_{\policy}.
\end{flalign}

Then, after some simple algebra, the following holds
\begin{flalign}
\label{app-g-02}
\bG_{\policy}^{-1}-\bG_{\policyy}^{-1}=\left(\bI-\tilde{\gamma}\bP^{(\lambda)}_{\policy}\right)-\left(\bI-\tilde{\gamma}\bP^{(\lambda)}_{\policyy}\right)=\tilde\gamma\bD.
\end{flalign}
Furthermore, by left-multiplying by $\bG_{\policy}$ and right-multiplying by $\bG_{\policyy}$, we achieve
\begin{flalign}
\label{app-g-03}
\bG_{\policyy}-\bG_{\policy}=\tilde\gamma\bG_{\policyy}\bD\bG_{\policy}.
\end{flalign}
Grouping all the results from (\ref{app-g-01})-(\ref{app-g-03}), recall (\ref{matrixversion-lambda-dis-state-distribution}),
\begin{flalign}
\label{matrixversion-lambda-dis-state-distribution-001}
\bd_{\pi_{\bm {\theta}}}^{\lambda}=(1-\tilde \gamma)\sum_{t=0}^{\infty}\left(\gamma\bP^{(\lambda)}_{\policy}\right)^{t}\bm{\rho}_{0}=(1-\tilde \gamma)\left(\bI-\tilde{\gamma}\bP^{(\lambda)}_{\policy}\right)^{-1}\bm{\rho}_{0}=(1-\tilde \gamma)\bG_{\policy}\bm{\rho}_{0},
\end{flalign}
then we have
\begin{flalign}
\nonumber
\bd_{\policyy}^{\lambda}-\bd_{\pi_{\bm {\theta}}}^{\lambda}
=&(1-\tilde \gamma)\left(\bG_{\policyy}-\bG_{\policy}\right)\bm{\rho}_{0}\\
\nonumber
\overset{(\ref{app-g-03})}=&
(1-\tilde \gamma)\tilde\gamma\bG_{\policyy}\bD\bG_{\policy}\bm{\rho}_{0}
\\
\label{app-error-gap-01}
\overset{(\ref{matrixversion-lambda-dis-state-distribution-001})}=&\tilde\gamma\bG_{\policyy}\bD\bd_{\pi_{\bm {\theta}}}^{\lambda}.
\end{flalign}
Applying (\ref{app-error-gap-01}), we have
\begin{flalign}
\label{app-distri-difference-01}
\|\bd_{\policyy}^{\lambda}-\bd_{\pi_{\bm {\theta}}}^{\lambda}\|_1\overset{(\ref{app-error-gap-01})}\leq
\tilde\gamma\|\bG_{\policyy}\|_1\|\bD\bd_{\pi_{\bm {\theta}}}^{\lambda}\|_1.
\end{flalign}
Firstly, we bound the term $\|\bG_{\policyy}\|_1$ as follows,
\begin{flalign}
\label{app-distri-difference-02}
\|\bG_{\policyy}\|_1=\left\|\left(\bI-\tilde{\gamma}\bP^{(\lambda)}_{\policyy}\right)^{-1}\right\|_1
\leq\sum_{t=0}^{\infty}\tilde{\gamma}^{t}\left\|\bP^{(\lambda)}_{\policyy}\right\|_1=\dfrac{1}{1-\tilde\gamma}.
\end{flalign}
Thus, recall $\tilde{\gamma}=\dfrac{\gamma(1-\lambda)}{1-\gamma\lambda}$, we obtain
\begin{flalign}
\|\bd_{\policyy}^{\lambda}-\bd_{\pi_{\bm {\theta}}}^{\lambda}\|_1\leq&
\tilde\gamma\|\bG_{\policyy}\|_1\|\bD\bd_{\pi_{\bm {\theta}}}^{\lambda}\|_1\leq\dfrac{\tilde{\gamma}}{1-\tilde{\gamma}}\|\bD\bd_{\pi_{\bm {\theta}}}^{\lambda}\|_1\\
\leq &\dfrac{1}{1-\tilde{\gamma}}\cdot\dfrac{\tilde{\gamma}\left(\gamma\lambda(|\calS|-1)+1\right)}{1-\gamma\lambda}\E_{s\sim d_{\pi_{\bm {\theta}}}^{\lambda}(\cdot)}\left[2D_{\mathrm{TV}}(\policyy,\policy)[s]\right],
\end{flalign}
where the last equation holds due to Lemma \ref{app-lem-tem-result},
this concludes the proof of Lemma \ref{lem:difference-distri} .
\end{proof}

\begin{lemma} 
\label{app-lem-tem-result}
The term $\|\bD\bd_{\pi_{\bm {\theta}}}^{\lambda}\|_1$ is bounded as follows,
\begin{flalign}
\nonumber
\|\bD\bd_{\pi_{\bm {\theta}}}^{\lambda}\|_1
  \leq\dfrac{\gamma\lambda(|\calS|-1)+1}{1-\gamma\lambda}\E_{s\sim d_{\pi_{\bm {\theta}}}^{\lambda}(\cdot)}\left[2D_{\mathrm{TV}}(\policyy,\policy)[s]\right].
\end{flalign}
\end{lemma}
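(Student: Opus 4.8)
The plan is to reduce the whole estimate to the one-step transition gap $\Delta\bP := \bP_{\policyy}-\bP_{\policy}$, whose $s$-th column carries $\ell_1$-mass controlled by the total variation at $s$: since $\Pro_{\pi}(s'|s)=\sum_{a\in\calA}\pi(a|s)\Pro(s'|s,a)$, the triangle inequality gives $\sum_{s'}\left|\Pro_{\policyy}(s'|s)-\Pro_{\policy}(s'|s)\right|\le\sum_{a\in\calA}\left|\policyy(a|s)-\policy(a|s)\right|=2D_{\mathrm{TV}}(\policyy,\policy)[s]$. First I would expand $\bD=\bP^{(\lambda)}_{\policyy}-\bP^{(\lambda)}_{\policy}=(1-\gamma\lambda)\sum_{t=0}^{\infty}(\gamma\lambda)^{t}\left(\bP^{t+1}_{\policyy}-\bP^{t+1}_{\policy}\right)$ and apply the telescoping identity $\bP^{t+1}_{\policyy}-\bP^{t+1}_{\policy}=\sum_{j=0}^{t}\bP^{j}_{\policyy}\,\Delta\bP\,\bP^{t-j}_{\policy}$. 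Multiplying by $\bd^{\lambda}_{\policy}$, taking $\ell_1$ norms, and using that stochastic matrices are $\ell_1$ non-expansive (the fact $\|\bP^{(\lambda)}_{\policyy}\|_1\le1$ already invoked in (\ref{app-distri-difference-02})) discards the left factors $\bP^{j}_{\policyy}$ and leaves, with $m=t-j$, the estimate $\|\bD\bd^{\lambda}_{\policy}\|_1\le(1-\gamma\lambda)\sum_{t=0}^{\infty}(\gamma\lambda)^{t}\sum_{m=0}^{t}\left\|\Delta\bP\,\bP^{m}_{\policy}\bd^{\lambda}_{\policy}\right\|_1$.

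Next I would bound each inner term. Because $\bw:=\bP^{m}_{\policy}\bd^{\lambda}_{\policy}$ is a nonnegative probability vector, decomposing $\Delta\bP\,\bw=\sum_{s}\bw[s]\,\Delta\bP_{\cdot,s}$ and using the column bound above yields $\|\Delta\bP\,\bw\|_1\le\sum_{s}\bw[s]\,\|\Delta\bP_{\cdot,s}\|_1\le\E_{s\sim\bw}\!\left[2D_{\mathrm{TV}}(\policyy,\policy)[s]\right]$. The $m=0$ contribution is exactly $\E_{s\sim\bd^{\lambda}_{\policy}}\!\left[2D_{\mathrm{TV}}(\policyy,\policy)[s]\right]$; since it occurs once for every $t$ with weight $(1-\gamma\lambda)(\gamma\lambda)^{t}$, summing the geometric series contributes the leading coefficient $1$.

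The \emph{main obstacle} is the remaining $m\ge1$ terms, which are expectations against the pushforward occupancies $\bP^{m}_{\policy}\bd^{\lambda}_{\policy}$ rather than against $\bd^{\lambda}_{\policy}$ itself; these measures are genuinely different, so no clean cancellation is available. The crux is therefore a comparison estimate of the form $\E_{s\sim\bP^{m}_{\policy}\bd^{\lambda}_{\policy}}\!\left[2D_{\mathrm{TV}}(\policyy,\policy)[s]\right]\le|\calS|\,\E_{s\sim\bd^{\lambda}_{\policy}}\!\left[2D_{\mathrm{TV}}(\policyy,\policy)[s]\right]$, which I would obtain by a crude entrywise control of a pushforward probability vector (each entry is at most $1$) against the $\tfrac{1}{|\calS|}$-scale lower bound that the occupancy $\bd^{\lambda}_{\policy}$ inherits from $\rho_0$. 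Feeding this into the double sum and evaluating $\sum_{m\ge1}(\gamma\lambda)^{m}=\tfrac{\gamma\lambda}{1-\gamma\lambda}$ produces the correction $\tfrac{\gamma\lambda}{1-\gamma\lambda}|\calS|\,\E_{s\sim\bd^{\lambda}_{\policy}}\!\left[2D_{\mathrm{TV}}\right]$, and adding the leading $1$ gives the stated constant $\tfrac{\gamma\lambda(|\calS|-1)+1}{1-\gamma\lambda}$. I expect this occupancy-comparison step to be the delicate part: it is where the dimension factor $|\calS|$ becomes unavoidable and where the bound is loosest, so I would be careful to state precisely which property of $\bd^{\lambda}_{\policy}$ licenses it.

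As a cross-check I would also run a cleaner resolvent route. Writing $\bH_{\policy}=(\bI-\gamma\lambda\bP_{\policy})^{-1}$ and using $\bP^{(\lambda)}_{\pi}=(1-\gamma\lambda)\bP_{\pi}\bH_{\pi}$ together with the resolvent identity $\bH_{\policyy}-\bH_{\policy}=\gamma\lambda\bH_{\policyy}\Delta\bP\,\bH_{\policy}$, one gets the compact factorization $\bD=\left[\gamma\lambda\bP^{(\lambda)}_{\policyy}+(1-\gamma\lambda)\bI\right]\Delta\bP\,\bH_{\policy}$. The bracket is a convex combination of $\ell_1$ non-expansive matrices, hence $\ell_1$ non-expansive, so $\|\bD\bd^{\lambda}_{\policy}\|_1\le\|\Delta\bP\,\bH_{\policy}\bd^{\lambda}_{\policy}\|_1$, where $\bu:=\bH_{\policy}\bd^{\lambda}_{\policy}=\sum_{k\ge0}(\gamma\lambda)^{k}\bP^{k}_{\policy}\bd^{\lambda}_{\policy}$ is nonnegative with $\ell_1$-mass $\tfrac{1}{1-\gamma\lambda}$. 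This isolates the factor $\tfrac{1}{1-\gamma\lambda}$ transparently and reduces the problem to the same comparison between the pushforward occupancies and $\bd^{\lambda}_{\policy}$, confirming that the $|\calS|$ dependence enters only through that single step.
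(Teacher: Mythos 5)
Your reduction is sound up to the step you yourself flag as the crux, and your geometric-series bookkeeping does reproduce the stated constant (the resolvent cross-check is also correct). But that crux step is a genuine gap: the comparison
\[
\E_{s\sim\bP^{m}_{\policy}\bd^{\lambda}_{\policy}}\left[2D_{\mathrm{TV}}(\policyy,\policy)[s]\right]\leq|\calS|\,\E_{s\sim\bd^{\lambda}_{\policy}}\left[2D_{\mathrm{TV}}(\policyy,\policy)[s]\right]
\]
is false in general, and the justification you sketch rests on a false premise. You need $d^{\lambda}_{\policy}(s)\geq 1/|\calS|$ for every $s$, but a probability vector on $|\calS|$ states can satisfy this only if it is exactly uniform; $d^{\lambda}_{\policy}$ is generically non-uniform, vanishes off the states reachable from the support of $\rho_0$, and the only lower bound it inherits from the initial distribution is $d^{\lambda}_{\policy}(s)\geq(1-\tilde\gamma)\rho_0(s)$. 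Concretely: take two states with $s_1\to s_2$ deterministically, $s_2$ absorbing under $\policy$, $\rho_0=\delta_{s_1}$, the two policies agreeing at $s_1$ and disagreeing totally at $s_2$. Then $\bd^{\lambda}_{\policy}=(1-\tilde\gamma)\delta_{s_1}+\tilde\gamma\delta_{s_2}$ while the one-step pushforward is $\delta_{s_2}$, so the left side equals $2$ and the right side equals $4\tilde\gamma$: the inequality fails whenever $\tilde\gamma<1/2$. The natural repair also fails: the valid entrywise bound $\bd^{\lambda}_{\policy}\geq\tilde\gamma(1-\gamma\lambda)(\gamma\lambda)^{m-1}\,\bP^{m}_{\policy}\bd^{\lambda}_{\policy}$ (for $m\geq1$) yields a comparison constant of order $(\gamma\lambda)^{-(m-1)}$, and then $\sum_{m\geq1}(\gamma\lambda)^{m}(\gamma\lambda)^{-(m-1)}$ diverges. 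So the proposal cannot be completed as written.

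It is worth seeing how this relates to the paper's own proof, which takes a different decomposition: a first-step policy-difference term plus a resolvent-difference term via $\bF_{\policyy}-\bF_{\policy}=\gamma\lambda\,\bF_{\policyy}(\bP_{\policyy}-\bP_{\policy})\bF_{\policy}$, with the multi-step piece bounded by the \emph{unweighted} sum $\frac{2\gamma\lambda}{1-\gamma\lambda}\sum_{s}D_{\mathrm{TV}}(\policyy,\policy)[s]$. That is exactly what your $m\geq1$ terms give if you replace the false comparison by the valid crude bound that each entry of a pushforward probability vector is at most $1$; the two routes meet at the same intermediate expression. The paper then converts $\sum_{s}D_{\mathrm{TV}}(\policyy,\policy)[s]$ into $|\calS|\,\E_{s\sim d^{\lambda}_{\policy}}[D_{\mathrm{TV}}(\policyy,\policy)[s]]$ in the final display of its proof (written, in fact, as an equality), and that substitution is subject to the same objection and the same two-state counterexample (the sum equals $1$ while the $d^{\lambda}$-expectation is only $\tilde\gamma$). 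So you have correctly isolated the lemma's weak point — the $|\calS|$ factor has to buy the passage from a uniform or pushforward weighting to a $d^{\lambda}$ weighting, and no such passage is available — but your proposed occupancy-comparison lemma is false, and your proof does not close. What you \emph{can} prove along your lines is either the intermediate bound $\E_{s\sim d^{\lambda}_{\policy}}[2D_{\mathrm{TV}}(\policyy,\policy)[s]]+\frac{\gamma\lambda}{1-\gamma\lambda}\sum_{s}2D_{\mathrm{TV}}(\policyy,\policy)[s]$, or the stated constant $\frac{\gamma\lambda(|\calS|-1)+1}{1-\gamma\lambda}$ multiplying $\max_{s}2D_{\mathrm{TV}}(\policyy,\policy)[s]$ in place of the $d^{\lambda}$-expectation.
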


\begin{proof}
Now, we analyze $\|\bD\bd_{\pi_{\bm {\theta}}}^{\lambda}\|_1$ as follows,
\begin{flalign}
\nonumber
&\|\bD\bd_{\pi_{\bm {\theta}}}^{\lambda}\|_1
=\sum_{s\in\calS}\left|\sum_{s^{'}\in\calS}\bD(s^{'}|s)d_{\pi_{\bm {\theta}}}^{\lambda}(s)\right|
\overset{(\ref{lam-pro-value-02})}=\sum_{s\in\calS}\left|\sum_{s^{'}\in\calS}\left(
 \Pro_{\policyy}^{(\lambda)}(s^{'}|s)-\Pro_{\policy}^{(\lambda)}(s^{'}|s)\right)\right|d_{\pi_{\bm {\theta}}}^{\lambda}(s)\\
\nonumber
 \overset{(\ref{lam-pro-value-02})}=&\sum_{s\in\calS}\left|
 (1-\gamma\lambda)\sum_{{t}=0}^{\infty}(\gamma\lambda)^{{t}}\sum_{s^{'}\in\calS}\left(\Pro_{\policyy}(s_{t+1}=s^{'}|s)-\Pro_{\policy}(s_{t+1}=s^{'}|s)\right)\right|d_{\pi_{\bm {\theta}}}^{\lambda}(s),
\end{flalign}
which implies that to bound  $\|\bD\bd_{\pi_{\bm {\theta}}}^{\lambda}\|_1$, we need to bound the following difference
\[\sum_{{t}=0}^{\infty}(\gamma\lambda)^{{t}}\sum_{s^{'}\in\calS}\left(\Pro_{\policyy}(s_{t+1}=s^{'}|s)-\Pro_{\policy}(s_{t+1}=s^{'}|s)\right).\]

{
\color{blue}
\underline
{
{
\textbf{Step 1: Rewrite $\sum_{{t}=0}^{\infty}(\gamma\lambda)^{{t}}\sum_{s^{'}\in\calS}\left(\Pro_{\policyy}(s_{t+1}=s^{'}|s)-\Pro_{\policy}(s_{t+1}=s^{'}|s)\right).$}
}
}
}

Let $s_0=s$, then 
\begin{flalign}
\mathbb{P}_{{\policy}}(s_{t+1}=s^{'}|s)\overset{(\ref{pro-pi-t-step-app})}=&\sum_{s_1\in\mathcal{S}}\mathbb{P}_{{\policy}}(s_{t+1}=s^{'}|s_1)\mathbb{P}_{{\policy}}(s_1|s_0)\\
=&\sum_{s_1\in\mathcal{S}}\mathbb{P}_{{\policy}}(s_{t+1}=s^{'}|s_1)\left(\sum_{a\in\calA}\policy(a|s_0)\mathbb{P}(s_1|s_0,a)\right).
\end{flalign}
Similarly,
\begin{flalign}
\mathbb{P}_{{\policyy}}(s_{t+1}=s^{'}|s)\overset{(\ref{pro-pi-t-step-app})}=&\sum_{s_1\in\mathcal{S}}\mathbb{P}_{{\policyy}}(s_{t+1}=s^{'}|s_1)\mathbb{P}_{{\policyy}}(s_1|s_0)\\
=&\sum_{s_1\in\mathcal{S}}\mathbb{P}_{{\policyy}}(s_{t+1}=s^{'}|s_1)\left(\sum_{a\in\calA}\policyy(a|s_0)\mathbb{P}(s_1|s_0,a)\right).
\end{flalign}
Firstly, we consider the following term
\begin{flalign}
\nonumber
 \sum_{{t}=0}^{\infty}(\gamma\lambda)^{{t}}\sum_{s^{'}\in\calS}&\Pro_{\policy}(s_{t+1}=s^{'}|s)
 =\sum_{{t}=0}^{\infty}(\gamma\lambda)^{{t}}\sum_{s^{'}\in\calS}\sum_{s_1\in\mathcal{S}}\mathbb{P}_{{\policy}}(s_{t+1}=s^{'}|s_1)\left(\sum_{a\in\calA}\policy(a|s)\mathbb{P}(s_1|s,a)\right)\\
   \nonumber
 =&\sum_{{t}=0}^{\infty}(\gamma\lambda)^{{t}}\sum_{s^{'}\in\calS}\sum_{s_1\in\mathcal{S}}\bP^{t}_{{\policy}}[s_1,s^{'}]\left(\sum_{a\in\calA}\policy(a|s)\mathbb{P}(s_1|s,a)\right)\\
  \nonumber
 =&\sum_{s^{'}\in\calS}\sum_{s_1\in\mathcal{S}}\left(\sum_{{t}=0}^{\infty}(\gamma\lambda\bP_{{\policy}})^{t}\right)[s_1,s^{'}]\left(\sum_{a\in\calA}\policy(a|s)\mathbb{P}(s_1|s,a)\right)\\
 \label{temp-app-01}
 =&\sum_{s^{'}\in\calS}\sum_{s_1\in\mathcal{S}}\left(\bI-\gamma\lambda\bP_{{\policy}}\right)^{-1}[s_1,s^{'}]\left(\sum_{a\in\calA}\policy(a|s)\mathbb{P}(s_1|s,a)\right).
\end{flalign}
To short expression, we introduce a new notation as follows,
\begin{flalign}
\bF_{\policy}=\left(\bI-\gamma\lambda\bP_{{\policy}}\right)^{-1}.
\end{flalign}
Then, we rewrite (\ref{temp-app-01}) as follows,
\begin{flalign}
 \label{temp-app-07}
 \sum_{{t}=0}^{\infty}(\gamma\lambda)^{{t}}\sum_{s^{'}\in\calS}\Pro_{\policy}(s_{t+1}=s^{'}|s)
 =\sum_{s^{'}\in\calS}\sum_{s_1\in\mathcal{S}}\bF_{\policy}[s_1,s^{'}]\left(\sum_{a\in\calA}\policy(a|s)\mathbb{P}(s_1|s,a)\right).
\end{flalign}
Furthermore, according to (\ref{temp-app-07}), we obtain
\begin{flalign}
\nonumber
&\sum_{{t}=0}^{\infty}(\gamma\lambda)^{{t}}\sum_{s^{'}\in\calS}\left(\Pro_{\policyy}(s_{t+1}=s^{'}|s)-\Pro_{\policy}(s_{t+1}=s^{'}|s)\right)\\
\nonumber
=&\sum_{s^{'}\in\calS}\sum_{s_1\in\mathcal{S}}\bF_{\policyy}[s_1,s^{'}]\left(\sum_{a\in\calA}\policyy(a|s)\mathbb{P}(s_1|s,a)\right)
-
\sum_{s^{'}\in\calS}\sum_{s_1\in\mathcal{S}}\bF_{\policy}[s_1,s^{'}]\left(\sum_{a\in\calA}\policy(a|s)\mathbb{P}(s_1|s,a)\right)\\
\nonumber
=&\sum_{s^{'}\in\calS}\sum_{s_1\in\mathcal{S}}\bF_{\policyy}[s_1,s^{'}]\left(\sum_{a\in\calA}\policyy(a|s)\mathbb{P}(s_1|s,a)\right)
-
\sum_{s^{'}\in\calS}\sum_{s_1\in\mathcal{S}}\bF_{\policyy}[s_1,s^{'}]\left(\sum_{a\in\calA}\policy(a|s)\mathbb{P}(s_1|s,a)\right)\\
\nonumber
&~~~~+\sum_{s^{'}\in\calS}\sum_{s_1\in\mathcal{S}}\bF_{\policyy}[s_1,s^{'}]\left(\sum_{a\in\calA}\policy(a|s)\mathbb{P}(s_1|s,a)\right)
-
\sum_{s^{'}\in\calS}\sum_{s_1\in\mathcal{S}}\bF_{\policy}[s_1,s^{'}]\left(\sum_{a\in\calA}\policy(a|s)\mathbb{P}(s_1|s,a)\right)\\
\label{temp-app-05}
=&\sum_{s^{'}\in\calS}\sum_{s_1\in\mathcal{S}}\bF_{\policyy}[s_1,s^{'}]\left(\sum_{a\in\calA}\Big(\policyy(a|s)-\policy(a|s)\Big)\mathbb{P}(s_1|s,a)\right)\\
\label{temp-app-02}
&~~~~+\sum_{s^{'}\in\calS}\sum_{s_1\in\mathcal{S}}\Big(\bF_{\policyy}[s_1,s^{'}]-\bF_{\policy}[s_1,s^{'}]\Big)\left(\sum_{a\in\calA}\policy(a|s)\mathbb{P}(s_1|s,a)\right),
\end{flalign}
which implies that to bound the following difference
\[\sum_{{t}=0}^{\infty}(\gamma\lambda)^{{t}}\sum_{s^{'}\in\calS}\left(\Pro_{\policyy}(s_{t+1}=s^{'}|s)-\Pro_{\policy}(s_{t+1}=s^{'}|s)\right),\]
we need to bound (\ref{temp-app-05}) and (\ref{temp-app-02}).

{
\color{blue}
\underline
{
{
\textbf{Step 2: Bound the difference (\ref{temp-app-02}).}
}
}
}

Due to the simple fact: for any inverse matrices $\bA$ and $\bB$, then the following identity holds
\[
\bA^{-1}-\bB^{-1}=\bA^{-1}(\bB-\bA)\bB^{-1},
\]
we rewrite the difference $\bF_{\policyy}-\bF_{\policy}$ as follows,
\begin{flalign}
\nonumber
\bF_{\policyy}-\bF_{\policy}&=\left(\bI-\gamma\lambda\bP_{{\policyy}}\right)^{-1}-\left(\bI-\gamma\lambda\bP_{{\policy}}\right)^{-1}\\
\nonumber
&=\gamma\lambda\left(\bI-\gamma\lambda\bP_{{\policyy}}\right)^{-1}
\left(
\bP_{{\policyy}}-\bP_{{\policy}}
\right)
\left(\bI-\gamma\lambda\bP_{{\policy}}\right)^{-1}.
\end{flalign}
Then, we rewrite (\ref{temp-app-02}) as the following matrix version
\begin{flalign}
\nonumber
&\sum_{s^{'}\in\calS}\sum_{s_1\in\mathcal{S}}\Big(\bF_{\policyy}[s_1,s^{'}]-\bF_{\policy}[s_1,s^{'}]\Big)\left(\sum_{a\in\calA}\policy(a|s)\mathbb{P}(s_1|s,a)\right)\\
\nonumber
=&\sum_{s^{''}\in\calS}\sum_{s^{'}\in\mathcal{S}}\Big(\bF_{\policyy}[s^{'},s^{''}]-\bF_{\policy}[s^{'},s^{''}]\Big)\left(\sum_{a\in\calA}\policy(a|s)\mathbb{P}(s^{'}|s,a)\right)
=\left\|\left(\bF^{\top}_{\policyy}-\bF^{\top}_{\policy}\right)\bp_{\policy}(s)\right\|_{1},
\end{flalign}
where $\bp_{\policy}(s)\in\R^{|\calS|}$, and
\[
\bp_{\policy}(s)=\left(
\Pro_{\policy}(s_1|s),\Pro_{\policy}(s_2|s),\cdots,\Pro_{\policy}\left(s_{|\calS|} |s\right)
\right)^{\top}.
\]
According to (\ref{app-a-norm-1-infty}), we obtain 
\begin{flalign}
\nonumber
&\left\|\left(\bF^{\top}_{\policyy}-\bF^{\top}_{\policy}\right)\bp_{\policy}(s)\right\|_{1}=\left\|\bp^{\top}_{\policy}(s)\left(\bF_{\policyy}-\bF_{\policy}\right)\right\|_{\infty}\\
\nonumber
=&\gamma\lambda\left\|\bp^{\top}_{\policy}(s)\left(\bI-\gamma\lambda\bP_{{\policyy}}\right)^{-1}
\left(
\bP_{{\policyy}}-\bP_{{\policy}}
\right)
\left(\bI-\gamma\lambda\bP_{{\policy}}\right)^{-1}\right\|_{\infty}\\
\label{temp-app-08}
=&\dfrac{\gamma\lambda}{1-\gamma\lambda}\Bigg\|\underbrace{\bp^{\top}_{\policy}(s)(1-\gamma\lambda)\left(\bI-\gamma\lambda\bP_{{\policyy}}\right)^{-1}}_{\bbf^{\top}_{s}}
\left(
\bP_{{\policyy}}-\bP_{{\policy}}
\right)
\left(\bI-\gamma\lambda\bP_{{\policy}}\right)^{-1}\Bigg\|_{\infty}\\
\nonumber
\leq&\dfrac{\gamma\lambda}{1-\gamma\lambda}\left\|\bbf^{\top}_{s}
\left(
\bP_{{\policyy}}-\bP_{{\policy}}
\right)\right\|_{\infty}
\left\|\left(\bI-\gamma\lambda\bP_{{\policy}}\right)^{-1}\right\|_{\infty}\\
%\nonumber
%\leq&\gamma\lambda\left\|\bp^{\top}_{\policy}(s)\right\|_{\infty}\left\|\left(\bI-\gamma\lambda\bP_{{\policyy}}\right)^{-1}\right\|_{\infty}\left\|\bP_{{\policyy}}-\bP_{{\policy}}\right\|_{\infty}\left\|\left(\bI-\gamma\lambda\bP_{{\policy}}\right)^{-1}\right\|_{\infty}\\
\label{temp-app-03}
=&\dfrac{\gamma\lambda}{(1-\gamma\lambda)^2}\left\|\bbf^{\top}_{s}
\left(
\bP_{{\policyy}}-\bP_{{\policy}}
\right)\right\|_{\infty}\\
\label{temp-app-04}
=&\dfrac{2\gamma\lambda}{(1-\gamma\lambda)^2}\sum_{s\in\calS}D_{\mathrm{TV}}(\policyy,\policy)[s],
\end{flalign}
where in Eq.(\ref{temp-app-08}), we introduce a notation $\bbf^{\top}_{s}\in\R^{|\calS|}$ as follows,
\[
\bbf^{\top}_{s}=:\bp^{\top}_{\policy}(s)(1-\gamma\lambda)\left(\bI-\gamma\lambda\bP_{{\policyy}}\right)^{-1};
\]
Eq.(\ref{temp-app-03}) holds since:
\[\left\|\left(\bI-\gamma\lambda\bP_{{\policy}}\right)^{-1}\right\|_{\infty}=\dfrac{1}{1-\gamma\lambda};\]
Eq.(\ref{temp-app-04}) holds since:
 \begin{flalign}
 \nonumber
 \left\|\bbf^{\top}\left(\bP_{{\policyy}}-\bP_{{\policy}}\right)\right\|_{\infty}&=\sum_{s\in\calS}\sum_{s^{'}\in\calS}\bbf_{s}[s^{'}]\left|\Pro_{{\policyy}}(s^{'}|s)-\Pro_{{\policy}}(s^{'}|s)\right|\\
  \nonumber
&=\sum_{s\in\calS}\sum_{s^{'}\in\calS}\bbf_{s}[s^{'}]\left|\sum_{a\in\calA}\Pro(s^{'}|s,a)\left(\policyy(a|s)-\policy(a|s)\right)\right|\\
  \nonumber
&\leq\sum_{s\in\calS}\sum_{s^{'}\in\calS}\bbf_{s}[s^{'}]\sum_{a\in\calA}\left|\policyy(a|s)-\policy(a|s)\right|\\
\nonumber
&=\sum_{s\in\calS}\sum_{a\in\calA}\left|\policyy(a|s)-\policy(a|s)\right|,
 \end{flalign}
 where the last equation holds due to the following fact
 \[
 \sum_{s^{'}\in\calS}\bbf_{s}[s^{'}]= \sum_{s^{'}\in\calS}\bp^{\top}_{\policy}(s)(1-\gamma\lambda)\left(\bI-\gamma\lambda\bP_{{\policyy}}\right)^{-1}[s^{'}]=1.
 \]
 Thus, the difference (\ref{temp-app-02}) is bounded as follows,
 \begin{flalign}
 \nonumber
\sum_{s^{'}\in\calS}\sum_{s_1\in\mathcal{S}}\Big(\bF_{\policyy}[s_1,s^{'}]-\bF_{\policy}[s_1,s^{'}]\Big)\left(\sum_{a\in\calA}\policy(a|s)\mathbb{P}(s_1|s,a)\right)\leq\dfrac{2\gamma\lambda}{(1-\gamma\lambda)^2}\sum_{s\in\calS}D_{\mathrm{TV}}(\policyy,\policy)[s].
\end{flalign}

 {
\color{blue}
\underline
{
{
\textbf{Step 3: Bound the difference (\ref{temp-app-05}).}
}
}
}

 We turn to consider (\ref{temp-app-05}):
 \begin{flalign}
 \nonumber
&\sum_{s^{'}\in\calS}\sum_{s_1\in\mathcal{S}}\bF_{\policyy}[s_1,s^{'}]\left(\sum_{a\in\calA}\Big(\policyy(a|s)-\policy(a|s)\Big)\mathbb{P}(s_1|s,a)\right)\\
 \nonumber
 =&\sum_{s^{''}\in\calS}\sum_{s^{'}\in\mathcal{S}}\bF_{\policyy}[s^{'},s^{''}]\left(\sum_{a\in\calA}\Big(\policyy(a|s)-\policy(a|s)\Big)\mathbb{P}(s^{'}|s,a)\right)\\
 \nonumber
 =&\left\|\bF^{\top}_{\policyy}(\bp_{\policyy}(s)-\bp_{\policy}(s))\right\|_{1}\\
  \nonumber
 \leq&\left\|\bF^{\top}_{\policyy}\right\|_{1}\left\|\bp_{\policyy}(s)-\bp_{\policy}(s)\right\|_{1}\\
 \label{temp-app-06}
=&\left\|\bp_{\policyy}(s)-\bp_{\policy}(s)\right\|_{1}\leq\dfrac{2}{1-\gamma\lambda}D_{\mathrm{TV}}(\policyy,\policy)[s],
\end{flalign}
where the last Eq.(\ref{temp-app-06}) holds since:
\[\left\|\bF^{\top}_{\policyy}\right\|_{1}=\dfrac{1}{1-\gamma\lambda},\]
and
\begin{flalign}
\nonumber
\left\|\bp_{\policyy}(s)-\bp_{\policy}(s)\right\|_{1}&=\sum_{s^{'}\in\calS}\left|\Pro_{{\policyy}}(s^{'}|s)-\Pro_{{\policy}}(s^{'}|s)\right|\\
  \nonumber
 &=\sum_{s^{'}\in\calS}\left|\sum_{a\in\calA}\Pro(s^{'}|s,a)\left(\policyy(a|s)-\policy(a|s)\right)\right|\\
 \nonumber
 &\leq\sum_{a\in\calA}\sum_{s^{'}\in\calS}\Pro(s^{'}|s,a)\left|\policyy(a|s)-\policy(a|s)\right|\\
  \nonumber
 &=\sum_{a\in\calA}\left|\policyy(a|s)-\policy(a|s)\right|=2D_{\mathrm{TV}}(\policyy,\policy)[s].
\end{flalign}

{
\color{blue}
\underline
{
{
\textbf{Step 4: Put all the result together.}
}
}
}

Finally, according to (\ref{temp-app-05}), (\ref{temp-app-02}), (\ref{temp-app-04}), and (\ref{temp-app-06}), we obtain
\begin{flalign}
\nonumber
\|\bD\bd_{\pi_{\bm {\theta}}}^{\lambda}\|_1=&\sum_{s\in\calS}\left|
 (1-\gamma\lambda)\sum_{{t}=0}^{\infty}(\gamma\lambda)^{{t}}\sum_{s^{'}\in\calS}\left(\Pro_{\policyy}(s_{t+1}=s^{'}|s)-\Pro_{\policy}(s_{t+1}=s^{'}|s)\right)\right|d_{\pi_{\bm {\theta}}}^{\lambda}(s)\\
 \nonumber
 \leq&\sum_{s\in\calS}d_{\pi_{\bm {\theta}}}^{\lambda}(s)\left[
 \dfrac{2\gamma\lambda}{1-\gamma\lambda}\sum_{s\in\calS}D_{\mathrm{TV}}(\policyy,\policy)[s]+2D_{\mathrm{TV}}(\policyy,\policy)[s]
 \right]\\
  \nonumber
  =&\sum_{s\in\calS}d_{\pi_{\bm {\theta}}}^{\lambda}(s)\left[
 \dfrac{2\gamma\lambda|\calS|}{1-\gamma\lambda}D_{\mathrm{TV}}(\policyy,\policy)[s]+2D_{\mathrm{TV}}(\policyy,\policy)[s]
 \right]\\
 \nonumber
  =& \dfrac{\gamma\lambda(|\calS|-1)+1}{1-\gamma\lambda}\E_{s\sim d_{\pi_{\bm {\theta}}}^{\lambda}(\cdot)}\left[2D_{\mathrm{TV}}(\policyy,\policy)[s]\right].
\end{flalign}
This concludes the result of Lemma \ref{app-lem-tem-result}.
\end{proof}

\clearpage

\section{Proof of Theorem \ref{them-re-cost}}
\label{sec:app-them2}

Before we present the main result, we define some notations.
\begin{flalign}
\chi_k=&\E_{s\sim{d}_{{\pi_{\bm{\theta}_k}}}^{\lambda}(\cdot)}\left[\mathtt{KL}\left(\pi_{\bm{\theta}_k},\pi_{\bm{\theta}_{k+\frac{1}{2}}}\right)[s]\right],\\
\iota=&\dfrac{\tilde{\gamma}\left(\gamma\lambda(|\calS|-1)+1\right)}{(1-\tilde{\gamma})(1-\gamma\lambda)}.
\end{flalign}

\begin{proof}(of Theorem \ref{them-re-cost})

According to Bregman divergence, if policy $\pi_{\bm{\theta}_{k}}$ is feasible, policy $\pi_{\bm{\theta}_{k+1}}$ is generated according to (\ref{projection}), then
the following 
\[
\mathrm{KL}\left(\pi_{\bm{\theta}_{k}},\pi_{\bm{\theta}_{k+\frac{1}{2}}}\right)
\ge 
\mathrm{KL}\left(\pi_{\bm{\theta}_{k}},\pi_{\bm{\theta}_{k+1}}\right)
+
\mathrm{KL}\left(\pi_{\bm{\theta}_{k+1}},\pi_{\bm{\theta}_{k+\frac{1}{2}}}\right)
\]
implies
\[
\chi_k=\E_{s\sim{d}_{{\pi_{\bm{\theta}_k}}}^{\lambda}(\cdot)}\left[\mathrm{KL}\left(\pi_{\bm{\theta}_k},\pi_{\bm{\theta}_{k+\frac{1}{2}}}\right)[s]\right]
\ge
\E_{s\sim{d}_{{\pi_{\bm{\theta}_k}}}^{\lambda}(\cdot)}\left[\mathrm{KL}\left(\pi_{\bm{\theta}_{k+1}},\pi_{\bm{\theta}_{k}}\right)[s]\right].
\]
 According to the asymptotically symmetry of KL divergence if we update the policy within a local region, then, we have
 \[
 \chi_k\ge\E_{s\sim{d}_{{\pi_{\bm{\theta}_k}}}^{\lambda}(\cdot)}\left[\mathrm{KL}\left(\pi_{\bm{\theta}_{k+\frac{1}{2}}},\pi_{\bm{\theta}_{k}}\right)[s]\right]\ge
\E_{s\sim{d}_{{\pi_{\bm{\theta}_k}}}^{\lambda}(\cdot)}\left[\mathrm{KL}\left(\pi_{\bm{\theta}_{k+1}},\pi_{\bm{\theta}_{k}}\right)[s]\right].
 \]
Furthermore,  according to Proposition \ref{propo-01} and Proposition \ref{propo-03}, we have
\begin{flalign}
\nonumber
&J(\pi_{\bm{\theta}_{k+1}})-J(\pi_{\bm{\theta}_{k}})\\
\nonumber
\ge&
\dfrac{1}{1-\tilde\gamma}\E_{s\sim{d}_{\pi_{\bm{\theta}_{k}}}^{\lambda}(\cdot),a\sim\pi_{\bm{\theta}_{k+1}}(\cdot|s)}
\left[
A^{\mathtt{GAE}(\gamma,\lambda)}_{\pi_{\bm{\theta}_{k}}}(s,a)
-
\iota\epsilon^{V}_{\pi_{\bm{\theta}_{k+1}}}(\pi_{\bm{\theta}_k})D_{\text{TV}}(\pi_{\bm{\theta}_{k}},\pi_{\bm{\theta}_{k+1}})[s]
\right]\\
\nonumber
\ge&
\dfrac{1}{1-\tilde\gamma}\E_{s\sim{d}_{\pi_{\bm{\theta}_{k}}}^{\lambda}(\cdot),a\sim\pi_{\bm{\theta}_{k+1}}(\cdot|s)}
\left[
-
\iota\alpha_{k}\epsilon^{V}_{\pi_{\bm{\theta}_{k+1}}}(\pi_{\bm{\theta}_k})
\sqrt{\dfrac{1}{2}\mathrm{KL}(\pi_{\bm{\theta}_{k}},\pi_{\bm{\theta}_{k+1}})[s]}
\right]\\
\nonumber
\ge&-\dfrac{\iota}{1-\tilde\gamma}\alpha_{k}\sqrt{2\chi_{k}}\epsilon^{V}_{\pi_{\bm{\theta}_{k+1}}}(\pi_{\bm{\theta}_k}).
\end{flalign}

Similarly,  according to Proposition \ref{propo-01} and Proposition \ref{pro-02}, and since policy $\pi_{\bm{\theta}_{k+1}}$ satisfies
\begin{flalign}
\label{app-c-01}
J^{c}(\pi_{{\bm{\theta}}_k})+\dfrac{1}{1-\tilde\gamma}\E_{s\sim{d}_{\pi_{\bm{\theta}_k}}^{\lambda}(\cdot),a\sim\pi_{\bm{\theta}_{k+1}}(\cdot|s)}
\left[
A^{\mathtt{GAE}(\gamma,\lambda)}_{{\pi_{\bm{\theta}_k}},C}(s,a)\right]+\beta_k
\sqrt{
\E_{s\sim{d}_{{\pi_{\bm{\theta}_k}}}^{\lambda}(\cdot)}\left[\mathrm{KL}(\pi_{\bm{\theta}_k},\pi_{\bm{\theta}_{k+1}})[s]\right]}\leq b,
\end{flalign}
and 
\begin{flalign}
\label{app-c-02}
&J^{c}(\pi_{\bm{\theta}_{k+1}})-J^{c}(\pi_{\bm{\theta}_{k}})\\
\nonumber
\leq&
\frac{1}{1-\tilde\gamma}\E_{s\sim{d}_{\pi_{\bm{\theta}_{k}}}^{\lambda}(\cdot),a\sim\pi_{\bm{\theta}_{k+1}}(\cdot|s)}
\left[
A^{\mathtt{GAE}(\gamma,\lambda)}_{\pi_{\bm{\theta}_{k}},C}(s,a)
+
\iota\beta_k\epsilon^{C}_{\pi_{\bm{\theta}_{k+1}}}
D_{\text{TV}}(\pi_{\bm{\theta}_{k}},\pi_{\bm{\theta}_{k+1}})[s]
\right].
\end{flalign}
Combining (\ref{app-c-01})- (\ref{app-c-02}), we have
\begin{flalign}
\label{app-c-02}
&J^{c}(\pi_{\bm{\theta}_{k+1}})-J^{c}(\pi_{\bm{\theta}_{k}})\\
\nonumber
\leq&b+
\frac{1}{1-\tilde\gamma}\E_{s\sim{d}_{\pi_{\bm{\theta}_{k}}}^{\lambda}(\cdot),a\sim\pi_{\bm{\theta}_{k+1}}(\cdot|s)}
\left[
\iota\beta_k\epsilon^{C}_{\pi_{\bm{\theta}_{k+1}}}
\sqrt{\dfrac{1}{2}\E_{s\sim{d}_{\pi_{\bm{\theta}_{k}}}^{\lambda}(\cdot)}\left[\mathrm{KL}(\pi_{\bm{\theta}_{k}},\pi_{\bm{\theta}_{k+1}})[s]\right]}
\right]\\
\leq&b+
\frac{1}{1-\tilde\gamma}\E_{s\sim{d}_{\pi_{\bm{\theta}_{k}}}^{\lambda}(\cdot),a\sim\pi_{\bm{\theta}_{k+1}}(\cdot|s)}
\left[
\iota\beta_k\sqrt{2\chi_k}\epsilon^{C}_{\pi_{\bm{\theta}_{k+1}}}
\right].
\end{flalign}

\end{proof}

\clearpage

\section{Experiments}

\label{sec:app-ex}

The Python code for our implementation of CUP is provided along with this submission in the supplementary material.

All experiments were implemented in Pytorch 1.7.0 with CUDA 11.0 and conducted on an Ubuntu 20.04.2 LTS (GNU/Linux 5.8.0-59-generic x86 64) with 40 CPU cores (Intel(R)
Xeon(R) Silver 4210R CPU @ 2.40GHz), 251G memory and 4 GPU cards (GeForce RTX 3080).
The baseline algorithm FOCOPS based on the open-source \url{https://github.com/ymzhang01/focops}, which were offical code library. The other baseline algorithms include CPO, TRPO-L, PPO-L based on \url{https://github.com/openai/safety-starter-agents}, which published by openai.

\subsection{Algorithm Parameters}
\begin{table}[h]
    \centering
    \begin{adjustbox}{width={\textwidth}}
    \begin{tabular}{l l l l l l}
    \toprule
       Hyperparameter & CUP & PPO-L & TRPO-L & CPO & FOCOPS \\
        \midrule
    No. of hidden layers & 2 & 2 & 2 & 2 & 2 \\
      No. of hidden nodes  & 64 & 64 & 64 & 64 & 64 \\
      Activation & $\tanh$ & $\tanh$ & $\tanh$ & $\tanh$ & $\tanh$\\
      Initial log std & -0.5 & -0.5 & -1 & -0.5 & -0.5\\
      Discount for reward $\gamma$ & 0.99 & 0.99 & 0.99 & 0.99 & 0.99 \\
      Discount for cost $\gamma_{C}$ & 0.99 & 0.99 & 0.99 & 0.99 & 0.99 \\
      Batch size & 5000 & 5000 & 5000 & 5000 & 5000 \\
      Minibatch size & 64 & 64 & N/A & N/A & 64 \\
      No. of optimization epochs & 10 & 10 & N/A & N/A & 10 \\
      Maximum episode length & 1000 & 1000 & 1000 & 1000 & 1000 \\
      GAE parameter (reward) & 0.95 & 0.95 & 0.95 & 0.95 & 0.95 \\ 
      GAE parameter (cost) & 0.95 & 0.95 & 0.95 & 0.95 & 0.95 \\
      Learning rate for policy & $3\times 10^{-4}$ & $3\times 10^{-4}$ & N/A & N/A & $3\times 10^{-4}$\\
      Learning rate for reward value net & $3\times 10^{-4}$ & $3\times 10^{-4}$ & $3\times 10^{-4}$ & $3\times 10^{-4}$ & $3\times 10^{-4}$ \\
      Learning rate for cost value net & $3\times 10^{-4}$ & $3\times 10^{-4}$ & $3\times 10^{-4}$ & $3\times 10^{-4}$ & $3\times 10^{-4}$ \\
      Learning rate for $\nu$ & 0.01 & 0.01 & 0.01 & N/A & 0.01 \\
      $L2$-regularization coeff. for value net & $10^{-3}$ & $3\times 10^{-3}$ & $3\times 10^{-3}$ & $3\times 10^{-3}$ & $10^{-3}$ \\
      Clipping coefficient & N/A & 0.2 & N/A & N/A & N/A \\
      Damping coeff. & N/A & N/A & 0.01 & 0.01 & N/A \\
       Backtracking coeff. & N/A & N/A & 0.8 & 0.8 & N/A \\
       Max backtracking iterations & N/A & N/A & 10 & 10 & N/A \\
       Max conjugate gradient iterations & N/A & N/A & 10 & 10 & N/A\\
       Iterations for training value net  & 1 & 1 & 80 & 80 & 1 \\
     Temperature $\lambda$ & 1.5 & N/A & N/A & N/A & 1.5 \\
      Trust region bound $\delta$ & 0.02 & N/A & 0.01 & 0.01 & 0.02 \\
      Initial $\nu$, $\nu_{\max}$ & 0, 2 & 0, 1 & 0, 2 & N/A & 0, 2 \\
       \bottomrule
    \end{tabular}
    \end{adjustbox}
    \caption{Hyper-parameters for robots.}
    \label{tab:hyperparameters}
\end{table}

\subsection{Environment}

\label{sec:app-ex-env}

\subsubsection{Environment 1: Robots with Speed Limit.}

We consider two tasks from MuJoCo \citep{brockman2016openai}: Walker2d-v3 and Hopper-v3, where the setting of cost follows \citep{zhang2020first}.
For agents move on a two-dimensional plane, the cost is calculated as follows,
\[
C(s,a)=\sqrt{v^2_x+v^2_y};
\]
for agents move along a straight line, the cost is calculated as
\[
C(s,a)=|v_x|,
\]
where $v_x$, $v_y$ are the velocities of the agent in the $x$ and $y$ directions respectively.
%\cite{zhang2020first} have provided the implementation of this environment: \url{https://github.com/ymzhang01/focops}.

\subsubsection{Environment 2: Circle.}

The Circle Environment follows \citep{AchiamHTA17}, and we use open-source implementation of the circle environments from \url{https://github.com/ymzhang01/mujoco-circle}.
According to \cite{zhang2020first}, those experiments were implemented in OpenAI Gym \citep{brockman2016openai} while the circle tasks in \cite{AchiamHTA17} were implemented in rllab \citep{duan2016benchmarking}. 
We also excluded the Point agent from the original experiments since it is not a valid agent in OpenAI Gym. The first two dimensions in the state space are the $(x,y)$ coordinates of the center mass of the agent, hence the state space for both agents has two extra dimensions compared to the standard Ant-v0 and Humanoid-v0 environments from OpenAI Gym.

Now, we present some necessary details of this environment taken from \citep{zhang2020first}.
\begin{figure}[h]
    \centering
    \includegraphics[scale=0.5]{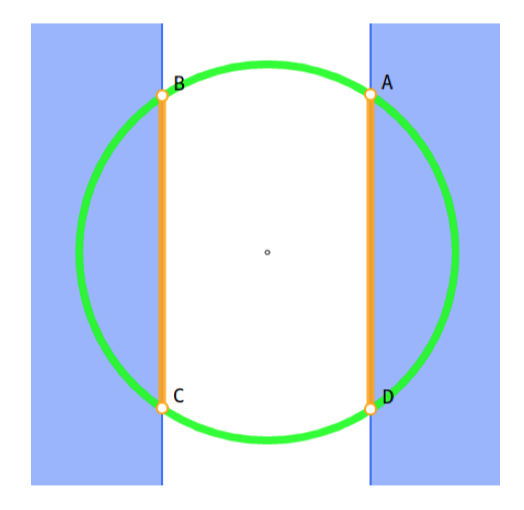}
    \caption{In the Circle task, reward is maximized by moving along the green circle. The agent is not allowed to enter the blue regions, so its optimal constrained path follows the line segments $AD$ and $BC$ (figure and caption taken from \citep{AchiamHTA17,zhang2020first}).}
    \label{fig:circle_task_geometry}
\end{figure}

In the circle tasks, the goal is for an agent to move along the circumference of a circle while remaining within a safety region smaller than the radius of the circle. The exact geometry of the task is shown in Figure \ref{fig:circle_task_geometry}.
The reward and cost functions are defined as:
\begin{flalign}
\nonumber
R(s)=\dfrac{-yv_x+xv_y}{1+|\sqrt{x^2+y^2}-r|},~~
C(s)=\mathbb{I}(|x|>x_{\lim}),
\end{flalign}
where $x,y$ are the positions of the agent on the plane, $v_x,v_y$ are the velocities of the agent along the $x$ and $y$ directions, $r$ is the radius of the circle, and $x_{\lim}$ specifies the range of the safety region. The radius is set to $r=10$ for both Ant and Humanoid while $x_{\lim}$ is set to 3 and 2.5 for Ant and Humanoid respectively. Note that these settings are identical to those of the circle task in \cite{AchiamHTA17,zhang2020first}.

\subsection{Safety Gym}

In Safety Gym environments, the agent perceives the world through a robot’s sensors and interacts with the world through its actuators \citep{Ray2019}.
In this section, we consider two robots: Point and Car, where the presentation of those safety environments are taken from \citep{Ray2019}, for more details, please refer to \citep[Page 8--10]{Ray2019}. 
In this section, we experiment with the Safety Gym environment-builder two tasks: Goal, Button.

\begin{figure}[t!]
 \centering
\subfigure[Point]
{\includegraphics[width=6.7cm,height=3.35cm]{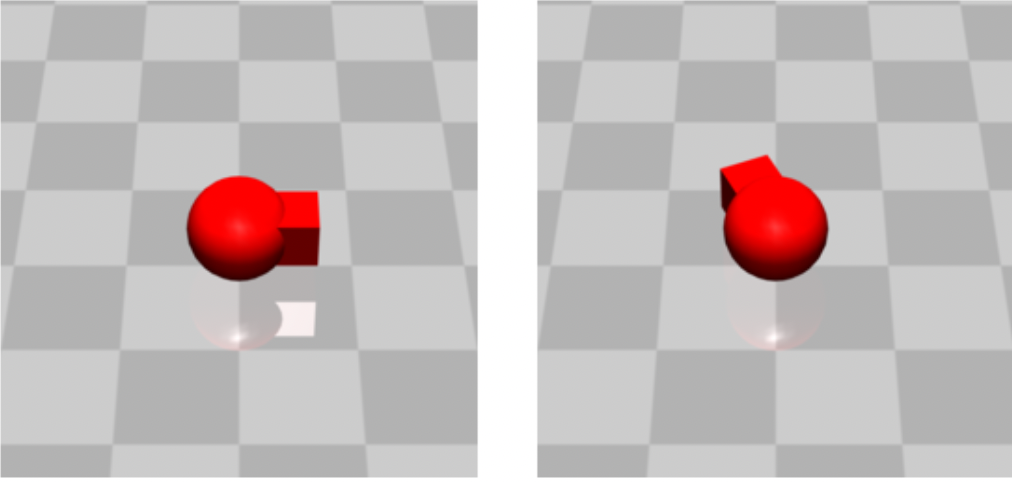}}
\subfigure[Car]
{\includegraphics[width=6.7cm,height=3.35cm]{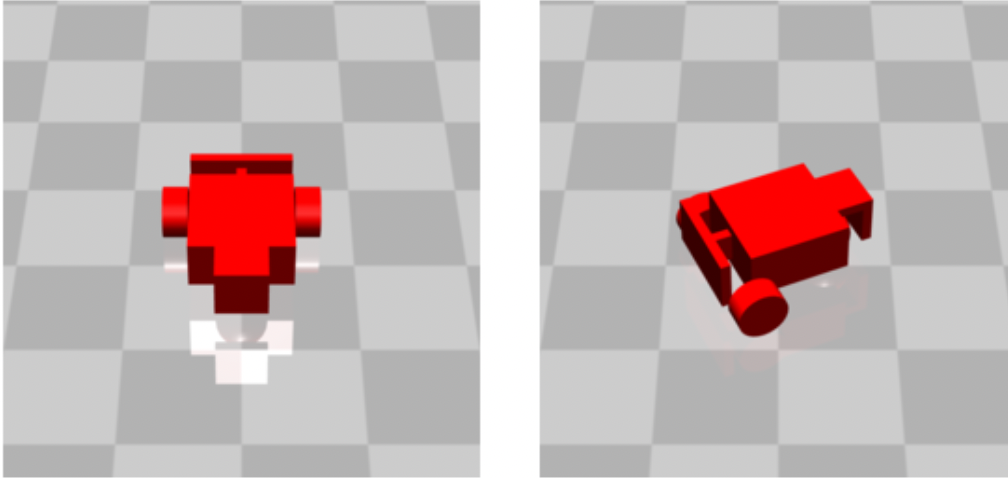}}
%\subcaptionbox{Car}
%{\includegraphics[width=2.2cm,height=2.2cm]{final_fig/car}}
%\subcaptionbox{Button}
%{\includegraphics[width=2.2cm,height=2.2cm]{final_fig/button-car}}
%\subcaptionbox{Doggo}
%{\includegraphics[width=2.2cm,height=2.2cm]{final_fig/dog}}
%\subcaptionbox{Push}
%{\includegraphics[width=2.2cm,height=2.2cm]{final_fig/push-doog}}
 \caption{
Fig (a): a 2D robot that can turn and move;
Fig (b): a wheeled robot with a differential drive control, in “Button”, the objective is to press the highlighted button (visually indicated with a faint gray cylinder), where figures and caption taken from Safety Gym \citep{Ray2019}.
 }
\label{fig:env-point-goal}
\end{figure}
\begin{figure}[t!]
 \centering
\subfigure[Goal]
{\includegraphics[width=6.7cm,height=3.35cm]{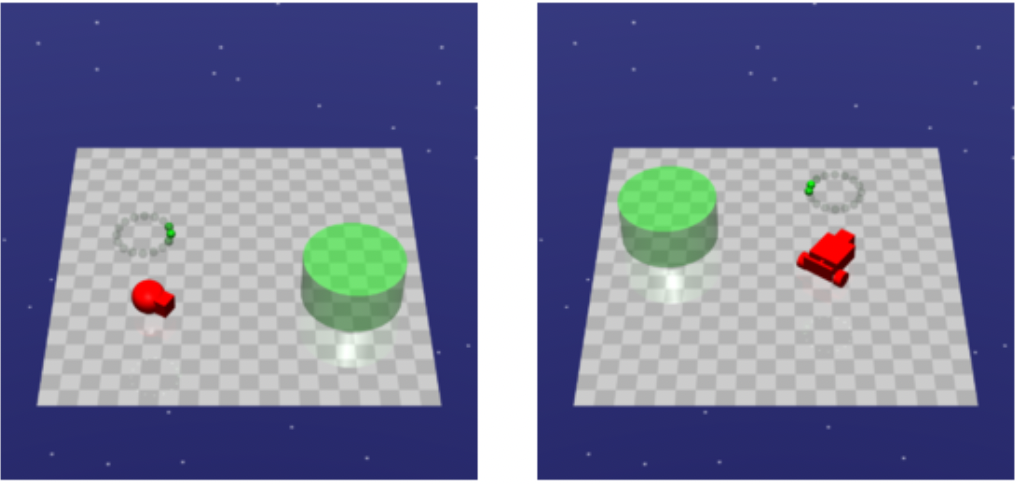}}
\subfigure[Button]
{\includegraphics[width=6.7cm,height=3.35cm]{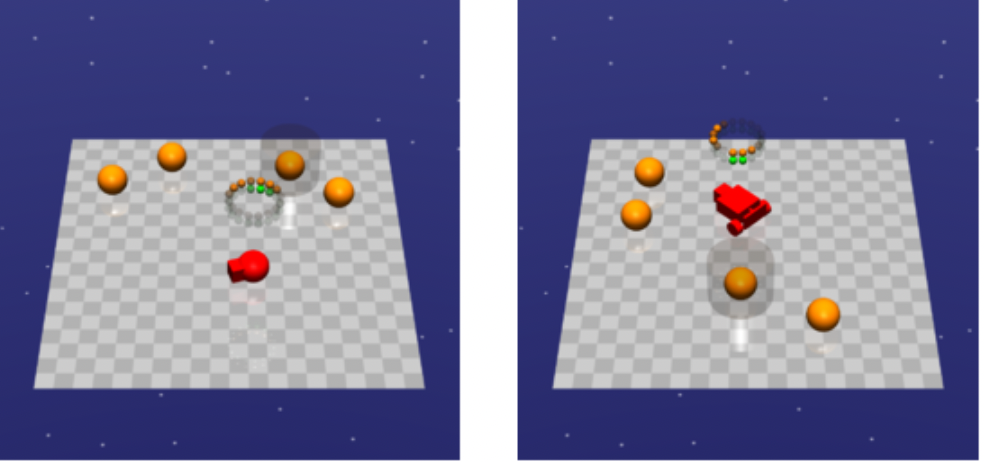}}
%\subcaptionbox{Car}
%{\includegraphics[width=2.2cm,height=2.2cm]{final_fig/car}}
%\subcaptionbox{Button}
%{\includegraphics[width=2.2cm,height=2.2cm]{final_fig/button-car}}
%\subcaptionbox{Doggo}
%{\includegraphics[width=2.2cm,height=2.2cm]{final_fig/dog}}
%\subcaptionbox{Push}
%{\includegraphics[width=2.2cm,height=2.2cm]{final_fig/push-doog}}
 \caption{
Fig (a): In “Goal,” the objective is to move the robot inside the green goal area;
Fig (b): In “Button”, the objective is to press the highlighted button (visually indicated with a faint gray cylinder), where figures and caption are taken from Safety Gym \citep{Ray2019}.
 }
\label{fig:env-car-button}
\end{figure}

\subsubsection{Safety Gym Robots}

 We consider two robots: Point and Car. All actions for all robots are continuous and linearly scaled to $[-1, +1]$, which is typical for 3D robot-based RL environments and (anecdotally) improves learning with neural nets. Modulo scaling, the action parameterization is based on a mix of hand-tuning and MuJoCo actuator defaults, and we caution that it is not clear if these choices are optimal. Some safe exploration techniques are action-layer interventions, like projecting to the closest predicted safe action \citep{dalal2018safe}, and these methods can be sensitive to action parameterization. As a result, action parameterization may merit more careful consideration than is usually given. Future work on action space design might be to find action parameterizations that respect physical measures we care about—for example, an action space where a fixed distance corresponds to a fixed amount of energy.

\textbf{Point}: A robot constrained to the 2D plane, with one actuator for turning and another for moving forward/backward. This factored control scheme makes the robot particularly easy to control for navigation. Point has a small square in front that makes it easier to visually determine the robot’s direction and helps the point push a box element that appears in one of our tasks.

\textbf{Car}: The car is a slightly more complex robot that has two independently-driven parallel wheels and a free-rolling rear wheel. The car is not fixed to the 2D plane but mostly resides in it. For this robot, both are turning and moving forward/backward require coordinating both of the actuators. It is similar in design to simple robots used in education.

\subsubsection{ Tasks}

Tasks in Safety Gym are mutually exclusive, and an individual environment can only use a single task. Reward functions are configurable, allowing rewards to be either sparse (rewards only obtained on task completion) or dense (rewards have helpful, hand-crafted shaping terms). Task details are shown as follows.

\textbf{Goal}: Move the robot to a series of goal positions. When a goal is achieved, the goal location is randomly reset to someplace new, while keeping the rest of the layout the same. The sparse reward component is attained on achieving a goal position (robot enters the goal circle). The dense reward component gives a bonus for moving towards the goal (shown in Figure \ref{fig:env-point-goal}).

\textbf{Button}: Press a series of goal buttons. Several immobile “buttons” are scattered throughout the environment, and the agent should navigate to and press (contact) the currently-highlighted button, which is the goal button. After the agent presses the correct button, the environment will select and highlight a new goal button, keeping everything else fixed. The sparse reward component is attained on pressing the current goal button. The dense reward component gives a bonus for moving towards the current goal button. We show a visualization in Figure \ref{fig:env-car-button}).

\begin{table}[t]
 \centering
  
 % \begin{footnotesize}
 \vskip 0.1in
\begin{adjustbox}{width={\textwidth}}
\begin{tabular}{ccccccccc}
		\hline
		Environment & & CPO & TRPO-L & PPO-L & FOCOPS & CUP  \\ \hline
		Safexp-PointGoal1-v0 & Return & $21.29\pm3.49$ & $19.23\pm1.45$ & $16.17\pm5.89$ & $12.46\pm1.49$  & $\boldsymbol{23.74\pm0.12}$ \\ 
		Cost limit (25.0) & Constraint & $39.00\pm5.19$ & $28.20\pm5.21$ & $21.82\pm6.31$ & $34.67\pm2.62$  & $24.74\pm0.91$ \\ \hline
		Safexp-PointButton1-v0 & Return & $17.69\pm1.22$ & $5.39\pm1.02$ & $4.74\pm2.73$ & $8.36\pm0.34$ &  $\boldsymbol{19.52\pm1.38}$ \\ 
		Cost limit (25.0) & Constraint & $69.61\pm8.29$ & $25.15\pm4.88$ & $30.37\pm7.58$ & $18.56\pm1.31$  & $26.67\pm1.84$ \\ \hline
		Safexp-CarGoal1-v0 & Return & $\boldsymbol{33.00\pm0.00}$ & $17.78\pm2.34$ & $19.93\pm1.13$ & $17.73\pm3.50$  & $27.41\pm1.80$ \\ 
		Cost limit 25.0) & Constraint & $30.50\pm1.44$ & $23.00\pm4.11$ & $29.64\pm4.79$ & $25.50\pm1.43$  & $30.81\pm1.60$ \\ \hline
		Safexp-CarButton1-v0 & Return & $5.80\pm1.06$ & $0.48\pm0.15$ & $0.41\pm0.13$ & $9.47\pm1.67$ & $\boldsymbol{12.12\pm1.91}$ \\ 
		Cost limit (25.0) & Constraint & $93.88\pm13.90$ & $23.17\pm9.76$ & $16.23\pm15.55$ & $19.60\pm1.52$ & $29.41\pm0.40$ \\ \hline
	\end{tabular}
 \end{adjustbox}
%  \caption{Bootstrap mean with 100 bootstrap samples of reward/cost return after training on the robot with speed limit environments. Cost thresholds are in brackets under the environment names.}
  \caption{Average results for CPO, PPO-L, TRPO-L, FOCOPS, CUP over 10 seeds after 500 iterations on Safety-Gym. The agent interacts with the environment 5000 times per iteration. Constraint limits are in brackets under the environment names.}
 \label{tab:safery-gym-com}
 % \end{footnotesize}
\end{table}

\begin{figure}[t]
 \centering
 \subfigure{
 \includegraphics[width=6.7cm,height=2.9cm]{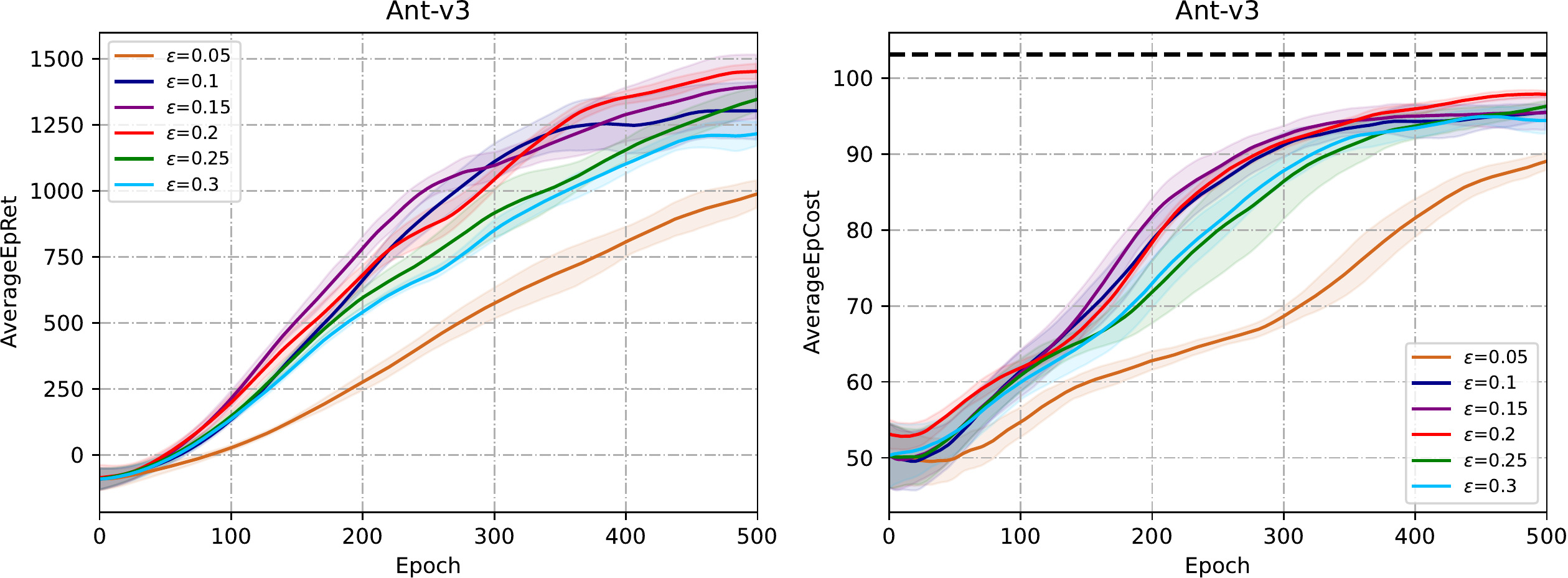}
 }
 
  \subfigure{
 \includegraphics[width=6.7cm,height=2.9cm]{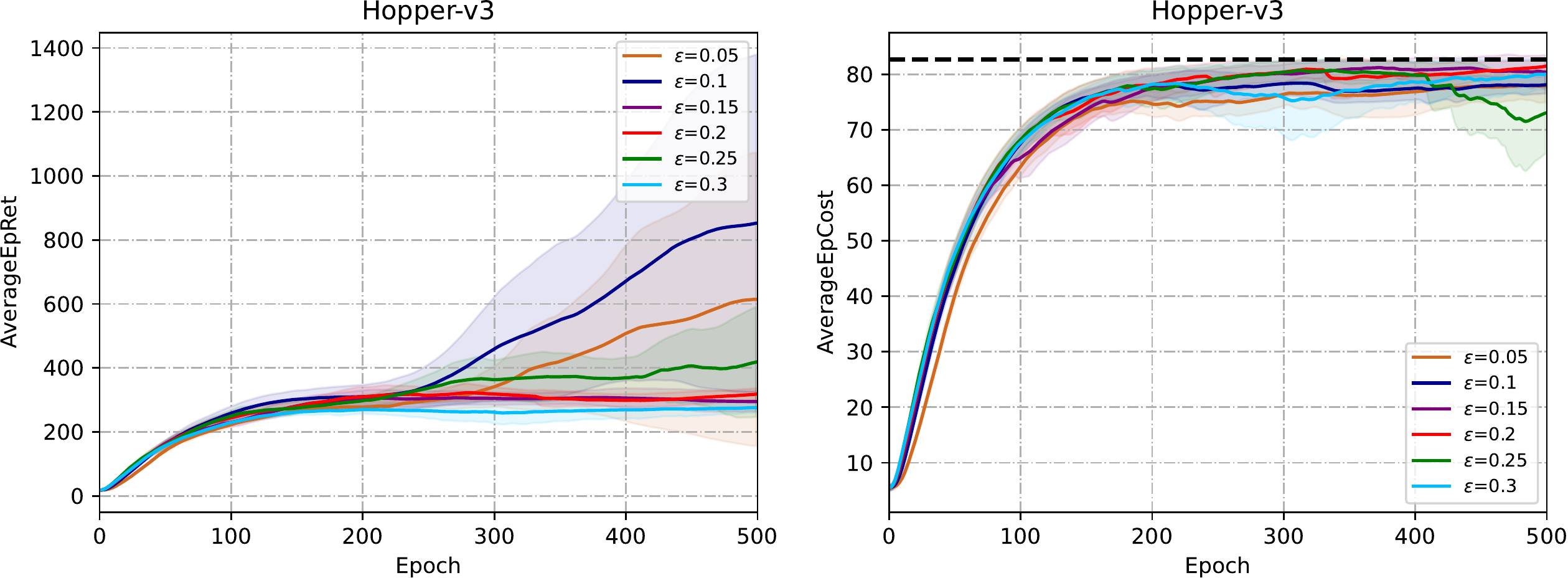}
 }
   \subfigure{
 \includegraphics[width=6.7cm,height=2.9cm]{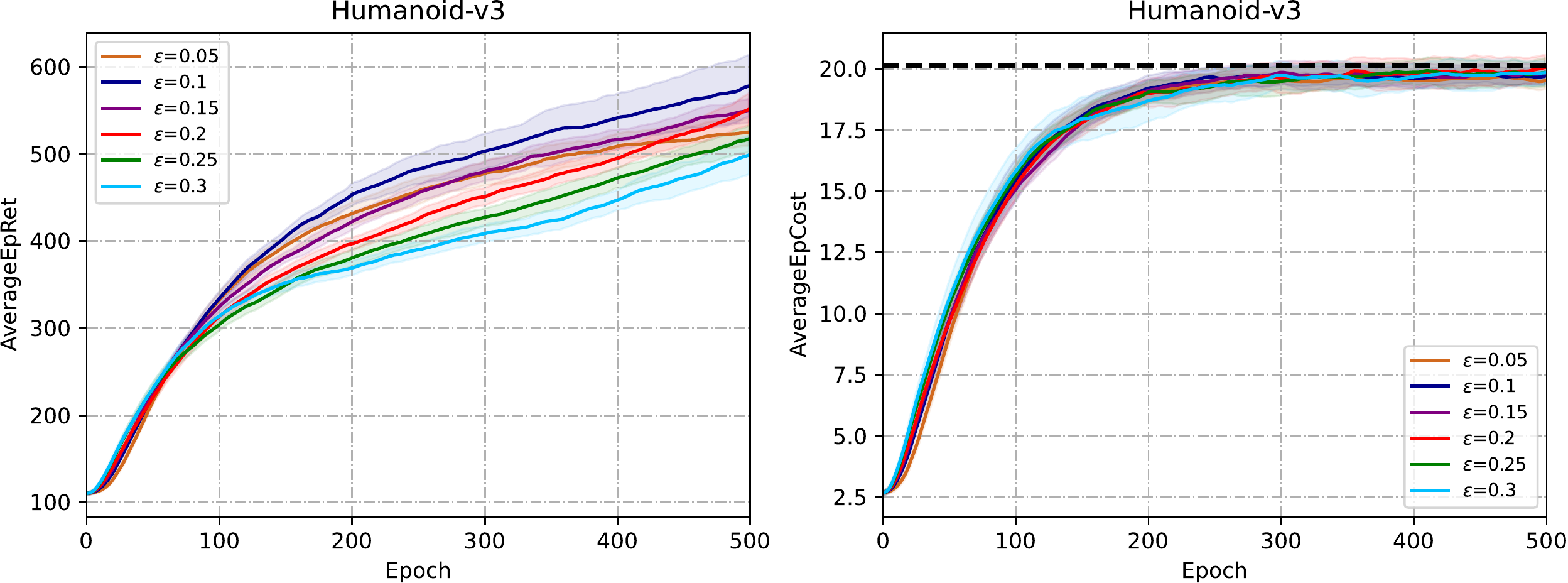}
 }
    \subfigure{
 \includegraphics[width=6.7cm,height=2.9cm]{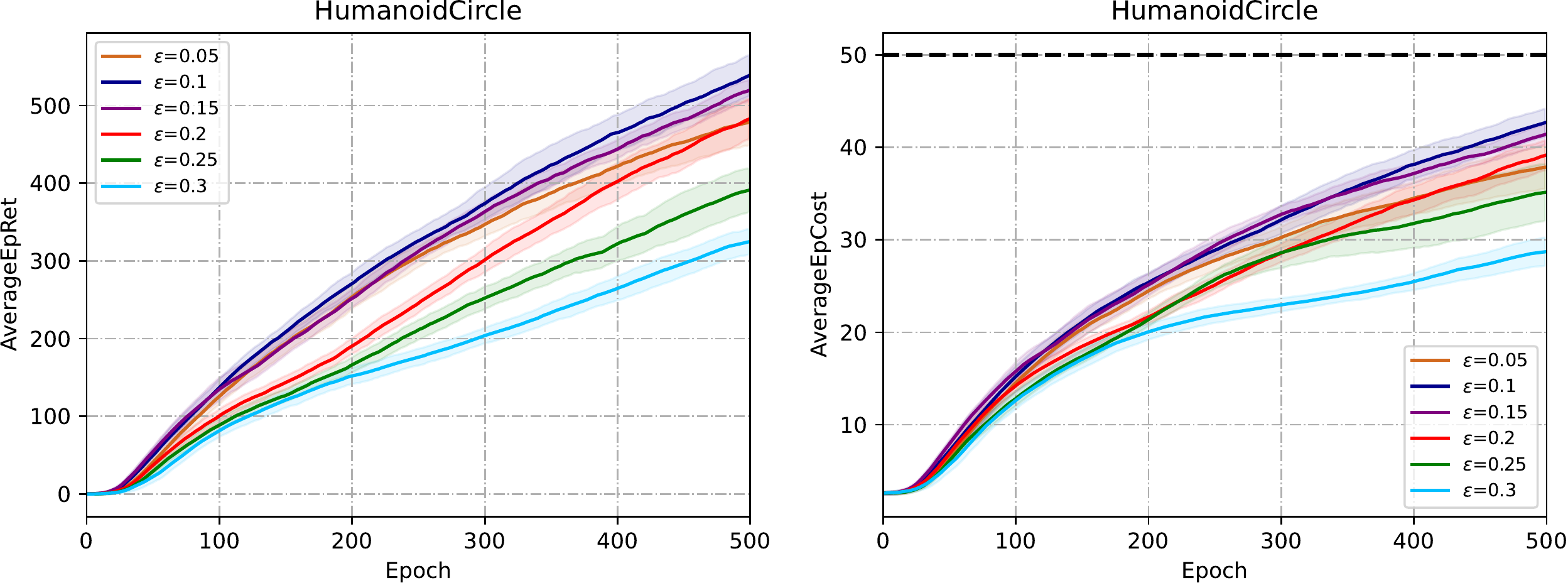}
 }
    \subfigure{
 \includegraphics[width=6.7cm,height=2.9cm]{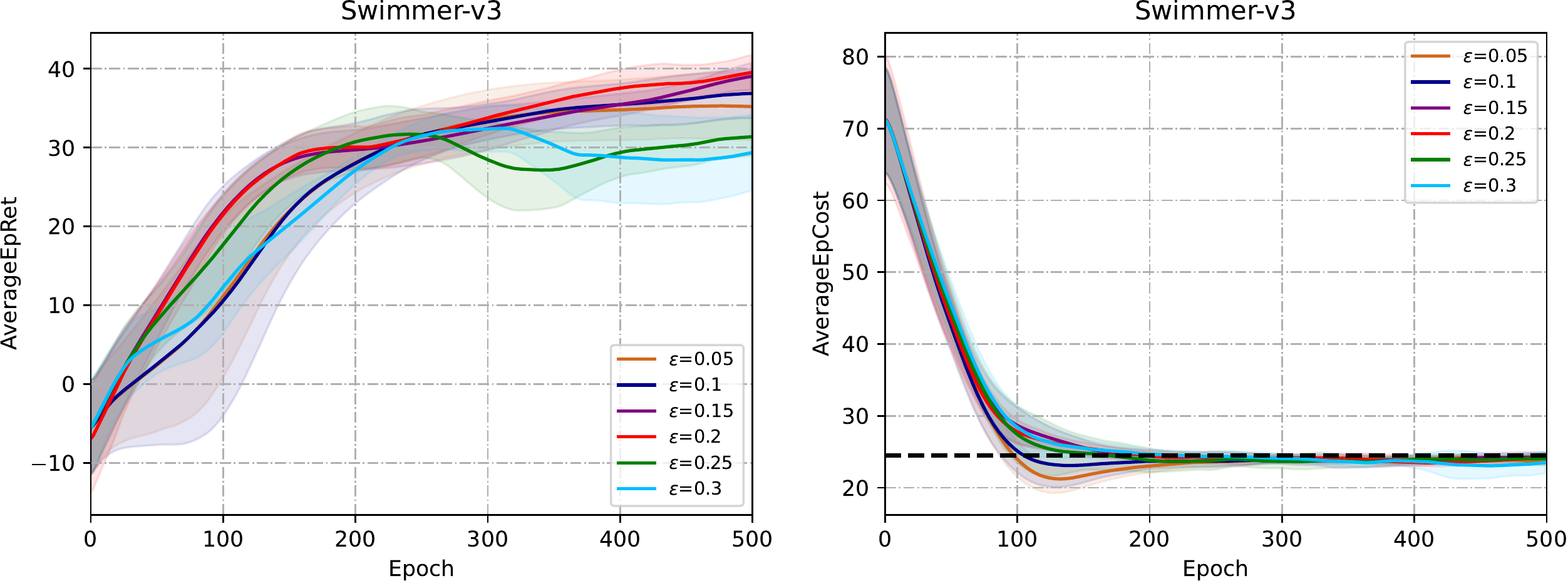}
 }
  \caption{Performance with respect to penalty factor $\epsilon$ appears in Algorithm \ref{alg-app-cpu}.}
 \label{app-epsicom}
\end{figure}

\subsection{Discussions}

Results of Figure \ref{app-epsicom} show that the performance of CUP is still very stable for different settings of $\epsilon$.
Additionally, the constraint value of CUP also still fluctuates around the target value. The different value achieved by CUP in different setting $\epsilon$ is affected by the simulated environment and constraint thresholds, which are easy to control

The results of Table \ref{tab:safery-gym-com} show that the proposed CUP significantly outperforms all the baseline algorithms except on the Safexp-CarGoal1-v0 task.
Notably, on the Safexp-PointButton1-v0 task, CUP achieve $21.27\pm 1.42$ within the safety region, while the best baseline algorithm is CPO that only obtains a reward of $17.69\pm1.22$ but it violates the cost limit $25$ more than a value of 44. 
This result is consistent with the result of Figure \ref{fig:safety-gym-comparison}.
Besides, from Table \ref{tab:safery-gym-com}, we know although CPO achieves a reward of $33\pm00$ significantly outperforms the proposed CUP in Safexp-CarGoal1-v0, CPO needs a cost $30.50\pm1.44$ higher than CUP.

\end{document}